\newtheorem{theorem}{Theorem}
\newtheorem{lemma}{Lemma}
\newcommand{\Rb}{\mathbb{R}}
\newcommand{\Nb}{\mathbb{N}}
\newcommand{\Eb}{\mathbb{E}}
\newcommand{\Pb}{\mathbb{P}}
\newcommand{\lb}{\mathds{1}}
\newcommand{\Ec}{\mathcal{E}}
\newcommand{\Gc}{\mathcal{G}}
\newcommand{\Hc}{\mathcal{H}}
\newcommand{\Af}{\mathfrak{A}}
\newcommand{\Bf}{\mathfrak{B}}
\newcommand{\Cf}{\mathfrak{C}}
\DeclareMathOperator*{\argmax}{arg\,max}
\DeclareMathOperator*{\argmin}{arg\,min}
\newcommand{\congc}[1]{{\color{teal}(Cong: #1)}}
\newcommand{\congc}[1]{}
\newcommand{\hf}[1]{{\color{cyan}[Haifeng: #1]}}
\newcommand{\hf}[1]{}
\newcommand{\shir}[1]{{\color{blue}#1}}
\newcommand{\shir}[1]{#1}
\setlist[itemize]{noitemsep, topsep=0pt}
\title{(Almost) Free Incentivized Exploration from Decentralized Learning Agents}
\author{%
    Chengshuai~Shi \\
    University of Virginia\\
    \texttt{cs7ync@virginia.edu} \\
    \And
    Haifeng Xu \\
    University of Virginia \\
    \texttt{hx4ad@virginia.edu} \\
    \And
    Wei Xiong \\
    The Hong Kong University of Science and Technology\\
    \texttt{wxiongae@connect.ust.hk} \\
    \And
    Cong Shen \\
    University of Virginia \\
    \texttt{cong@virginia.edu} \\
}
\begin{document}

\maketitle

\begin{abstract}
Incentivized exploration in multi-armed bandits (MAB) has witnessed increasing interests and many progresses in recent years, where a principal offers bonuses to agents to do explorations on her behalf. However, almost all existing studies are confined to temporary myopic agents. In this work, we break this barrier and study incentivized exploration with multiple and long-term strategic agents, who have more complicated behaviors that often appear in real-world applications. An important observation of this work is that strategic agents' intrinsic needs of learning benefit (instead of harming) the principal's explorations by providing ``free pulls''. Moreover, it turns out that increasing the population of agents significantly lowers the principal's burden of incentivizing. The key and somewhat surprising insight revealed from our results is that when there are sufficiently many learning agents involved, the exploration process of the principal can be (almost) free. Our main results are built upon three novel components which may be of independent interest: (1) a simple yet provably effective incentive-provision strategy; (2) a carefully crafted best arm identification algorithm for rewards aggregated under unequal confidences; (3) a high-probability finite-time lower bound of UCB algorithms. Experimental results are provided to complement the theoretical analysis.

\end{abstract}

\section{Introduction}
Multi-armed bandits (MAB) is a simple yet powerful model for sequential decision making with an exploration-exploitation tradeoff \citep{bubeck2012regret,lattimore2020bandit}.
In standard MAB settings, one principal, who has a long-term system-level objective, takes charge of selecting and playing arms. However, such assumption does not always hold in reality. It is often the case that arm pulls are performed by multiple different agents whose individual goals are not aligned with the system, and the principal can only observe agents' actions. One typical example is the individual buyers (agents) and the online shopping platform (the principal). Such scenarios complicate decision making and introduce significant difficulties to optimize the system performance.

Incentivized exploration has been proposed to address this problem \citep{frazier2014incentivizing,mansour2015bayesian}. Specifically, bonuses can be offered by the principal to incentivize agents to perform specific actions, e.g., to explore their originally underrated arms. {This} framework provides an opportunity to reconcile different interests between the principal and agents. As a concrete example, the online shopping platform can offer discounts on certain items so that individual buyers would buy them and provide feedbacks, which can be used to optimize future strategies of the platform.

While incentivized exploration has been investigated in {the existing literature}, we recognize two major limitations. First, almost all of the existing works assume the participating agents to be \emph{myopic}, i.e., they always choose the empirically best arm. Second, it is always assumed that at each time slot, \emph{one new agent} participates in the system, i.e., agents never stay or return. In other words, prior research mainly considers \emph{how to incentivize one single temporary myopic agent}. These two assumptions largely limit the applicability of incentivized exploration. 

In this work, we extend the study of incentivized exploration beyond the aforementioned barriers, and investigate situations with \emph{multiple long-term strategic agents}. In particular, we focus on the scenarios (see Section~\ref{subsec:motivation}) where the principal wants to identify the (overall)  best arm whereas the   heterogeneously involved agents only care about their different individual cumulative rewards. For such scenarios, the ``Observe-then-Incentivize'' (OTI) 
mechanism is proposed and several interesting observations are obtained. First, we find that strategic agents' intrinsic needs of learning can actually benefit principal's exploration by providing ``free pulls''. In other words, as opposed to myopic agents, the self interests of strategic agents can be \emph{exploited} by the principal. Second, it turns out that increasing the number of participating agents can significantly mitigate the principal's burden on incentivizing, which highlights the importance of increasing the population of agents. A crux of our findings is the following intriguing conceptual message: when there are sufficiently many learning agents involved, the exploration process of the principal could be (almost) free.

Behind these findings, three novel technical components play critical roles in the design and analysis of OTI, all of which may have independent values. 
\begin{itemize}[leftmargin=*]
    \item First, a simple yet provably effective incentive-provision strategy is developed, which can efficiently regulate strategic agents' behaviors and serves as the foundation of the algorithm analysis. 
    \item Second, a best arm identification algorithm is carefully crafted to tackle the varying amounts of local information from heterogeneous agents. This setting itself is novel in best arm identification.
    \item A high-probability finite-time lower bound of UCB algorithms \citep{auer2002finite} is proved, which contributes to a better understanding of the celebrated UCB.
\end{itemize}
These insights and techniques are unique in incentivizing multiple long-term strategic agents, which may find applications in related problems, and encourage future research in this direction.

\section{Related Works}
\textbf{Incentivized exploration.}
Since proposed by \citet{frazier2014incentivizing,kremer2014implementing}, many progresses have been made in incentivized exploration in MAB. Especially, there exist two lines of studies. The first one \citep{kremer2014implementing,mansour2015bayesian,mansour2016bayesian,immorlica2020incentivizing,sellke2020sample} assumes the principal can observe the full history while the agents cannot, and the principal leverage such information advantage to perform incentivizing.
The second line, which our setting follows, considers a publicly available history while the incentives are done through compensations. This idea is first introduced by \citet{frazier2014incentivizing} and generalized by \citet{han2015incentivizing}, both on Bayesian settings. The non-Bayesian case, as adopted in this work, is first studied by \citet{wang2018multi}, and recently extended by \citet{liu2020incentivized,wang2021incentivizing}.

However, the aforementioned works mainly consider that one new myopic agent enters the system at each time slot and leaves afterward. The only exception is \citet{mansour2016bayesian}, where multiple but still temporary and myopic agents are considered. This work differs from them in considering \emph{multiple long-term strategic agents}. In addition, another notable difference is that almost all prior works focus on regret minimization for the principal, instead of best arm identification.

One important related work is \citet{chen2018incentivizing}, which studies temporary myopic agents with heterogeneous preferences. Free explorations are also observed there because heterogeneous preferences result in agents exploiting all arms. However, the ``free pulls'' of OTI is provided by strategic agents' intrinsic needs for explorations, which is fundamentally different. Regardless of the differences, both results show the value of further investigating agents' behaviors in incentivized exploration.

\textbf{Federated MAB.} This work is related to and can potentially contribute to the studies of federated MAB (FMAB) \citep{shi2021federated,shi2021federated2,zhu2020federated}, which considers a similar framework of multiple heterogeneous agents and a global principal. These studies assume agents would unconditionally give up learning their own local ones and naively follow global instructions. which seldom holds in practice and makes those approaches unrobust. However, the proposed incentivizing exploration scheme achieves the ``best'' of both worlds, i.e., agents learn local models with additional compensations and the principal learn the global model via a small amount of cost.

\section{Incentivized Exploration from Decentralized Learning Agents}\label{sec:formulation}

\subsection{Motivation}
\label{subsec:motivation}
Consider the following scenario: one company (the principal) can manufacture several products, and it would like to identify one product that best suits the market. A natural strategy is to perform a market survey by having a group of users to try these products for some time and observing their feedbacks. However, different users have different preferences over these products and they are also learning in this process. Once they identify their preferred products, there is little incentive for them to explore others, which limits the information gathering by the company.  Such a scenario is common in real life. For example,  telecommunications companies such as AT\&T (the principal) try to find the optimal channel to serve the clients (agents) of an area through a period of trials; content-providing websites such as YouTube (the principal) test the proposed features with prospective users (agents).

The idea of incentivized exploration fits in these practical scenarios perfectly since it can be utilized to provide extra bonuses for users to try different products. However, the principal now faces the problem of \emph{how to incentivize multiple long-term strategic agents simultaneously}, which has not been investigated in the prior research to the best of our knowledge. In addition, similar settings of applications are also considered in the recently proposed federated MAB (FMAB) \citep{shi2021federated,zhu2020federated}. However, current FMAB studies only consider naive users who always follow the principal's instructions, and cannot be applied to strategic users that have individual learning abilities and objectives.

\subsection{Agents and the Principal}
Following the motivation example in Section~\ref{subsec:motivation}, we consider a total of $M$ available decentralized agents, each of which interacts with a local bandit environment equipped with the same set of $K$ arms (referred to as \emph{local} arms) but with agent-dependent arm utilities. Namely, at time step $t$, a reward  $X_{k,m}(t)\in[0,1]$ is associated with agent $m$'s action of pulling arm $k$, which is sampled independently from an unknown distribution whose expectation is $\mu_{k,m} := \Eb[X_{k,m}(t)]$. In general, $\mu_{k,m}\neq \mu_{k,n}$ for $m\neq n$. These agents are (naturally) assumed to be decentralized and self-interested, i.e., their goal is to collect as much (of their own) reward as possible during a certain time horizon.

Besides these heterogeneous agents, there is also a \emph{principal}. The principal faces a global bandit game, which also has the same set of $K$ arms (referred to as \emph{global} arms for distinction). The expected rewards of global arm $k$ are the exact average of corresponding local arms, i.e., $\mu_{k} := \frac{1}{M}\sum_{m \in [M]}\mu_{k,m}$. The principal's goal is to identify the optimal global arm with a certain confidence $1-\delta$ within  a time horizon $T$, where $\delta\in (0,1)$ is a pre-fixed constant of the failure probability.

Without loss of generality, each local bandit game is assumed to have one unique optimal local arm, which is denoted as $k_{*,m} := \argmax_{k\in[K]}\mu_{k,m}$ for agent $m$ and its expected reward is $\mu_{*,m}:=\mu_{k_{*,m},m}$. Correspondingly, the sub-optimal gap for arm $k\neq k_{*,m}$ is defined as $\Delta_{k,m} := \mu_{*,m}-\mu_{k,m}$ and for arm $k_{*,m}$, we denote $\Delta_{k_{*,m},m} = \Delta_{\min,m}:=\min_{k\neq k_{*,m}}\Delta_k$. Similarly, we also assume there is one unique optimal global arm $k_* :=\argmax_{k\in[K]}\mu_k$ in the global bandit game, whose expected reward is $\mu_*:=\mu_{k_*}$. The  global sub-optimal gap for arm $k\neq k_*$ is defined as $\Delta_k := \mu_*-\mu_k$ and for arm $k_*$, we denote $\Delta_{k_*} = \Delta_{\min} := \min_{k\neq k_*}\Delta_k$. In addition, the time horizon $T$ is assumed to be known to the principal but not to agents, as it is often the principal who determines such a horizon.

\subsection{Interaction and Observation Model}\label{subsec:observation}
While the agents can directly pull their local arms and observe rewards, the principal cannot interact with the global game.
Instead, she can only observe local actions and rewards. In other words, with agent $m$ pulling arm $k$ at time $t$ and getting reward $X_{k,m}(t)$, the principal can also observe the action $k$ and the reward $X_{k,m}(t)$, which may be used to estimate expected rewards of global arms. It is helpful to interpret the global game as a virtual global characterization of all the local games (although it does not necessarily align with any specific one), which results in the challenge that it cannot be directly interacted with but can only be inferred indirectly. 

This indirect information gathering introduces significant difficulties for the principal. In particular, it is challenging to obtain sufficient local information from self-interested agents to aggregate precise global information. For example, an arm $k$ can be exactly, or very close to, the optimal one in the global model, but also be highly sub-optimal on agent $m$'s local model. Thus, the principal needs large amount of local explorations on arm $k$ to estimate it, which contradicts with agent $m$'s willingness as arm $k$ may not provide high local rewards. Thus, the task of best (global) arm identification is likely to fail only with this passive gathering of information, which is numerically verified in Section~\ref{sec:exp}.

\subsection{Incentivized Exploration}
\begin{wrapfigure}{R}{0.45\textwidth}	
	\centering
	\includegraphics[width=0.45\textwidth]{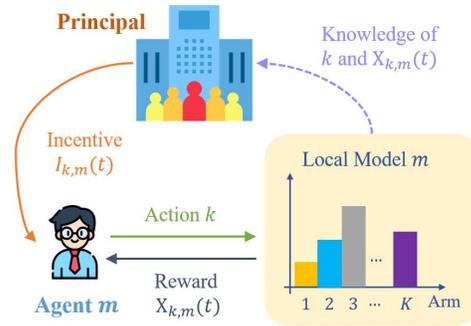}
	\caption{Incentivized exploration of the principal and agent $m$. The agent performs actions and gets both rewards and incentives.  The principal pays the incentives and observes local actions and rewards.}
	\label{fig:framework}
	\vspace{-0.1in}
\end{wrapfigure}

To address these challenges, we resort to the paradigm of \emph{incentivized exploration}, which provides a means for the principal to influence local actions. At time step $t$, the principal can provide extra bonuses for agents to explore arms, which will be announced to the agents before their decision making. Specifically, at time $t$, the bonus on arm $k$ for agent $m$ is denoted as $I_{k,m}(t)\geq 0$. If agent $m$ chooses arm $k$, she has an observation of $X_{k,m}(t)$ but obtains a reward $X'_{k,m}(t) = X_{k,m}(t)+I_{k,m}(t)$. Intuitively, if the principal wants to incentivize agent $m$ to explore a certain arm $k$ against the agent's original willingness, she should give a high bonus, i.e., $I_{k,m}(t)>0$; otherwise, she should spare no bonus, i.e., $I_{k,m}(t)=0$. In this way, the principal can leverage the extra bonuses to have agents gather her desired local information. The interaction model with incentivized exploration between the principal and one agent (among overall $M$ agents) is illustrated in Fig~\ref{fig:framework}.

Under the basic framework, we now formally define the learning objectives of agents and the principal.

{\bf Agents' Objectives. } First, the self-interested agents want to collect as many rewards as possible, but note that the rewards now consist of two parts: the original rewards generated by pulled arms and additional bonuses from principal's incentives. Thus, the cumulative rewards of agent $m$ is defined as 
\begin{equation*}
    R_m(T) := \sum\nolimits_{t=1}^T \left(X_{\pi_m(t),m}(t)+I_{\pi_m(t),m}(t)\right)
\end{equation*}
where $\pi_m(t)$ denotes the arm pulled by agent $m$ at time $t$. 
We remark that such rational learning agents differ fundamentally from myopic ones assumed  in most research of incentivized exploration.

{\bf Principal' Objectives. }On the principal's side, her first goal is to identify the best global arm with a confidence higher than $1-\delta$. Rigorously, this goal can be stated as 
\begin{equation*}
    \Pb\left[\hat{k}_*(T) = k_*\right]\geq 1-\delta,
\end{equation*}
where $\hat{k}_*(T)$ denotes the identified arm at horizon $T$.

Given that the identification is correct, the principal also aims at spending as few cumulative incentives as possible, which is defined as
\begin{equation*}
    C(T) := \sum\nolimits_{t=1}^T\sum\nolimits_{m\in [M]}I_{\pi_m(t),m}(t).
\end{equation*}
Note that  best arm identification is the principal's major task whereas minimizing cost $C(T)$ is only meaningful given $k_*(T) = k_*$ is achieved. Though not central, our mechanism will also satisfy other desirable properties such as storage efficiency.

In this work, we adopt the perspective of the principal, and try to optimize the principal's performance w.r.t. her learning objectives specified above. 
In other words, agents are ignorant of principal's goals and only focus on their individual rewards, and we only incorporate their ``selfishness'' to design an effective and efficient strategy for the principal. Such choice is natural in common applications where the behaviors of principal (e.g., company, commercial platform) can be designed, while agents (e.g., users, customers) normally perform their own decision making which cannot be specified.

\textbf{Remark.} Note that agents and the principal actually represent two kinds of bandit learning objectives. Namely, the agents target regret minimization \citep{auer2002finite,garivier2011kl} (although we here use the equivalent notation of cumulative rewards instead of regret), while the principal aims at best arm identification \citep{audibert2010best,jamieson2014lil,garivier2016optimal}. This formulation is reasonable as agents often care more about their own cumulative benefits while the principal aims at the final result, which can then be used in the future. However, as noted in \citet{bubeck2011pure}, even with the same game instance, these two objectives do not necessarily align with each other, not to mention the additional global-local heterogeneity considered in our model.

\section{A Mechanism for Incentivizing Exploration}
In this section, the ``Observe-then-Incentivize'' (OTI) algorithm is proposed, which can effectively solve the best arm identification problem on the global model while using a small amount of incentives and maintaining efficient storage. As indicated by the name, the key idea of OTI is ``Observe-then-Incentivize'', which comes from its two phases: the observing phase and the incentivizing phase.

\subsection{The ``Take-or-Ban'' Incentive-Provision Protocol} \label{subsec:take_leave} 
Before designing detailed mechanisms, we first recognize one major challenge to incentivize forward-looking agents is that even if we provide sufficient bonuses to compensate her reward at the current round, the agent may not want to take it for various reasons, e.g., giving up her current choice may lead to significant future losses, or refusing the current compensation may trick the principal to offer more future bonuses. Therefore, it is not clearly how to regulate agents' behaviors through incentives. To overcome this barrier, we propose the ``Take-or-Ban'' incentive-provision protocol, which provably guarantees that it is in the best interest for every agent to follow the offered incentives. This protocol is announced to the agents at the beginning of the game, and detailed as follows.

\textbf{``Take''.} At time step $t$, bonuses $I_{k,m}(t)$ offered to agent $m$ are set as a binary value. Specifically, $I_{k,m}(t) = 1$ if principal wants to incentivize exploration on her arm $k$; otherwise $I_{k,m}(t) = 0$. In other words, if her arm $k$ is incentivized, agent $m$ gets reward $X'_{k,m}(t)=1+X_{k,m}(t)$ by pulling it.
 
\textbf{``Ban''.} To avoid intractable agent behaviors, the following safeguard approach is adopted. Specifically, at time step $s$, if agent $m$ is provided incentive for taking some action (i.e., $\exists k, I_{k,m}(s)=1$) but she does not pull it (i.e., $\pi_m(s)\neq k$), she is marked as ``banned'' by the principal. The principal stops providing bonuses for any banned agent in the future, i.e., $\forall t>s, \forall k\in[K], I_{k,m}(t)  = 0$. In other words, due to her failure in following the current incentive, agent $m$ loses the chance of taking future bonuses (bu she is still free to play the local game and get original rewards $X_{k,m}(t)$).

\subsection{The Observing Phase}
Interestingly, while strategic agents introduce substantial challenges to the principal's learning problem, their fundamental need of learning local models can be a blessing.
In order to collect more local rewards, agents naturally have to address the intrinsic ``exploration-exploitation'' dilemma. In particular, to guarantee low regrets, some (but maybe limited) explorations on each arm are required by each agent. Specifically, this argument is supported by the following asymptotic lower bound \citep{Lai:1985}: with any consistent agent strategy,\footnote{A ``consistent'' strategy has a regret of $o(T^{\psi})$ in any bandit instance, $\forall \psi>0$ \citep{Lai:1985}.} for arm $k\neq k_{*,m}$, it holds that
\begin{equation}\label{eqn:asy_lower}
    \liminf_{\Gamma\to \infty} \frac{\Eb[N^w_{k,m}(\Gamma)]}{\log(\Gamma)} \geq \frac{1}{\texttt{KL}(\mu_{k,m},\mu_{*,m})},
\end{equation}
where $N^w_{k,m}(\Gamma)$ is the number of pulls by agent $m$ on arm $k$ up to time $\Gamma$ when there are no incentives, and $\texttt{KL}(\mu_{k,m},\mu_{*,m})$ is the KL-divergence between two corresponding reward distributions. In other words, to guarantee local performance, agents would spontaneously explore all local arms.

On the principal's side, this observation indicates that free local information can be obtained by just letting agents directly run their own local algorithms. Thus, intuitively, it is wise not to incentivize at the beginning of the game, but to observe and enjoy the ``free rides'' provided by agents' exploration. 

However, since local models are often not aligned with the global model, it may not be desirable to fully rely on the ``free rides'' from local agents' actions since they may gradually converge to their own optimal local arms, which may not be what the principal is interested in exploring.

\begin{wrapfigure}{R}{0.5\textwidth}
\vspace{-0.2in}
\begin{minipage}{0.5\textwidth}
\begin{algorithm}[H]
	\small
	\caption{OTI: Principal}
	\label{alg:principal}
	\begin{algorithmic}[1]
	    \State Initialization: $\forall k\in[K], \forall m\in[M], N_{k,m}(0)\gets 0,\hat{\mu}_{k,m}(0)\gets 0$
        \For{$t = 1, 2, \cdots, \frac{T}{2}$} \Comment{\textit{Observing Phase}}
            \State $\forall k\in[M], \forall m\in[M]$, $I_{k,m}(t)\gets 0$
            \State $\forall m\in[M]$, observe $\{\pi_m(t), X_{\pi_m(t),m}(t)\}$, then update $N_{\pi_m(t),m}(t)$ and $\hat{\mu}_{\pi_m(t),m}(t)$
        \EndFor
        \Statex $\triangleright$ \textit{Incentivizing Phase:}
        \State Set $S(\frac{T}{2})\gets [K]$ \Comment{\textit{Incentivizing Phase}}
         \For{$t = \frac{T}{2}+1, \frac{T}{2}+2, \cdots, T$}
            \State $\forall k\in[K]$, $\hat{\mu}_k(t-1) \gets \frac{1}{M}\sum_{m\in[M]} \hat{\mu}_{k,m}(t-1)$ and  set $CB_k(t-1)$ with Eqn.~\eqref{eqn:confidence}
            \State Update $S(t)$ as specified in Eqn.~\eqref{eqn:elimination}
            \If{$|S(t)|\geq 1$}
            \State $\bar{k}(t)\gets \argmax_{k\in S(t)}CB_k(t-1)$ 
            \State $\bar{m}(t) \gets \argmin_{m\in[M]}N_{\bar{k}(t),m}(t-1)$
            \State $I_{\bar{k}(t),\bar{m}(t)}(t) \gets  1$
            \State $I_{k,m}(t) \gets 0$, $\forall m\neq \bar{m}(t),\forall k\neq \bar{k}(t)$
            \Else 
            \State $\hat{k}_*(T)\gets$ the remaining arm in $S(t)$
            \EndIf
        \EndFor
        \Ensure $\hat{k}_*(T)$
	\end{algorithmic}
	\end{algorithm}
\end{minipage}
\vspace{-0.3in}
\end{wrapfigure}

Thus, it is necessary to reserve some time before the end of the game for adaptive adjustments. By putting these intuitions together, in the design of OTI, we specify the observing phase to last $\kappa(T) = \frac{T}{2}$ time steps from the beginning of the game.\footnote{Note that although $\kappa(T)$ is chosen to be $\frac{T}{2}$ here, there are other possible choices, e.g., $\frac{T}{4}$, $\sqrt{T}$, etc. Details for the influence of this choice are provided in Appendix~\ref{supp:dissucsion}.} To summarize, in this observing phase, the principal does not incentivize at all, i.e., $\forall t\in[0,\frac{T}{2}], \forall k\in [K], \forall m\in [M], I_{k,m}(t) = 0$. Instead, she only observes local actions and rewards. Due to our choice of space-efficient information aggregation (specified at the end of Section~\ref{subsec:incentive}), by the end of the observing phase, the principal has the record $\{N_{k,m}(\frac{T}{2}),\hat{\mu}_{k,m}(\frac{T}{2})|\forall k\in[K], \forall m\in[M]\}$, where $N_{k,m}(t)$ is the number pulls performed by agent $m$ on arm $k$ up to time $t$ and $\hat{\mu}_{k,m}(t)$ is the corresponding sample means from these pulls. These information serve as the foundation for the remaining $\frac{T}{2}$ time slots of the incentivizing phase.

\subsection{The Incentivizing Phase}\label{subsec:incentive}
From time slot $\frac{T}{2}+1$, the principal enters the incentivizing phase, where she actively leverages incentives instead of passively observing.
The first challenge she has is how to aggregate local information. Especially, the local sample means (i.e., $\hat{\mu}_{k,m}(t)$) are from different agents and associated with different number of pulls (i.e., $N_{k,m}(t)$), which further result in their individually unequal uncertainties. To address this challenge, a new (global) arm elimination algorithm is proposed, which is inspired by standard arm elimination algorithms in best arm identification \citep{even2002pac,karnin2013almost} but is specifically designed to tackle the new issue of global-local heterogeneity. 

Specifically, at each time step $t\in[\frac{T}{2}+1,T]$, with the record $\{N_{k,m}(t-1),\hat{\mu}_{k,m}(t-1)|\forall k\in[K], \forall m\in[M]\}$, the principal estimates the expected reward of global arm $k$ as $\hat{\mu}_k(t-1) = \frac{1}{M}\sum_{m\in[M]} \hat{\mu}_{k,m}(t-1)$
and associates global arm $k$ with the following confidence bound: 
\begin{equation}\label{eqn:confidence}
    CB_k(t-1) = \frac{1}{M}\sqrt{\bigg(\sum\nolimits_{m\in[M]}\frac{1}{N_{k,m}(t-1)}\bigg)\bigg(\log\bigg(\frac{KT}{\delta}\bigg)+4M\log\log\bigg(\frac{KT}{\delta}\bigg)\bigg)}.
\end{equation}
Note that Eqn.~\eqref{eqn:confidence} incorporates the different number of pulls on arm $k$ by \emph{all} the local agents, i.e., $\{N_{k,m}(t-1)|\forall m\in[M]\}$. It is more challenging than the standard confidence bound design in best arm identification \citep{jamieson2014best,gabillon2012best}, which only considers one source of arm pulls. This complication can be better understood in later theoretical analysis.

With the estimation and confidence bound, arms that are sub-optimal with high probabilities can be identified and eliminated while the other arms are left for more explorations in the future. Namely, let the active arm set $S(t)$ denote the arms that have not been determined to be sub-optimal up to time step $t$, which is initialized as $S(\frac{T}{2}) = [K]$, the new active arm set $S(t)$ is updated from $S(t-1)$ as
\begin{equation}\label{eqn:elimination}
    S(t) = \left\{k\in S(t-1)|\hat{\mu}_k(t-1)+CB_k(t-1)\geq \max_{j\in S(t-1)}\left\{\hat{\mu}_j(t-1)-CB_j(t-1)\right\}\right\}.
\end{equation}

To have the above-illustrated procedure effectively iterate over time, explorations are required for the arms in set $S(t)$. Hence, incentives play a critical role and the principal needs to decide which arm-agent pair to incentivize. In OTI, this decision process consists of two steps. The first step is to find the active arm with the largest confidence bound, i.e., \begin{equation*}
    \bar{k}(t):=\argmax_{k\in S(t)}CB_k(t-1).
\end{equation*}%
Then, the second step is to further identify the local agent who has the least pulls on arm $\bar{k}(t)$ i.e., 
\begin{equation*}
    \bar{m}(t) := \argmin_{m\in[M]}N_{\bar{k}(t),m}(t-1).
\end{equation*}%
Finally, the principal would only incentive agent $\bar{m}(t)$ to explore the arm $\bar{k}(t)$, i.e., 
\begin{equation*}
	I_{k,m}(t) \gets
	\begin{cases}
		1 & \text{if $k = \bar{k}(t)$ and $m = \bar{m}(t)$}\\
		0 & \text{otherwise}
	\end{cases}.
\end{equation*}
Intuitively, the arm-agent pair $\{\bar{k}(t), \bar{m}(t)\}$ represents the largest source of uncertainties in the current estimation of active global arms, i.e., the arm with largest confidence interval and the agent who had the fewest pulls on it. Thus, explorations on this pair is naturally the most efficient way to increase the confidence of estimations. 

By iterating this process of eliminating and incentivizing, the principal would eventually have sufficient information to shrink the active arm set to have only one arm left when the horizon is sufficient, i.e., $|S(t)|=1$, and this remaining arm is output as the identified optimal arm $\hat{k}_*(T)$.

{\bf Space-efficient Information Aggregation.}
While the principal can observe local actions and rewards, it is not storage-friendly to store all raw local data sequence $\{\pi_m(\tau),X_{\pi_m(\tau),m}(\tau)|\forall m\in[M], \forall \tau\leq t\}$, which requires memory space of order $O(Mt)$ that grows linearly in $t$. Instead, OTI is designed to keep track of $\{N_{k,m}(t),\hat{\mu}_{k,m}(t)|\forall k\in[K], \forall m\in[M]\}$.
These values can be updated iteratively and only take a constant memory space of $O(KM)$ regardless of the horizon. 

\section{Theoretical Analysis}\label{sec:theory}
\subsection{Main Results}\label{subsec:main_result}
The performance of OTI is theoretically analyzed from several different aspects. With agents running general consistent local strategies, the following theorem establishes the success of best arm identification and bounds the \emph{expected} cumulative incentives.
\begin{theorem}\label{thm:expected_incentive} 
It is the best interest for every agent to always accept the incentivized explorations under the ``Take-or-Ban'' protocol. Moreover, if the agents' local strategy is consistent without incentives and the horizon $T$ is sufficiently large, the OTI algorithm satisfies that $\Pb[\hat{k}_*(T) = k_*]\geq 1-\delta$, and the expected cumulative incentives are bounded as
    \begin{equation}\label{eqn:expected_incentive}
        \Eb[C(T)] = O\bigg( \sum_{k\in [K]}\sum_{m\in[M]}\bigg[\frac{\log(\frac{KT}{\delta})}{M\Delta_k^2}+\frac{\log\log(\frac{KT}{\delta})}{\Delta_k^2}- \min\bigg\{\frac{T}{2}, \frac{\log(\frac{T}{2})}{\textup{\texttt{KL}}(\mu_{k,m},\mu_{*,m})}\bigg\}\bigg]^+\bigg),
    \end{equation}
where $x^+ := \max\{x,0\}$.
\end{theorem}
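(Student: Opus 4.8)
The statement bundles three claims — incentive compatibility of ``Take-or-Ban'', the identification guarantee, and the cost bound — which I would treat in that order. For the first, fix an agent $m$ and any history-dependent strategy for her. Let $s$ be the first round at which an incentive is offered (say $I_{\bar k,m}(s)=1$) but the strategy plays some $j\neq\bar k$; I would modify the strategy to play $\bar k$ at $s$ and argue the modification is weakly improving. Two observations suffice. First, the round-$s$ payoff from $\bar k$ is $1+X_{\bar k,m}(s)\ge 1\ge X_{j,m}(s)$, since all local rewards lie in $[0,1]$, so the current reward does not drop. Second, playing $\bar k$ keeps the agent un-banned, and the continuation value of a \emph{banned} agent depends only on her own local environment and local statistics (a banned agent receives $I\equiv 0$ thereafter, so the principal's future behaviour is irrelevant to her), whereas an un-banned agent can reproduce any banned continuation by declining the next offered incentive; hence the continuation value does not drop either. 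Iterating this swap over the first deviation shows accepting every incentive is optimal, independently of the (unknown) horizon $T$.

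\textbf{(2) Identification.} Define the good event $\Ec$ on which $|\hat\mu_k(t-1)-\mu_k|\le CB_k(t-1)$ for every arm $k$ and every round $t$ of the incentivizing phase, with $CB_k$ as in Eqn.~\eqref{eqn:confidence}. Since $\hat\mu_k(t-1)-\mu_k=\tfrac1M\sum_{m}(\hat\mu_{k,m}(t-1)-\mu_{k,m})$ is a weighted average of $M$ independent empirical-mean deviations whose sample sizes $N_{k,m}(t-1)$ are themselves data-dependent (set by each agent's adaptive rule together with the incentives), establishing $\Pb[\Ec]\ge 1-\delta$ requires a \emph{time-uniform} concentration for sums of adaptively-stopped means with heterogeneous sample sizes: a peeling / law-of-iterated-logarithm argument carried out per agent, with the failure probability split across the $K$ arms only, reproduces exactly the $CB_k$ of Eqn.~\eqref{eqn:confidence}, explaining the $\log\log(KT/\delta)$ terms and the $M$-dependence. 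On $\Ec$, the elimination rule~\eqref{eqn:elimination} behaves as expected: $k_*$ is never eliminated (because $\hat\mu_{k_*}+CB_{k_*}\ge\mu_{k_*}\ge\mu_j\ge\hat\mu_j-CB_j$ for every $j$), and any suboptimal arm $k$ still active at round $t$ must satisfy $2\big(CB_k(t-1)+CB_{k_*}(t-1)\big)\ge\Delta_k$. Together with the incentive rule — incentivize the active arm of largest $CB$, on its least-pulled agent — this forces the confidence widths of all active arms to shrink, so once $T$ is large enough for the $T/2$ incentivizing rounds to suffice, every suboptimal arm is eliminated and $\hat k_*(T)=k_*$ on $\Ec$.

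\textbf{(3) Cost bound.} The last display is the workhorse. Whenever the principal incentivizes $(\bar k,\bar m)$ at round $t$, combining $2(CB_k+CB_{k_*})\ge\Delta_k$ (for a still-active suboptimal $k$; trivial when $\bar k=k_*$) with ``$\bar k$ attains the largest $CB$ among active arms'' gives $CB_{\bar k}(t-1)\ge\Delta_{\bar k}/4$. Expanding the definition of $CB$ and using $\bar m\in\argmin_m N_{\bar k,m}(t-1)$, so that $N_{\bar k,\bar m}(t-1)$ is at most the harmonic mean of $\{N_{\bar k,m}(t-1)\}_m$, this yields $N_{\bar k,\bar m}(t-1)=O\big(\tfrac{\log(KT/\delta)}{M\Delta_{\bar k}^2}+\tfrac{\log\log(KT/\delta)}{\Delta_{\bar k}^2}\big)$. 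Hence each pair $(k,m)$ is incentivized at most $\big[\, O\big(\tfrac{\log(KT/\delta)}{M\Delta_k^2}+\tfrac{\log\log(KT/\delta)}{\Delta_k^2}\big)-N^w_{k,m}(T/2)\,\big]^+$ times, where $N^w_{k,m}(T/2)=N_{k,m}(T/2)$ are the free pulls accumulated in the observing phase (in which no incentives are issued). Summing over $(k,m)$, taking expectations, and invoking the asymptotic lower bound~\eqref{eqn:asy_lower} — which, once $T$ is large, gives $\Eb[N^w_{k,m}(T/2)]\gtrsim\log(T/2)/\texttt{KL}(\mu_{k,m},\mu_{*,m})$ for $k\neq k_{*,m}$, while for $k=k_{*,m}$ consistency forces $o(T)$ pulls on every locally-suboptimal arm so $\Eb[N^w_{k_{*,m},m}(T/2)]\gtrsim T/2$ and $\texttt{KL}(\mu_{*,m},\mu_{*,m})=0$ makes the $\log$-term vacuous — produces the subtraction $\min\{T/2,\ \log(T/2)/\texttt{KL}(\mu_{k,m},\mu_{*,m})\}$, hence Eqn.~\eqref{eqn:expected_incentive}.

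\textbf{Main obstacle.} I expect the crux to be the confidence bound~\eqref{eqn:confidence}: proving a clean, time-uniform concentration for a weighted sum of independent, adaptively-sampled empirical means with unequal, random sample sizes. The naive route — a per-agent anytime bound followed by Cauchy--Schwarz — loses a $\sqrt{M}$ factor, so one must concentrate the sum directly while peeling each agent's clock separately; this is precisely the ``rewards aggregated under unequal confidences'' ingredient highlighted in the introduction. A secondary subtlety is passing from the pointwise per-pair incentive count to $\Eb[C(T)]$: one must control the fluctuation of $N^w_{k,m}(T/2)$ about its mean, verify that for large $T$ every agent has pulled every arm at least once in the observing phase (so $CB_k$ is well-defined), and handle the coupling through $k_*$ (whose confidence width must also shrink, since $k_*$ stays active until the active set becomes a singleton) — all of which I would absorb into the $O(\cdot)$ using the ``$T$ sufficiently large'' hypothesis.
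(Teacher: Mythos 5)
Your proposal is correct and follows essentially the same route as the paper: the swap/simulation argument for ``Take-or-Ban'' (the paper formalizes your continuation-value claim via a modified-history coupling in Lemma~\ref{lem:participate}), the good event $\Hc$ with the elimination invariant $2(CB_k+CB_{k_*})\geq\Delta_k$, the per-pair incentive count $[\,O(\tfrac{\log(KT/\delta)}{M\Delta_k^2}+\tfrac{\log\log(KT/\delta)}{\Delta_k^2})-N^w_{k,m}(T/2)\,]^+$ obtained exactly via $CB_{\bar k}\geq\Delta_{\bar k}/4$ and the argmin/harmonic-mean step, and finally the Lai--Robbins bound for the free pulls. The two obstacles you flag are precisely where the paper spends its technical effort (the multivariate self-normalized concentration of $\sum_m N_{k,m}(\hat\mu_{k,m}-\mu_{k,m})^2$ with per-agent peeling, yielding Lemma~\ref{lem:confidence}), and on the second one --- moving the expectation inside $[\cdot]^+$ --- the paper's own proof is no more careful than your sketch.
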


While Theorem~\ref{thm:expected_incentive} provides a general upper bound with arbitrarily consistent local strategies,  Eqn.~\eqref{eqn:expected_incentive} is an upper bound on \emph{expected} cumulative incentives. It does not imply a (stronger) high-probability bound. For example, the incentives may be of order $O(\log^2(T))$ with $1/\log(T)$ probability.

To better understand the performance of OTI, next we consider agents who run  UCB  \citep{auer2002finite},  one of the most commonly adopted MAB algorithms. Specifically, agents are assumed to run the following   $\alpha$-UCB algorithm \citep{bubeck2012regret} when there are no incentives:
\begin{equation*}
    \pi_m(t)\gets \argmax_{k\in[K]}\left\{\hat{\mu}_{k,m}(t-1)+\sqrt{\alpha \log(t)/N_{k,m}(t-1)}\right\},
\end{equation*}
where $\alpha$ is a positive constant specified in the design, and a typical choice is $\alpha = 2$ \citep{auer2002finite}. In this case, we are able to achieve a stronger \emph{high-probability} guarantee for incentives.
\begin{theorem}\label{thm:ucb_incentive}
While the agents run $\alpha$-UCB algorithms with $\alpha\geq \frac{3}{2}$ and the horizon $T$ is sufficiently large, the OTI algorithm satisfies that $\Pb[\hat{k}_*(T) = k_*]\geq 1-\delta$.
Moreover, it holds that
\begin{equation}\label{eqn:ucb_incentive}
    \Pb\bigg[C(T) = O\bigg( \sum_{k\in [K]}\sum_{m\in[M]}\bigg[\frac{\log(\frac{KT}{\delta})}{M\Delta_k^2}+\frac{\log\log(\frac{KT}{\delta})}{\Delta_k^2}- \frac{\alpha\log(\frac{T}{2})}{\Delta_{k,m}^2}\bigg]^+\bigg)\bigg]\geq 1-\frac{4MK}{T}.
\end{equation}
\end{theorem}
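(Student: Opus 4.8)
\textbf{Proof proposal for Theorem~\ref{thm:ucb_incentive}.}

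The plan is to follow the same skeleton as the proof of Theorem~\ref{thm:expected_incentive}, but replacing the asymptotic lower bound~\eqref{eqn:asy_lower} on free pulls with a \emph{high-probability finite-time lower bound} on the number of pulls that an $\alpha$-UCB agent makes on each arm during the observing phase. Concretely, I would proceed in three stages. First, I would invoke the ``Take-or-Ban'' incentive-compatibility statement (already established in Theorem~\ref{thm:expected_incentive}) to conclude that no agent is ever banned on the equilibrium path, so every incentivized pull the principal requests is actually executed; this reduces the problem to counting how many incentivized pulls the elimination procedure~\eqref{eqn:elimination} needs. Second, I would analyze the elimination algorithm on a ``good event'' on which all global confidence bounds~\eqref{eqn:confidence} are valid simultaneously for all arms and all times in $[\tfrac{T}{2}+1,T]$: a union bound over $K$ arms and $T$ rounds, together with a self-normalized / peeling argument (the $\log\log$ term in~\eqref{eqn:confidence} is exactly what one needs for a time-uniform bound, cf.\ Jamieson et al.\ and the law of the iterated logarithm) gives that this good event has probability at least $1 - O(K/T)$ — or more precisely enough to land at the $1 - 4MK/T$ in the statement once combined with the UCB lower-bound failure probability below. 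On the good event, arm $k \neq k_*$ is eliminated as soon as the total number of pulls it has accumulated (across all agents, weighted harmonically as in~\eqref{eqn:confidence}) is large enough that $CB_k < \Delta_k/4$ or so; inverting~\eqref{eqn:confidence} this happens once $\sum_m 1/N_{k,m}$ drops below a threshold of order $M^2\Delta_k^2 / (\log(KT/\delta) + M\log\log(KT/\delta))$, i.e.\ once each agent has roughly $\Theta\big(\tfrac{\log(KT/\delta)}{M\Delta_k^2} + \tfrac{\log\log(KT/\delta)}{\Delta_k^2}\big)$ pulls on arm $k$. The number of \emph{incentivized} pulls the principal must supply on arm $k$ for agent $m$ is then the positive part of this target minus whatever agent $m$ already pulled arm $k$ for free during the observing phase.

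The third and crucial stage is to lower bound those free pulls: I need that, with high probability, an $\alpha$-UCB agent with $\alpha \geq \tfrac32$ has pulled each sub-optimal local arm $k$ at least $\Omega\big(\tfrac{\alpha \log(T/2)}{\Delta_{k,m}^2}\big)$ times by the end of the observing phase (time $T/2$). This is the ``high-probability finite-time lower bound of UCB'' advertised as one of the paper's three technical contributions, and it is the heart of the matter. The standard narrative for UCB only gives an \emph{upper} bound $O(\alpha\log T/\Delta_{k,m}^2)$ on $N_{k,m}$; the lower bound requires showing that UCB genuinely keeps exploring a sub-optimal arm until its confidence interval has shrunk below the gap. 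The argument I would run: on a good event where $\hat\mu_{k_{*,m},m}(t) \geq \mu_{*,m} - \sqrt{\alpha\log t / N_{k_{*,m},m}(t)}$ and the reverse for arm $k$, whenever $N_{k,m}(t) < c\,\alpha\log(t)/\Delta_{k,m}^2$ for an appropriate constant $c$, the UCB index of arm $k$ exceeds that of $k_{*,m}$ (because its bonus term $\sqrt{\alpha\log t / N_{k,m}}$ is still larger than $\Delta_{k,m}$ plus the optimal arm's bonus term), forcing arm $k$ to be pulled again; iterating until $t = T/2$ gives the claimed lower bound on $N_{k,m}(T/2)$. The constant $c$ and the restriction $\alpha \geq \tfrac32$ come from balancing the two one-sided confidence deviations and ensuring the relevant Hoeffding/sub-Gaussian tail sums are summable — this is where the $3/2$ threshold (as opposed to the more familiar $1/2$ for upper bounds) enters, and where I expect the delicate bookkeeping to live.

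Finally, I would assemble the pieces: on the intersection of (i) the incentive-compatibility event (probability one on-path), (ii) the global-confidence good event for the elimination phase, and (iii) the UCB exploration-lower-bound good event, a union bound over the $MK$ arm-agent pairs and the $O(K/T)$, $O(MK/T)$ failure probabilities yields the overall $1 - 4MK/T$ guarantee, and on this event $C(T) = \sum_m \sum_k \big(\text{elimination target}_{k,m} - \text{free pulls}_{k,m}\big)^+$, which is exactly the bound in~\eqref{eqn:ucb_incentive}. The correctness claim $\Pb[\hat k_*(T)=k_*] \geq 1-\delta$ follows from the same good event as in Theorem~\ref{thm:expected_incentive}: on the event that all $CB_k$ are valid — which holds with probability $\geq 1-\delta$ by the choice of $\log(KT/\delta)$ inside~\eqref{eqn:confidence} — the optimal arm $k_*$ is never eliminated and every sub-optimal arm eventually is, provided $T$ is large enough that the incentivizing phase has time to drive all the sub-optimal $CB_k$ below their gaps. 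The main obstacle, to reiterate, is stage three: converting the usual upper-bound intuition for UCB into a clean, constant-explicit, high-probability \emph{lower} bound on per-arm pull counts at the finite time $T/2$, including pinning down the role of $\alpha \geq 3/2$.
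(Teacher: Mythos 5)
Your overall architecture matches the paper's: Theorem~\ref{thm:ucb_incentive} is indeed proved by rerunning the proof of Theorem~\ref{thm:expected_incentive} and substituting a high-probability finite-time lower bound on the free pulls $N^w_{k,m}(\tfrac{T}{2})$ of an $\alpha$-UCB agent (Lemma~\ref{lem:ucb_lower}) for the asymptotic bound in Eqn.~\eqref{eqn:asy_lower}; the $1-\frac{4MK}{T}$ is exactly a union bound of that lemma's $\frac{2K}{\Lambda}$ failure probability (with $\Lambda=\tfrac{T}{2}$) over the $M$ agents. However, your sketch of the crucial third stage --- the UCB lower bound itself --- contains a step that does not go through. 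You argue that whenever $N_{k,m}(t)$ is below the target, the UCB index of arm $k$ exceeds that of the local optimal arm $k_{*,m}$, ``forcing arm $k$ to be pulled again.'' This only shows that $k_{*,m}$ is not pulled at time $t$; for $K>2$ some third arm $j$ may carry the largest index, and nothing in your argument prevents the agent from spending the entire observing phase cycling among arms other than $k$. A comparison against a single competitor is insufficient to force a pull of arm $k$.

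The paper closes exactly this hole with a counting argument: it restricts attention to the window $[\tfrac{\Lambda}{2}+1,\tfrac{3\Lambda}{4}]$, partitions it into $F_{k,m}(\Lambda)$ blocks, and uses the pigeonhole principle to extract a block in which arm $k$ is never pulled while its index stays at least $\mu_{*,m}+\Delta_{k,m}$. It then shows that within that block every other arm $j$ can be pulled at most $N_{\max}=O(\alpha\log(\Lambda)/\Delta_{k,m}^2)$ additional times before its own index drops below $\mu_{*,m}+\Delta_{k,m}/2$ and loses to arm $k$; summing over the $K-1$ other arms bounds the block length by $(K-1)N_{\max}$, contradicting the hypothesis $\frac{\Lambda}{\log^2(\Lambda)}>\frac{4K(\alpha-3/2)^2}{\Delta_{\min,m}^4}$. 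This is where the factor $K$ in the condition on $\Lambda$ enters, and it is the missing ingredient in your sketch. The role of $\alpha\geq\tfrac32$ is also slightly different from what you guess: it is simply that $\sqrt{\alpha}-\sqrt{3/2}\geq 0$ is needed so that the algorithm's bonus dominates the width $\sqrt{3\log(t)/(2N_{k,m}(t-1))}$ of the (time-uniform, probability $1-\frac{2K}{\Lambda}$) confidence interval, keeping arm $k$'s index high --- not a tail-summability constraint. The remaining stages of your proposal (incentive compatibility via Lemma~\ref{lem:participate}, the elimination threshold via Lemmas~\ref{lem:confidence} and~\ref{lem:good_event}, and the per-pair accounting $C_{k,m}(T)\leq[\text{target}-N^w_{k,m}(\tfrac{T}{2})]^+$) coincide with the paper's.
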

In both Eqns.~\eqref{eqn:expected_incentive} and \eqref{eqn:ucb_incentive}, the first two terms represent the number of pulls that the principal needs on agent $m$'s arm $k$ to determine whether it is optimal or not. They are proportional to $1/\Delta^2_k$ as in standard best arm identification algorithms \citep{jamieson2014best}. In addition, the second term is a lower-order one w.r.t. $1/\delta$.  Taking a deeper look into these results, we see that the first term decreases with increasing number of agents (proportional to $1/M$). This observation indicates that increasing the population of agents actually benefits the learning of the principal. We note that the importance of \emph{agent population} has not been fully recognized in prior studies.

Furthermore, both last terms in Eqns.~\eqref{eqn:expected_incentive} and \eqref{eqn:ucb_incentive} characterize the number of spontaneous pulls that agent $m$ performs on arm $k$ during the observing phase, which is the amount of ``free pulls'' taken by the principal. This term is guaranteed by Eqn.~\eqref{eqn:asy_lower} for general consistent local strategies, and by the to-be presented Lemma~\ref{lem:ucb_lower} for UCB. In other words, the learning behavior of strategic agents benefits the exploration of the principal. This observation is interesting since as opposed to most prior studies of myopic agents, the principal can leverage the natural behavior of strategic agents.

Both of the aforementioned observations lead to our key result, that if there are enough amount of agents, i.e., $M$ is large, the last term dominates the first two terms, which means no incentives are needed. Correspondingly, a somewhat surprising result emerges -- when there are sufficiently many learning agents involved, the exploration process of the principal can be (almost) free.

The key parts in the proofs are illustrated in the following, which may be of independent interests.

\subsection{Proof Step 1: Effectiveness of the ``Take-or-Ban'' Incentive-Provision Strategy}\label{subsec:step1}
In prior works with myopic agents, it is obvious that with sufficient instantaneous bonuses, they would pull the incentivized arms. However, this work deals with strategic agents with long-term goals. As stated in Section~\ref{subsec:take_leave}, these strategic agents have much more complicated behaviors, e.g., they may occasionally (instead of always) follow incentives, or they may refuse incentives first but accept later, which requires a more careful agent behavior analysis. Fortunately, our ``Take-or-Ban'' protocol guarantees that agents will always follow the incentives, as stated in the following lemma. 

\begin{lemma}\label{lem:participate}
Under the  ``Take-or-Ban'' incentive-provision protocol, following incentives, whenever offered, is optimal w.r.t. the expected cumulative rewards (compared to not following) for every agent. 
\end{lemma}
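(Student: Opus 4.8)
The plan is to show that the ``Take-or-Ban'' protocol makes following incentives a dominant choice for every agent by a coupling/exchange argument over the agent's own future trajectory. Fix agent $m$ and consider any round $s$ at which the principal offers a bonus $I_{\bar k(s),m}(s)=1$ on some arm $\bar k(s)$. I would compare two continuations of the agent's (arbitrary, possibly randomized and history-dependent) strategy from round $s$ onward: in continuation (A) the agent pulls $\bar k(s)$ this round and thereafter plays optimally; in continuation (B) the agent pulls some other arm $j\neq\bar k(s)$ this round and thereafter plays optimally \emph{subject to the ``Ban'' constraint}, i.e., from round $s+1$ on it receives $I_{k,m}(t)=0$ for all $k,t$.

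The key structural observation I would exploit is that under ``Ban'' the only consequence of refusing the current bonus is the permanent loss of \emph{all} future bonuses, which are all nonnegative; refusing does not change the reward distributions $X_{k,m}(\cdot)$, does not change what the agent learns about its local arms (the observation $X_{\bar k(s),m}(s)$ is available only if it pulls $\bar k(s)$, but pulling $\bar k(s)$ is precisely continuation (A)), and crucially does not change the \emph{set} of incentivized arm-agent pairs the principal would offer in the future — because the principal's incentive schedule depends only on the \emph{global} record $\{N_{k,m'},\hat\mu_{k,m'}\}$ aggregated over the observing phase and on $S(t)$, not on agent $m$'s idiosyncratic post-$s$ actions once $m$ is banned (and in fact the agent cannot influence those future offers in its favor at all once banned). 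Therefore the total reward under (B) is stochastically dominated by: (expected best cumulative local reward from round $s$ on, ignoring bonuses). Under (A), the agent gets at least that same quantity of \emph{local} reward — here I use that the immediate bonus is $1\geq X_{k,m}(s)\in[0,1]$, so pulling the bonused arm this round costs the agent at most $1$ in expected foregone local reward relative to its unconstrained best move, but it gains exactly $1$ from the bonus, hence the one-step comparison is already a wash — plus a nonnegative amount of future bonuses (the agent may simply choose to ignore all future bonuses and play the same local policy as in (B), which is feasible since it is not banned). Summing, $\Eb[R_m(T)\mid \text{(A)}]\geq \Eb[R_m(T)\mid \text{(B)}]$.

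Concretely I would carry this out in three steps. \textbf{Step 1:} formalize the one-round inequality: conditioned on the history $\mathcal H_{s-1}$ and on a bonus being offered on $\bar k(s)$, the expected one-round payoff from taking the bonus, $1+\mu_{\bar k(s),m}$, is at least the expected one-round payoff $\mu_{j,m}$ from any alternative arm $j$, since $1+\mu_{\bar k(s),m}\geq 1\geq \mu_{j,m}$ because rewards lie in $[0,1]$. \textbf{Step 2:} show the continuation value is not hurt: let $V^{\text{free}}_m(s+1)$ denote the optimal expected cumulative \emph{local-only} reward obtainable from round $s+1$ to $T$ under any strategy; since taking the bonus leaves the agent un-banned, its continuation value is at least $V^{\text{free}}_m(s+1)$ (ignore future bonuses), whereas refusing the bonus caps its continuation value at exactly $V^{\text{free}}_m(s+1)$ (the ``Ban'' makes all future bonuses zero, and nothing else about the environment the agent faces changes). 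A subtle point to handle carefully here is that the agent's \emph{information state} after taking the bonus includes the extra sample $X_{\bar k(s),m}(s)$, which can only help; and that the principal's future incentive offers are measurable w.r.t.\ quantities the banned agent cannot steer, so there is no strategic value in refusing to ``manipulate'' future bonuses. \textbf{Step 3:} combine Steps 1–2 by backward induction over $s$ (or simply by linearity, summing the per-round comparisons along the coupled trajectories) to conclude $\Eb[R_m(T)\mid\text{follow at }s]\ge \Eb[R_m(T)\mid\text{deviate at }s]$ for every $s$ and every history, which is exactly the claim.

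The main obstacle I anticipate is Step 2, specifically making rigorous the assertion that refusing an incentive confers \emph{no} strategic benefit through the principal's future behavior. One must argue that the principal's incentivizing decisions in rounds $>s$ are either (i) unaffected by agent $m$'s action at $s$ (true once $m$ is banned, and true because the incentivizing-phase schedule is driven by the observing-phase record $N_{k,m},\hat\mu_{k,m}$ fixed at $T/2$ and by the global elimination set $S(t)$), or (ii) if $s$ is itself in the observing phase, then no incentives are offered at all, so the hypothesis ``a bonus is offered at $s$'' forces $s>T/2$. Hence the agent can never ``trick the principal into offering more future bonuses'' by refusing — the very mechanism design of ``Ban'' plus the commitment to an observing-phase-determined schedule removes that channel. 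Once this is pinned down, the rest is the elementary $[0,1]$-boundedness comparison of Step 1 and the monotonicity of Step 3.
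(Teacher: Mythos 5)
Your core intuitions are the right ones --- each bonus is worth $1 \geq X_{k,m}(t)$ so a forced pull can never cost more than one round's local reward, the ``Ban'' only strips away nonnegative future bonuses, and a banned agent has no strategic channel back to its own future offers --- and you correctly flag Step~2 as the crux. But Step~2 as written has a genuine gap. You compare continuation (A) (take the bonus) against continuation (B) (refuse) via a common quantity $V^{\text{free}}_m(s+1)$, claiming that (A) can ``play the same local policy as in (B).'' It cannot: from round $s+1$ onward the policy in (B) is measurable with respect to a history containing the pull of $j$ and the observation $X_{j,m}(s)$, which the agent in (A) does not possess (it holds a sample of $\bar k(s)$ instead). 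Your remark that the extra sample from the bonused arm ``can only help'' does not repair this, because the comparison is not ``one extra sample versus none'' but ``a sample of $\bar k(s)$ versus a sample of $j$,'' and neither information set contains the other. The only way to make $V^{\text{free}}_m(s+1)$ history-independent is to let ``plays optimally'' mean optimal with respect to the true (unknown) means, which trivializes the lemma and loses its content for learning agents: the statement actually needed (and proved in the paper) is that \emph{every} strategy that sometimes refuses is dominated by \emph{some} strategy that always follows.

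The paper closes exactly this gap with a modified-history coupling. Given an arbitrary $\Pi_m$, it constructs $\Pi'_m$ that always takes incentives but maintains a surrogate history $H'_m(t)$ into which the observations from forced (would-have-been-refused) pulls are \emph{not} inserted; on every other round $\Pi'_m$ plays $\Pi_m(H'_m(t), I_m(t))$. Consequently the non-forced rounds of $\Pi'_m$ after the first deviation time $s$ are distributed exactly as the first $T-|\tau_m^{s,T}|$ post-$s$ rounds of $\Pi_m$, while each of the $|\tau_m^{s,T}|$ forced rounds pays at least the bonus $1$, which dominates the at-most-$|\tau_m^{s,T}|$ local reward that $\Pi_m$ collects in its truncated tail. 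If you replace your ``plays optimally thereafter'' continuations with this discard-and-replay construction, your Steps~1 and~3 go through and you recover the paper's argument.
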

Note that the above lemma does not rely on agent's learning algorithm and holds for all possible ones. Thus regardless of local agents' original intentions, as long as they are self-interested and rational, \emph{they should always follow the incentives}, i.e., pull the arm which has extra bonus offered. This design provides a clean characterization  of agent behaviors for our analysis of OTI next. 

\shir{\textbf{Remark.} The ``Take-or-Ban'' protocol is designed for theoretical rigor. For practical applications, the design OTI algorithm can be implemented with relaxed protocols; it is just that the rigorous theoretical incentive guarantee may not hold for some rational users with sophisticated strategies.}

\subsection{Proof Step 2: Effectiveness of Best Arm Identification}\label{subsec:step2}
Some key lemmas are presented to demonstrate the effectiveness and efficiency of the designed best arm identification algorithm.
Inspired by proof techniques in combinatorial MAB (CMAB) literature \citep{combes2015combinatorial}, the confidence bound design in Eqn.~\eqref{eqn:confidence} is validated in the following lemma.
\begin{lemma}\label{lem:confidence}
    Denote $\Hc:=\left\{\forall t\in[\frac{T}{2}+1,T], \forall k\in [K], \left|\hat{\mu}_{k}(t-1)-\mu_k\right|\leq CB_k(t-1)\right\}$.
    When the horizon $T$ is sufficiently large, it holds that $\Pb(\mathcal{H})\geq 1-\delta$.
\end{lemma}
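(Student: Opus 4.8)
The plan is to show that the event $\Hc$ holds with probability at least $1-\delta$ by a union bound over arms $k\in[K]$ and over the (deterministic) set of possible ``pull-count profiles'' that can arise during the incentivizing phase, combined with a carefully stated concentration inequality for the aggregated estimator $\hat{\mu}_k(t-1) = \frac{1}{M}\sum_{m}\hat{\mu}_{k,m}(t-1)$. The key difficulty, compared to standard best arm identification, is that $\hat{\mu}_k(t-1)$ is a weighted average of $M$ independent sub-Gaussian sample means, each with its own random number of samples $N_{k,m}(t-1)$, and these counts are stopping-time-like quantities determined by the agents' adaptive strategies and the principal's incentives. So a naive Hoeffding bound for a fixed sample size does not directly apply; one needs a uniform (anytime) concentration bound.

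First I would fix an arm $k$ and handle the single-arm, single-agent concentration: for each $m$, the sequence of rewards $X_{k,m}(1), X_{k,m}(2),\dots$ is i.i.d.\ in $[0,1]$ with mean $\mu_{k,m}$, so by a maximal/uniform Hoeffding-type inequality (the standard ``peeling'' or self-normalized bound used in CMAB, e.g.\ following \citet{combes2015combinatorial}), with probability at least $1 - \delta'$ it holds \emph{simultaneously for all sample sizes $n\ge 1$} that $|\hat{\mu}_{k,m}^{(n)} - \mu_{k,m}| \le \sqrt{(\log(1/\delta') + 2\log\log(\cdot))/(2n)}$ up to constants, where $\hat{\mu}_{k,m}^{(n)}$ is the empirical mean of the first $n$ pulls. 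The $\log\log$ term is precisely the price of making the bound uniform in $n$, and it is what produces the $4M\log\log(KT/\delta)$ correction inside Eqn.~\eqref{eqn:confidence}. Then I would combine the $M$ agents: on the intersection of these $M$ good events, and using that $N_{k,m}(t-1)\ge 1$ for all $m$ (which holds because each agent's consistent/UCB strategy pulls every arm at least once early on — this needs a brief justification, or alternatively the incentivizing phase guarantees it), I get
\begin{equation*}
\left|\hat{\mu}_k(t-1) - \mu_k\right| \le \frac{1}{M}\sum_{m\in[M]}\left|\hat{\mu}_{k,m}(t-1)-\mu_{k,m}\right| \le \frac{1}{M}\sum_{m\in[M]}\sqrt{\frac{c\big(\log(1/\delta') + 2\log\log(\cdot)\big)}{N_{k,m}(t-1)}},
\end{equation*}
and then by Cauchy--Schwarz $\sum_m \sqrt{a_m} \le \sqrt{M\sum_m a_m}$, which yields exactly the form of $CB_k(t-1)$ in Eqn.~\eqref{eqn:confidence} once the constants and the value of $\delta'$ are matched.

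The bookkeeping step is choosing $\delta'$ and assembling the union bound. Since the per-agent concentration bound is already uniform over all sample sizes $n$ (hence over all $t$), the only remaining union is over the $K$ arms and the $M$ agents, so setting $\delta' = \delta/(KM)$ — or some comparable choice that makes $\log(1/\delta')$ absorb into $\log(KT/\delta)$ — suffices, and the ``$T$ sufficiently large'' hypothesis is used to ensure that lower-order terms (e.g.\ the initial rounds needed for $N_{k,m}\ge 1$, or slack in the $\log\log$ vs.\ $\log$ comparison) are dominated. I would then take the intersection over all $k$ and $m$ of the good events to conclude $\Pb(\Hc)\ge 1-\delta$.

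The main obstacle I anticipate is making the concentration argument rigorous despite the random, adaptively-chosen sample counts $N_{k,m}(t-1)$: one must be careful that the ``good event'' for agent $m$ on arm $k$ is defined purely in terms of the i.i.d.\ reward stream (uniformly over all prefix lengths) and is therefore independent of how the counts are selected, so that evaluating it at the random value $N_{k,m}(t-1)$ is legitimate. This is the standard subtlety in anytime bandit concentration, and invoking the uniform Hoeffding/peeling bound in the right form (as done in the CMAB literature cited) is what resolves it cleanly; the rest is constant-chasing to match Eqn.~\eqref{eqn:confidence}.
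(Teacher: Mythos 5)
There is a genuine gap, and it is precisely at the step where you aggregate the $M$ agents. Your plan is to control each $\left|\hat{\mu}_{k,m}(t-1)-\mu_{k,m}\right|$ separately at confidence level $1-\delta/(KM)$, apply the triangle inequality, and then use $\sum_m\sqrt{a_m}\le\sqrt{M\sum_m a_m}$. That route validates a confidence radius of order $\frac{1}{M}\sqrt{\big(\sum_m \frac{1}{N_{k,m}}\big)\cdot M\log(\frac{KM}{\delta})}$: the leading $\log(1/\delta)$ term gets multiplied by $M$, because you pay $\log(1/\delta')$ for each agent individually and the triangle inequality then assumes all $M$ independent errors attain their worst case simultaneously. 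The bound in Eqn.~\eqref{eqn:confidence} has only $\log(\frac{KT}{\delta})+4M\log\log(\frac{KT}{\delta})$ inside the square root — the $\log(1/\delta)$ part is \emph{not} scaled by $M$ — and this is exactly what produces the $\frac{\log(KT/\delta)}{M\Delta_k^2}$ term in Theorems~\ref{thm:expected_incentive} and \ref{thm:ucb_incentive} and hence the paper's headline "more agents make exploration almost free" conclusion. Your argument would only establish the lemma for a confidence bound larger by a factor of roughly $\sqrt{M}$, i.e., it proves a weaker statement than the one asserted.

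The paper avoids this loss by never bounding the agents' deviations individually. Instead it controls the self-normalized quadratic form $\sum_{m}N_{k,m}(t-1)\left(\mu_{k,m}-\hat{\mu}_{k,m}(t-1)\right)^2$ jointly (Lemma~\ref{lem:concentration}, following \citet{magureanu2014lipschitz} and \citet{perrault2020efficient}): this quantity behaves like a $\chi^2$ statistic with $M$ degrees of freedom and exceeds $\theta=\log(\frac{KT}{\delta})+4M\log\log(\frac{KT}{\delta})$ only with probability $\approx e^{-\theta}\cdot\mathrm{poly}(\theta,\log t)^M$, so the $\log(1/\delta)$ cost is paid once rather than $M$ times (the $M\log\log$ term is the price of the peeling over the random counts $N_{k,m}$, which is the part you correctly anticipated). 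The Cauchy--Schwarz step is then applied in the other direction, $\big(\sum_m(\mu_{k,m}-\hat{\mu}_{k,m})\big)^2\le\big(\sum_m\frac{1}{N_{k,m}}\big)\big(\sum_m N_{k,m}(\mu_{k,m}-\hat{\mu}_{k,m})^2\big)$, which converts the joint bound into the form of $CB_k$ without introducing the extra factor of $M$. Your identification of the anytime-concentration subtlety, the origin of the $\log\log$ term, and the union bound over $k$ and $t$ are all on target, but the per-agent decomposition is the wrong decomposition: the multivariate concentration on the aggregate is the essential idea of this lemma.
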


Conditioned on event $\Hc$, the required number of local pulls is characterized in the following lemma.
\begin{lemma}\label{lem:good_event}
    When event $\Hc$ happens, $\forall t\in [\frac{T}{2}+1,T]$, we have $k_*\in S(t)$, i.e., the optimal global arm would not be eliminated. Moreover, it suffices to eliminate arm $k\neq k_*$ at time $t$, i.e., $k\notin S(t)$, when
    \begin{equation*}
    \forall m\in[M], N_{k,m}(t-1), N_{k_*,m}(t-1)\geq \frac{16\log(KT/\delta)}{M\Delta_k^2}+\frac{64\log\log(KT/\delta)}{\Delta_k^2}.
    \end{equation*}
\end{lemma}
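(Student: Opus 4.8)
The plan is to work entirely on the event $\Hc$ and split the claim into two parts: (i) $k_*$ is never eliminated, and (ii) a sufficient condition on local pull counts that forces a sub-optimal arm $k$ to be eliminated. For part (i), I would argue by induction on $t\in[\frac{T}{2}+1,T]$. On $\Hc$ we have $\hat\mu_{k_*}(t-1)\geq \mu_{k_*}-CB_{k_*}(t-1)=\mu_*-CB_{k_*}(t-1)$, so
\begin{equation*}
\hat\mu_{k_*}(t-1)+CB_{k_*}(t-1)\;\geq\;\mu_*.
\end{equation*}
On the other hand, for any $j\in S(t-1)$ we have $\hat\mu_j(t-1)-CB_j(t-1)\leq \mu_j\leq \mu_*$ on $\Hc$, hence $\max_{j\in S(t-1)}\{\hat\mu_j(t-1)-CB_j(t-1)\}\leq \mu_*\leq \hat\mu_{k_*}(t-1)+CB_{k_*}(t-1)$, so the elimination rule in Eqn.~\eqref{eqn:elimination} retains $k_*$; since $k_*\in S(\frac{T}{2})=[K]$, induction gives $k_*\in S(t)$ for all $t$ in the incentivizing phase.

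For part (ii), fix a sub-optimal arm $k\neq k_*$ and a time $t$, and suppose that for every agent $m$ both $N_{k,m}(t-1)$ and $N_{k_*,m}(t-1)$ are at least $\beta_k := \frac{16\log(KT/\delta)}{M\Delta_k^2}+\frac{64\log\log(KT/\delta)}{\Delta_k^2}$. Plugging this lower bound into Eqn.~\eqref{eqn:confidence}, each term $1/N_{k,m}(t-1)\leq 1/\beta_k$, so $\sum_{m}1/N_{k,m}(t-1)\leq M/\beta_k$ and therefore
\begin{equation*}
CB_k(t-1)\;\leq\;\frac{1}{M}\sqrt{\frac{M}{\beta_k}\Big(\log(\tfrac{KT}{\delta})+4M\log\log(\tfrac{KT}{\delta})\Big)}\;=\;\frac{1}{\sqrt{M\beta_k}}\sqrt{\log(\tfrac{KT}{\delta})+4M\log\log(\tfrac{KT}{\delta})},
\end{equation*}
and the same bound holds for $CB_{k_*}(t-1)$. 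A short computation shows $M\beta_k = \frac{16\log(KT/\delta)}{\Delta_k^2}+\frac{64M\log\log(KT/\delta)}{\Delta_k^2}\geq \frac{16}{\Delta_k^2}\big(\log(\tfrac{KT}{\delta})+4M\log\log(\tfrac{KT}{\delta})\big)$, hence $CB_k(t-1)\leq \Delta_k/4$ and likewise $CB_{k_*}(t-1)\leq \Delta_k/4$. Now on $\Hc$,
\begin{equation*}
\hat\mu_k(t-1)+CB_k(t-1)\leq \mu_k+2CB_k(t-1)\leq \mu_k+\tfrac{\Delta_k}{2},\qquad
\hat\mu_{k_*}(t-1)-CB_{k_*}(t-1)\geq \mu_*-2CB_{k_*}(t-1)\geq \mu_*-\tfrac{\Delta_k}{2},
\end{equation*}
so $\hat\mu_{k_*}(t-1)-CB_{k_*}(t-1) - \big(\hat\mu_k(t-1)+CB_k(t-1)\big)\geq \mu_*-\mu_k-\Delta_k = 0$, with the inequality actually strict once the constants give a little slack. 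Since $k_*\in S(t-1)$ by part (i), the quantity $\max_{j\in S(t-1)}\{\hat\mu_j(t-1)-CB_j(t-1)\}$ is at least $\hat\mu_{k_*}(t-1)-CB_{k_*}(t-1)$, which (strictly) exceeds $\hat\mu_k(t-1)+CB_k(t-1)$, so $k\notin S(t)$ by Eqn.~\eqref{eqn:elimination}.

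I do not expect a single hard obstacle here; the delicate bookkeeping is matching the constants $16$ and $64$ so that both the $\log(KT/\delta)$ piece and the $4M\log\log(KT/\delta)$ piece of the bracket in Eqn.~\eqref{eqn:confidence} are individually dominated after the square root, which is exactly why the threshold in the lemma has two additive terms. One subtlety to state carefully is that the pull counts $N_{k,m}(t-1)$ that enter $CB_k$ must all be positive for Eqn.~\eqref{eqn:confidence} to be well-defined; this is where "$T$ sufficiently large'' is used, together with Eqn.~\eqref{eqn:asy_lower} (or the UCB lower bound of Lemma~\ref{lem:ucb_lower}) guaranteeing every arm is pulled at least once by the end of the observing phase, so the monotonicity $N_{k,m}(t-1)\geq N_{k,m}(\frac{T}{2})\geq 1$ can be invoked throughout the incentivizing phase.
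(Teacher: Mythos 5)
Your proof is correct and follows essentially the same route as the paper's: retain $k_*$ by sandwiching $\mu_*$ between $\hat\mu_{k_*}+CB_{k_*}$ and $\hat\mu_j-CB_j$ on $\Hc$, then show the pull-count threshold forces $CB_k, CB_{k_*}\leq \Delta_k/4$ so that the upper confidence bound of $k$ drops below the lower confidence bound of $k_*$. Your explicit computation of $M\beta_k$ and your remark on the strict-versus-weak inequality at the elimination boundary are the only additions; the paper's own proof glosses over the same boundary point.
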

With the decision process in Section~\ref{subsec:incentive} and the lower bound in Eqn.~\eqref{eqn:asy_lower}, Theorem~\ref{thm:expected_incentive} can be proved.  

\subsection{Proof Step 3: A Finite-time Lower Bound of UCB}\label{subsec:step3}
An important ingredient to prove Theorem~\ref{thm:ucb_incentive} is a finite-time high-probability lower bound for $\alpha$-UCB:
\begin{lemma}\label{lem:ucb_lower}
    When $\Lambda$ satisfies $\frac{\Lambda}{\log^2(\Lambda)}> \frac{4K(\alpha-3/2)^2}{\Delta_{\min,m}^4}$, the $\alpha$-UCB algorithm with $\alpha\geq \frac{3}{2}$ satisfies that
    \begin{equation}\label{eqn:ucb_lower}
         \Pb\bigg[\forall k\in [K],N^w_{k,m}(\Lambda)\geq \frac{(\sqrt{\alpha}-\sqrt{1.5})^2\log(\frac{\Lambda}{2})}{4\Delta_{k,m}^2}\bigg]\geq 1-\frac{2K}{\Lambda}.
    \end{equation} 
\end{lemma}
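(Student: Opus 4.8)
The plan is to establish a high-probability lower bound on the number of pulls of each arm by a single agent running $\alpha$-UCB in its local game, which amounts to showing that UCB \emph{must} explore each arm at least logarithmically often. The starting point is the following intuition: if some arm $k$ had been pulled too few times by round $\Lambda/2$, then its UCB index would still be very large (the exploration bonus $\sqrt{\alpha\log(t)/N_{k,m}(t-1)}$ blows up), and so it would have been selected, a contradiction. I would make this precise by first controlling the empirical means. Define the good event on which, for all arms $j$ and all relevant numbers of samples $s$, the empirical mean $\hat\mu_{j,m}$ from $s$ pulls is within $\sqrt{(\text{const})\log(\Lambda)/s}$ of $\mu_{j,m}$; by Hoeffding plus a union bound over the $K$ arms and the $O(\Lambda)$ possible sample counts, this event has probability at least $1-2K/\Lambda$ (matching the failure probability in the statement). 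All subsequent arguments are deterministic given this event.

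On the good event, I would argue as follows. Fix an arm $k$ and a horizon $\Lambda$ with $\Lambda/\log^2(\Lambda) > 4K(\alpha-3/2)^2/\Delta_{\min,m}^4$. Suppose toward contradiction that at some point $t \le \Lambda/2$ we have $N_{k,m}(t-1) < c\log(\Lambda/2)/\Delta_{k,m}^2$ with $c = (\sqrt\alpha-\sqrt{1.5})^2/4$, yet the algorithm selected some other arm $j \ne k$ at round $t$. Selection of $j$ means $\hat\mu_{j,m}(t-1) + \sqrt{\alpha\log(t)/N_{j,m}(t-1)} \ge \hat\mu_{k,m}(t-1) + \sqrt{\alpha\log(t)/N_{k,m}(t-1)}$. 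The key step is a pigeonhole/counting argument on $j$: since $\sum_{j} N_{j,m}(t-1) = t-1 \le \Lambda/2$ and there are few arms, not all arms can have been pulled more than, say, $\Lambda/(2K)$ times; more carefully, I would show that for the \emph{last} round before time $\Lambda/2$ at which some under-pulled arm $k$ fails to be selected, the selected arm $j$ must itself be under-pulled (else $k$ would have had the larger index), which forces a chain that cannot be sustained given the horizon bound. The cleanest route is probably: consider the first time $\tau$ each arm reaches $\lceil c\log(\Lambda/2)/\Delta_{k,m}^2\rceil$ pulls; if some arm never does by $\Lambda/2$, track the rounds where that arm is available but not chosen — on each such round the chosen arm gains a pull and also has a bounded index, and since $\sqrt{\alpha\log(t)/N_{k,m}} \ge \sqrt{\alpha\log(t)} \cdot \Delta_{k,m}/\sqrt{c\log(\Lambda/2)} \ge (\text{something})\cdot\Delta_{k,m}$ when $t$ is a constant fraction of $\Lambda$, the under-pulled arm's index exceeds $\mu_{k,m} + (\text{gap})$ while every competing arm's index is at most $\mu_{j,m} + \sqrt{\alpha\log\Lambda/N_{j,m}} \le \mu_{*,m} - \Delta_{j,m} + (\text{small})$, and the numbers are arranged (via the condition on $\Lambda/\log^2\Lambda$) so the under-pulled arm wins — contradiction. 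This yields $N_{k,m}(\Lambda/2) \ge c\log(\Lambda/2)/\Delta_{k,m}^2$, hence $N^w_{k,m}(\Lambda) \ge N^w_{k,m}(\Lambda/2) \ge$ the claimed quantity, since pull counts are monotone in time.

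The delicate points are: (i) getting the constants to line up so that $(\sqrt\alpha-\sqrt{1.5})^2/4$ appears — this comes from comparing $\sqrt\alpha$ (the exploration constant in UCB) against $\sqrt{1.5}$ (the Hoeffding confidence-width constant needed to control $\hat\mu_{j,m}$ on the good event), via $\sqrt{\alpha\log t / N} - \sqrt{1.5\log t/N} = (\sqrt\alpha-\sqrt{1.5})\sqrt{\log t/N}$, and then requiring $t \ge \Lambda/2$ and $\log(\Lambda/2)$ vs $\log t$ to be comparable, which costs another factor and produces the $4$ in the denominator; (ii) handling the $\log t$ vs $\log(\Lambda/2)$ discrepancy cleanly, for which I would restrict attention to rounds $t \in [\Lambda/4, \Lambda/2]$ (say) where $\log t \ge \log(\Lambda/4) \ge \frac12\log(\Lambda/2)$ for $\Lambda$ large, absorbing the factor into constants; and (iii) the condition $\frac{\Lambda}{\log^2\Lambda} > \frac{4K(\alpha-3/2)^2}{\Delta_{\min,m}^4}$, which I expect arises from requiring that the "free" exploration budget in $[\Lambda/4,\Lambda/2]$, roughly $\Lambda/4$ rounds, exceeds $K$ times the target per-arm count $c\log(\Lambda/2)/\Delta_{\min,m}^2$ — rearranging gives exactly a bound of the form $\Lambda/\log\Lambda \gtrsim K/\Delta_{\min,m}^2$, and the extra $\log\Lambda$ and the $(\alpha-3/2)$ (note $(\sqrt\alpha-\sqrt{1.5})^2 \le \alpha - 3/2$ is false in general; rather one uses $(\sqrt\alpha - \sqrt{1.5})(\sqrt\alpha+\sqrt{1.5}) = \alpha - 3/2$, so the relevant constant in the counting bound is controlled by $\alpha - 3/2$) come from the constant bookkeeping. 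The main obstacle I anticipate is step (i)–(iii) together: making the counting argument rigorous while threading all the constants so that the clean statement in Eqn.~\eqref{eqn:ucb_lower} emerges, rather than just \emph{some} logarithmic lower bound with unspecified constants; the combinatorial core (UCB cannot starve an arm) is standard in spirit but the explicit finite-time, high-probability, all-arms-simultaneously form with these exact constants requires care.
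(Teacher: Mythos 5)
Your plan matches the paper's proof in all essentials: a Hoeffding good event over a constant-fraction window of the horizon with failure probability $2K/\Lambda$, followed by a contradiction in which the under-pulled arm's UCB index stays above $\mu_{*,m}+\Delta_{k,m}$ (via the surplus $(\sqrt{\alpha}-\sqrt{3/2})\sqrt{\log(t)/N_{k,m}(t-1)}$) while any arm chosen instead must have an index at least as large and hence at most $O\bigl((\sqrt{\alpha}+\sqrt{3/2})^2\log(\Lambda)/\Delta_{k,m}^2\bigr)$ prior pulls, so a counting argument under the hypothesis on $\Lambda$ forces the under-pulled arm to be selected. The one organizational point worth noting — which answers your own uncertainty about the constants — is that the paper runs the count as a two-level pigeonhole, first splitting $[\frac{\Lambda}{2}+1,\frac{3\Lambda}{4}]$ into $F_{k,m}(\Lambda)$ blocks to locate a block in which arm $k$ is never pulled and then capping each other arm's pulls within that block by $N_{\max}$, so the condition on $\Lambda$ is the product of two per-arm counts (hence $\log^2(\Lambda)$, $\Delta_{\min,m}^4$, and $(\sqrt{\alpha}-\sqrt{1.5})^2(\sqrt{\alpha}+\sqrt{1.5})^2=(\alpha-3/2)^2$), whereas the single-window count you sketched would also close the argument.
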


We note that Eqn.~\eqref{eqn:ucb_lower} is valuable along multiple lines. First, it is a finite-time bound as opposed to the asymptotic one in  Eqn.~\eqref{eqn:asy_lower}. Second, while Eqn.~\eqref{eqn:asy_lower} holds in expectation, Eqn.~\eqref{eqn:ucb_lower} is a stronger high-probability bound, and implies a bound of the same order for expectation.  We believe that this result itself may contribute to the understanding of UCB. Specifically, this lemma characterizes UCB's conservativeness, i.e., it would (nearly) always explore \emph{every} arm at least logarithmic times.

\section{Experiments}\label{sec:exp}
Numerical experiments have been carried out to evaluate OTI. All the results are averaged over $100$ runs of horizon $T=10^5$ and the agents perform the $\alpha$-UCB algorithm specified in Section~\ref{subsec:main_result} with $\alpha=2$. More experimental details can be found in Appendix~\ref{supp:exp}.

\begin{figure*}[t]
	\setlength{\abovecaptionskip}{-1pt}
	\centering
	\subfigure[With or w/o incentives.]{ \includegraphics[width=0.235\linewidth]{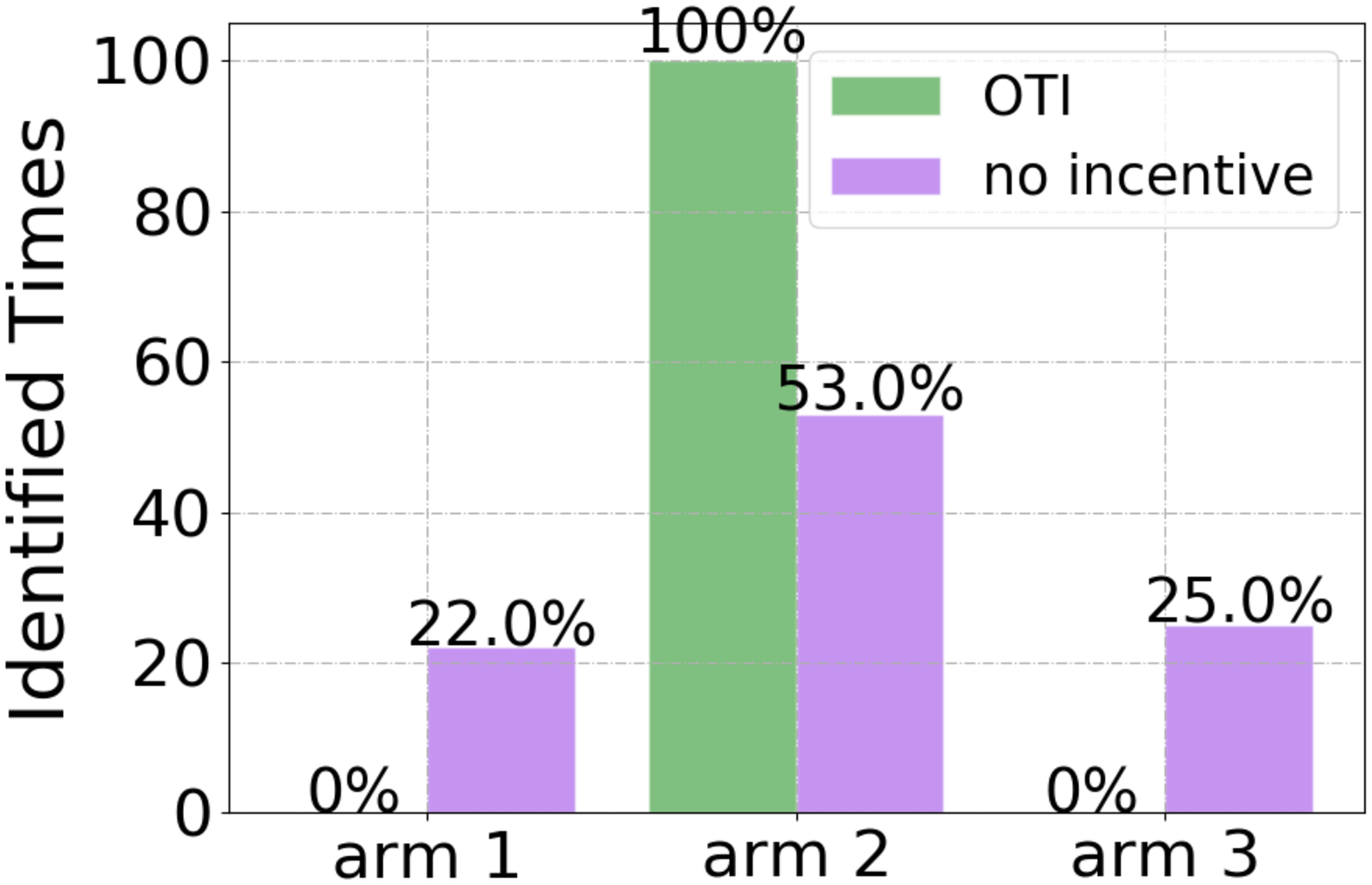}\label{fig:no_incentive}}
	\subfigure[Incentives assignment.]{ \includegraphics[width=0.235\linewidth]{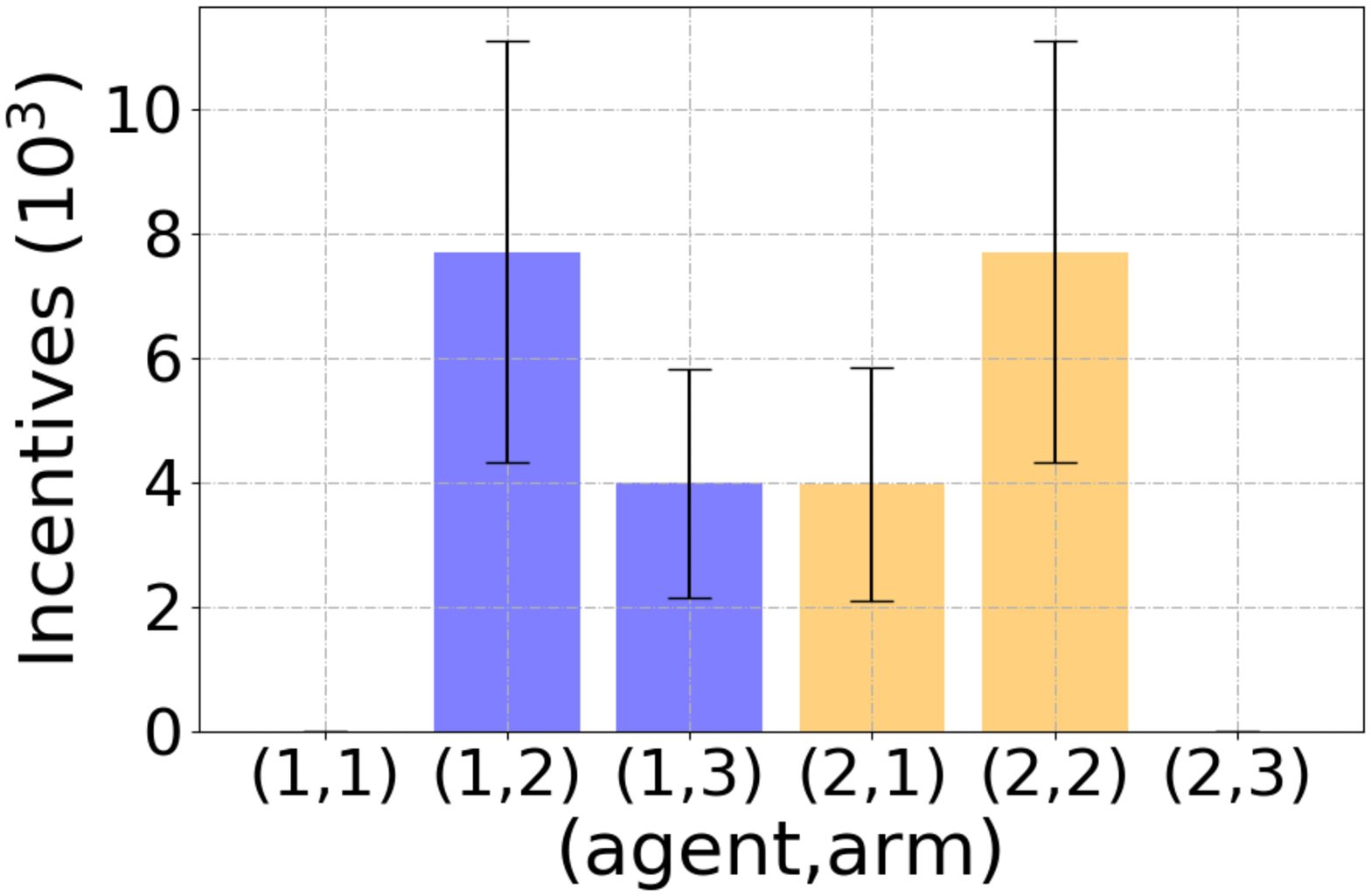}\label{fig:incentive_distribution}}
	\subfigure[Incentives v.s. $\delta$]{ \includegraphics[width=0.235\linewidth]{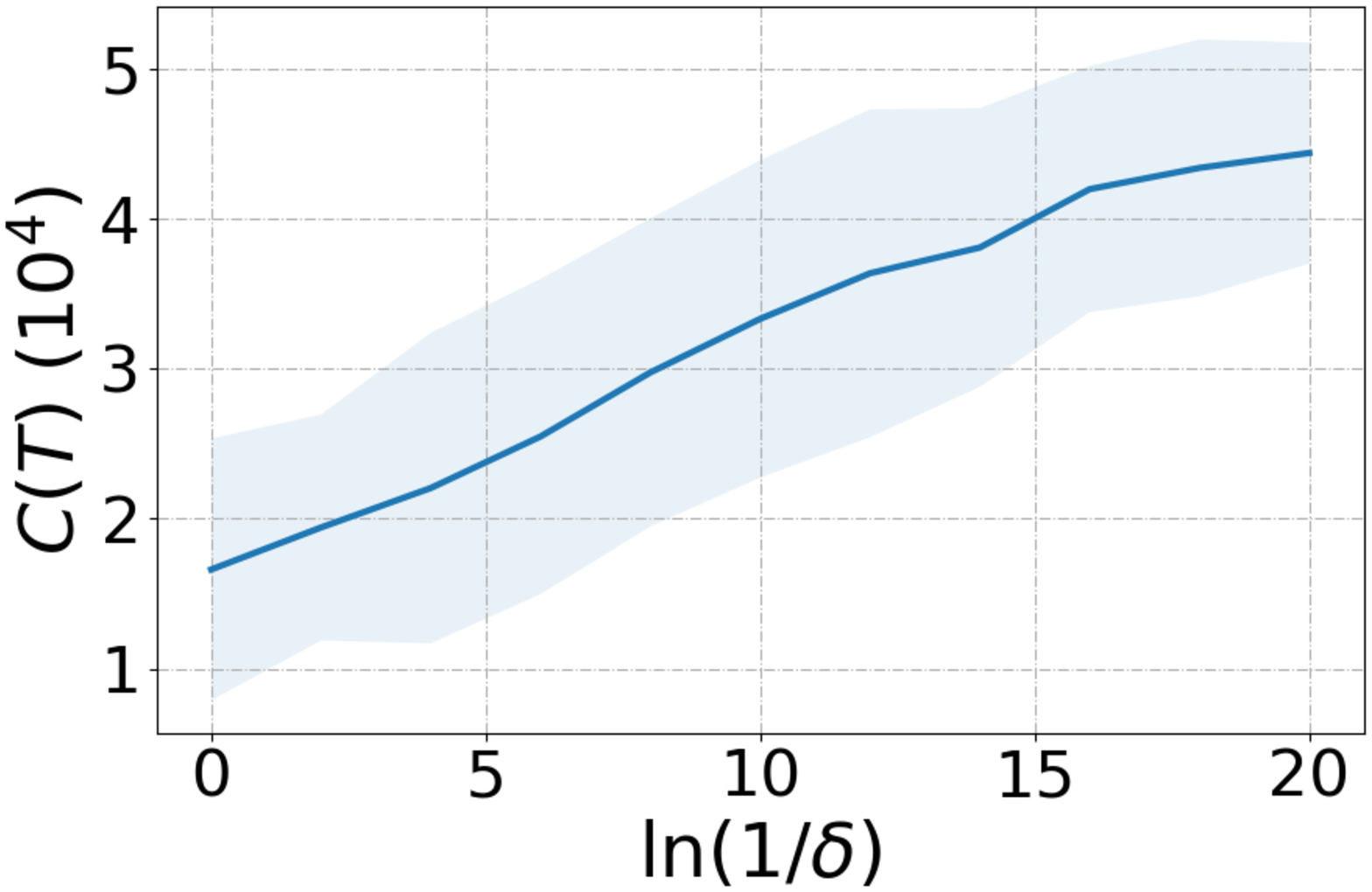}\label{fig:incentive_delta}}
	\subfigure[Incentives v.s. $M$]{ \includegraphics[width=0.235\linewidth]{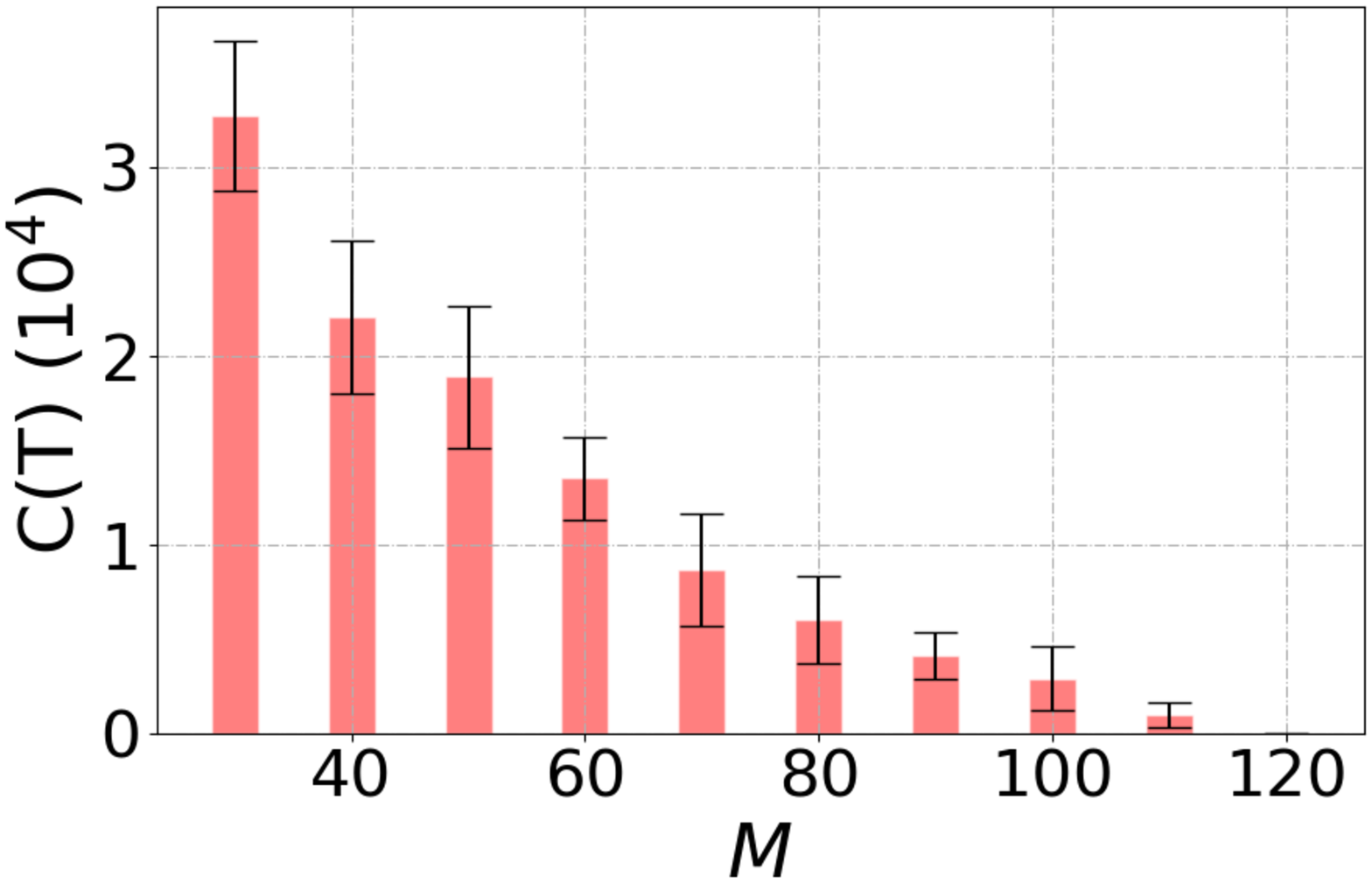}\label{fig:incentive_M}}
	\caption{Experimental results. (a)-(c) are performed under a $2$-agents-$3$-arms example while (d) is evaluated with random instances with $30$ arms and varying number of agents. (a) reports the identification accuracy with and w/o incentives, (b) the assignment of incentives, (c) the logarithmic dependence on $\delta$, and (d) the diminishing effect of cumulative incentives  with $M$ increasing.
	}
	\label{fig:performance}
	\vspace{-0.2in}
\end{figure*}

First, with a toy example of $M=2$ agents and $K=3$ arms, the ineffectiveness of not incentivizing is illustrated. Specifically, agent $1$'s expected rewards for the three arms are set as $[0.89, 0.47, 0.01]$ while agent $2$'s as $[0.01, 0.47, 0.89]$, which results in a global instance with expected rewards $[0.45, 0.47, 0.45]$.\footnote{Although being a toy example, the seemingly simple instance is actually hard in terms of a small global sub-optimality gap ($\Delta_{\min}=0.02$), large global-local divergences and a small number of involving agents.} Note that the optimal global arm is arm $2$, while the local optimal arm is arm $1$ (resp. $3$) for agent $1$ (resp. $2$), which raises the global-local conflicts. Without incentives, the principal can only output the arm with the largest aggregated global mean at the end of the horizon. As shown in Fig.~\ref{fig:no_incentive}, such a ``purely passive'' principal does not perform the identification well. Especially, she only outputs the correct arm (i.e., arm $2$) with $53\%$ accuracy. To make things worse, the principal has no control of this result, which may be even lower in other instances. 

As opposed to the poor performance without incentives, Fig.~\ref{fig:no_incentive} demonstrates that with incentives, OTI (using $\delta=0.01$) can always identify the optimal global arm. Correspondingly, Fig.~\ref{fig:incentive_distribution} presents the amount of incentives spent on each agent-arm pair. It can be observed that the principal never assigns incentives on the optimal arm of each agent, i.e., arm $1$ (resp. $3$) for agent $1$ (resp. $2$), which is intuitive since agents converge to these arms quickly and the ``free pulls'' on them is already sufficient. Furthermore, most of incentives are on arm $2$, which is because it is sub-optimal for both agents and lacks natural explorations. Moreover, OTI is tested with varying $\delta$ under the same $2$-agents-$3$-arms instance. Fig.~\ref{fig:incentive_delta} illustrates that the cumulative incentives of OTI are (nearly) proportional to $\log(1/\delta)$, which verifies the logarithmic dependence on $1/\delta$ in Eqns.~\eqref{eqn:expected_incentive} and \eqref{eqn:ucb_incentive} when $M$ is small.

At last, Fig~\ref{fig:incentive_M} reports the dependence of cumulative incentives on the number of agents. Under different $M$, random local instances with $30$ arms are generated to compose global instances with $\Delta_{\min}\in [4.5, 5.5]\times 10^{-3}$. As shown in Fig~\ref{fig:incentive_M}, the cumulative incentives (with $\delta = 0.01$) gradually diminish as $M$ increases. When more than $120$ agents are involved, the principal spends no incentive but can still learn the optimal global arm, which verifies our theoretical finding that with sufficiently many learning agents involved, the exploration process of the principal can be (almost) free.

\shir{
\section{Discussions and Future Works}\label{sec:future}
While progresses have been made in this work, some problems are worth future investigations.

{\bf Personalized Tasks.} This work focuses on identifying one common optimal global arm among the entire group of agents. This objective is well-motivated \citep{zhu2020federated} as only one arm can be selected for the collective interest in many applications. For example, due to the budget constraint, many companies must choose one out of multiple potential products for R\&D. However, we note that it is also an interesting direction to use global information for personalized tasks, e.g., recommend items to the customer based on both her own favor and the overall popularity. The idea of FMAB with personalization in \cite{shi2021federated2} is one potential direction, where weighted sums with the global and local model are set as the learning objectives.

{\bf Agent Sampling.} The principal in this work targets at learning the optimal global arm among the \emph{involved} group of agents, which is reasonable for many use cases. For example, cellular communication operators typically can access data from all users to find the optimal channel. Moreover, to provide products to its chain stores, the company can easily perform a market survey with all of them. 
Nevertheless, in some applications, the principal only has access to the feedback from a sampled population but nevertheless would like to learn the best arm for the underlying entire population. We believe our findings is an important step towards this more challenging setting, where certain probably approximately correct (PAC) learning style of analysis may be needed since one may have to bound the mismatch between the best arm for the involved group and that for the whole population as the number of involved agents grows.

{\bf Incentive-Provision Protocol.} As mentioned in Section~\ref{subsec:step1}, ``Take-or-Ban'' is for the purpose of rigorous theoretical analysis. Without this scheme, it is very challenging to rule out the possibility that some rational users may employ sophisticated strategies to intentionally reject an earlier incentive in order to induce higher future incentives. We note that it is intriguing to develop other agent regulation protocols or even directly analyze agents' behaviors without regulation. However, in reality, when facing less sophisticated users, we expect the insight revealed from our theoretical analysis still to be useful for less restrictive protocols, such as banning an agent for a few rounds or after a few (instead of one) refuses of the incentives (see Appendix~\ref{supp:exp} for some experimental illustrations). 

{\bf Other Extensions.} In Section~\ref{sec:theory}, a high-probability upper bounds of $C(T)$ is established for the agents who run $\alpha$-UCB. It is conceivable to extend the proof to other optimism-based algorithms, e.g., KL-UCB \citep{garivier2011kl}. However, it would be interesting to provide similar guarantees with agents running Thompson Sampling \citep{agrawal2012analysis} or $\epsilon$-greedy \citep{auer2002finite}. Furthermore, it is worth exploring how to extend the study to other bandit types.
}

\section{Conclusions}
In this work, we studied incentivized exploration with multiple long-term strategic agents. Motivated by practical applications, the formulated problem involves multiple heterogeneous agents aiming at collecting high cumulative local rewards and one principal trying to identify the optimal global arm but lacking direct accesses to the global model. The OTI algorithm was designed for the principal to intelligently leverage incentives to have local agents explore information on her behalf. Three key novel components played critical roles in the design and analysis of OTI: (1) a provably effective ``Take-or-Ban'' incentive-provision strategy to guarantee agents' behaviors; (2) a specifically designed best arm identification algorithm to aggregate local information of varying amounts provided by heterogeneous agents; (3) a high-probability lower-bound for UCB algorithms that proved its conservativeness. The regret analysis of OTI showed that the learning behaviors of strategic agents can provide ``free pulls'' to benefit the principal's exploration. Moreover, we observed that increasing the population of agents can also contribute to lower the burden of principal. At last, the key and somewhat surprising result was revealed that with sufficiently many learning agents involved, the exploration process of the principal can be (almost) free. 

\newpage

\bibliography{bandit,comb_bandit,incentive}

\begin{thebibliography}{}

\bibitem[Agrawal and Goyal, 2012]{agrawal2012analysis}
Agrawal, S. and Goyal, N. (2012).
\newblock Analysis of thompson sampling for the multi-armed bandit problem.
\newblock In {\em Conference on learning theory}, pages 39--1.

\bibitem[Audibert et~al., 2010]{audibert2010best}
Audibert, J.-Y., Bubeck, S., and Munos, R. (2010).
\newblock Best arm identification in multi-armed bandits.
\newblock In {\em COLT}, pages 41--53.

\bibitem[Auer et~al., 2002]{auer2002finite}
Auer, P., Cesa-Bianchi, N., and Fischer, P. (2002).
\newblock Finite-time analysis of the multiarmed bandit problem.
\newblock {\em Machine learning}, 47(2-3):235--256.

\bibitem[Bubeck and Cesa-Bianchi, 2012]{bubeck2012regret}
Bubeck, S. and Cesa-Bianchi, N. (2012).
\newblock Regret analysis of stochastic and nonstochastic multi-armed bandit
  problems.
\newblock {\em Foundations \& Trends in Machine Learning}.

\bibitem[Bubeck et~al., 2011]{bubeck2011pure}
Bubeck, S., Munos, R., and Stoltz, G. (2011).
\newblock Pure exploration in finitely-armed and continuous-armed bandits.
\newblock {\em Theoretical Computer Science}, 412(19):1832--1852.

\bibitem[Chen et~al., 2018]{chen2018incentivizing}
Chen, B., Frazier, P., and Kempe, D. (2018).
\newblock Incentivizing exploration by heterogeneous users.
\newblock In {\em Conference On Learning Theory}, pages 798--818. PMLR.

\bibitem[Combes et~al., 2015]{combes2015combinatorial}
Combes, R., Talebi Mazraeh~Shahi, M.~S., Proutiere, A., et~al. (2015).
\newblock Combinatorial bandits revisited.
\newblock {\em Advances in neural information processing systems},
  28:2116--2124.

\bibitem[Even-Dar et~al., 2002]{even2002pac}
Even-Dar, E., Mannor, S., and Mansour, Y. (2002).
\newblock Pac bounds for multi-armed bandit and markov decision processes.
\newblock In {\em International Conference on Computational Learning Theory},
  pages 255--270. Springer.

\bibitem[Frazier et~al., 2014]{frazier2014incentivizing}
Frazier, P., Kempe, D., Kleinberg, J., and Kleinberg, R. (2014).
\newblock Incentivizing exploration.
\newblock In {\em Proceedings of the fifteenth ACM conference on Economics and
  computation}, pages 5--22.

\bibitem[Gabillon et~al., 2012]{gabillon2012best}
Gabillon, V., Ghavamzadeh, M., and Lazaric, A. (2012).
\newblock Best arm identification: A unified approach to fixed budget and fixed
  confidence.
\newblock In {\em NIPS-Twenty-Sixth Annual Conference on Neural Information
  Processing Systems}.

\bibitem[Garivier and Capp{\'e}, 2011]{garivier2011kl}
Garivier, A. and Capp{\'e}, O. (2011).
\newblock The kl-ucb algorithm for bounded stochastic bandits and beyond.
\newblock In {\em Proceedings of the 24th annual conference on learning
  theory}, pages 359--376.

\bibitem[Garivier and Kaufmann, 2016]{garivier2016optimal}
Garivier, A. and Kaufmann, E. (2016).
\newblock Optimal best arm identification with fixed confidence.
\newblock In {\em Conference on Learning Theory}, pages 998--1027. PMLR.

\bibitem[Han et~al., 2015]{han2015incentivizing}
Han, L., Kempe, D., and Qiang, R. (2015).
\newblock Incentivizing exploration with heterogeneous value of money.
\newblock In {\em International Conference on Web and Internet Economics},
  pages 370--383. Springer.

\bibitem[Immorlica et~al., 2020]{immorlica2020incentivizing}
Immorlica, N., Mao, J., Slivkins, A., and Wu, Z.~S. (2020).
\newblock Incentivizing exploration with selective data disclosure.
\newblock In {\em Proceedings of the 21st ACM Conference on Economics and
  Computation}, pages 647--648.

\bibitem[Jamieson et~al., 2014]{jamieson2014lil}
Jamieson, K., Malloy, M., Nowak, R., and Bubeck, S. (2014).
\newblock lil’ucb: An optimal exploration algorithm for multi-armed bandits.
\newblock In {\em Conference on Learning Theory}, pages 423--439. PMLR.

\bibitem[Jamieson and Nowak, 2014]{jamieson2014best}
Jamieson, K. and Nowak, R. (2014).
\newblock Best-arm identification algorithms for multi-armed bandits in the
  fixed confidence setting.
\newblock In {\em 2014 48th Annual Conference on Information Sciences and
  Systems (CISS)}, pages 1--6. IEEE.

\bibitem[Karnin et~al., 2013]{karnin2013almost}
Karnin, Z., Koren, T., and Somekh, O. (2013).
\newblock Almost optimal exploration in multi-armed bandits.
\newblock In {\em International Conference on Machine Learning}, pages
  1238--1246. PMLR.

\bibitem[Kremer et~al., 2014]{kremer2014implementing}
Kremer, I., Mansour, Y., and Perry, M. (2014).
\newblock Implementing the “wisdom of the crowd”.
\newblock {\em Journal of Political Economy}, 122(5):988--1012.

\bibitem[Lai and Robbins, 1985]{Lai:1985}
Lai, T.~L. and Robbins, H. (1985).
\newblock Asymptotically efficient adaptive allocation rules.
\newblock {\em Adv. Appl. Math.}, 6(1):4--22.

\bibitem[Lattimore and Szepesv{\'a}ri, 2020]{lattimore2020bandit}
Lattimore, T. and Szepesv{\'a}ri, C. (2020).
\newblock {\em Bandit algorithms}.
\newblock Cambridge University Press.

\bibitem[Liu et~al., 2020]{liu2020incentivized}
Liu, Z., Wang, H., Shen, F., Liu, K., and Chen, L. (2020).
\newblock Incentivized exploration for multi-armed bandits under reward drift.
\newblock In {\em Proceedings of the AAAI Conference on Artificial
  Intelligence}, pages 4981--4988.

\bibitem[Magureanu et~al., 2014]{magureanu2014lipschitz}
Magureanu, S., Combes, R., and Proutiere, A. (2014).
\newblock Lipschitz bandits: Regret lower bound and optimal algorithms.
\newblock In {\em Conference on Learning Theory}, pages 975--999. PMLR.

\bibitem[Mansour et~al., 2015]{mansour2015bayesian}
Mansour, Y., Slivkins, A., and Syrgkanis, V. (2015).
\newblock Bayesian incentive-compatible bandit exploration.
\newblock In {\em Proceedings of the Sixteenth ACM Conference on Economics and
  Computation}, pages 565--582.

\bibitem[Mansour et~al., 2016]{mansour2016bayesian}
Mansour, Y., Slivkins, A., Syrgkanis, V., and Wu, Z.~S. (2016).
\newblock Bayesian exploration: Incentivizing exploration in bayesian games.
\newblock In {\em Proceedings of the 2016 ACM Conference on Economics and
  Computation}, pages 661--661.

\bibitem[Perrault, 2020]{perrault2020efficient}
Perrault, P. (2020).
\newblock {\em Efficient Learning in Stochastic Combinatorial Semi-Bandits}.
\newblock PhD thesis, Univerist{\'e} Paris-Saclay.

\bibitem[Rangi et~al., 2021]{rangi2021secure}
Rangi, A., Tran-Thanh, L., Xu, H., and Franceschetti, M. (2021).
\newblock Secure-ucb: Saving stochastic bandits from poisoning attacks via
  limited data verification.
\newblock {\em arXiv preprint arXiv:2102.07711}.

\bibitem[Sellke and Slivkins, 2020]{sellke2020sample}
Sellke, M. and Slivkins, A. (2020).
\newblock Sample complexity of incentivized exploration.
\newblock {\em arXiv preprint arXiv:2002.00558}.

\bibitem[Shi and Shen, 2021]{shi2021federated}
Shi, C. and Shen, C. (2021).
\newblock Federated multi-armed bandits.
\newblock In {\em Proceedings of the 35th AAAI Conference on Artificial
  Intelligence (AAAI)}.

\bibitem[Shi et~al., 2021]{shi2021federated2}
Shi, C., Shen, C., and Yang, J. (2021).
\newblock Federated multi-armed bandits with personalization.
\newblock In {\em International Conference on Artificial Intelligence and
  Statistics}, pages 2917--2925. PMLR.

\bibitem[Wang et~al., 2021]{wang2021incentivizing}
Wang, H., Xu, H., Li, C., Liu, Z., and Wang, H. (2021).
\newblock Incentivizing exploration in linear bandits under information gap.
\newblock {\em arXiv preprint arXiv:2104.03860}.

\bibitem[Wang and Huang, 2018]{wang2018multi}
Wang, S. and Huang, L. (2018).
\newblock Multi-armed bandits with compensation.
\newblock In {\em Proceedings of the 32nd International Conference on Neural
  Information Processing Systems}, pages 5119--5128.

\bibitem[Zhu et~al., 2021]{zhu2020federated}
Zhu, Z., Zhu, J., Liu, J., and Liu, Y. (2021).
\newblock Federated bandit: A gossiping approach.
\newblock {\em Proceedings of the ACM on Measurement and Analysis of Computing
  Systems}, 5(1):1--29.

\end{thebibliography}
\bibliographystyle{apalike}


\newpage
\appendix

\section{Discussions of Duration of the Observing Phase}\label{supp:dissucsion}
In Section~\ref{subsec:observation}, the duration of the observing phase is specified as $\kappa(T)=\frac{T}{2}$, and we here discuss the influence of this choice and other available choices. On one hand, intuitively, if the observing phase lasts longer, more ``free pulls'' can be leveraged by the principal. On the other hand, there should be sufficient time reserved for adaptive adjustments, i.e., the incentivizing phase; otherwise, the principal cannot guarantee the success of best arm identification. It thus requires a careful trade-off between more ``free pulls'' and sufficient adaptation.

If the principal knows the parameter $\Delta_{\min}$ of the global game, i.e., the global sub-optimality gap, with Theorems~\ref{thm:expected_incentive} and \ref{thm:ucb_incentive}, she can specify $\kappa(T) = \kappa_o(T):=T-  \frac{16K\log(\frac{KT}{\delta})}{\Delta_{\min}^2}+\frac{64MK\log\log(\frac{KT}{\delta})}{\Delta_{\min}^2} = T- O(\log(T))$, which is an upper bound of the required number of pulls on local arms without incentives.  However, it is often impossible for principal to have such information. Since best arm identification is the primal task of the principal, we choose to specify $T - \kappa(T)$  to be $\omega(\log(T))$, i.e., with an order higher than $\log(T)$, to guarantee sufficient times are left for the incentivizing phase, and the adopted $\kappa(T) = \frac{T}{2}$ is an exemplary choice among others, e.g., $\frac{T}{4}, \sqrt{T}$. As shown in the later proofs for Theorems~\ref{thm:expected_incentive} and \ref{thm:ucb_incentive}, the amount of free pulls is of order $O(\log(\kappa(T)))$. Thus, while these choices ($\kappa_o(T), \frac{T}{2}, \frac{T}{4}, \sqrt{T}$, etc) seemingly distinct with each other, the amount of free pulls they provide does not differ much. 

In practical applications, it is also conceivable to perform estimation of $\Delta_{\min}$ during the game with $\hat{\mu}_k(t)$, i.e., $\hat{\Delta}_{\min}(t)$, and use the estimation to determine $\kappa(T)$. However, it is difficult to provide a rigorous theoretical analysis for such an adaptive approach.

\section{Proof of Lemma~\ref{lem:participate}}\label{supp:proof_start}
\begin{lemma}[Restatement of  Lemma~\ref{lem:participate}]
Under the  ``Take-or-Ban'' incentive-provision protocol, following incentives, whenever offered, is optimal in terms of the expected cumulative rewards (compared to not following) for every agent.
\end{lemma}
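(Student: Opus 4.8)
The plan is to argue by an exchange-type argument that for any agent $m$ and any strategy $\sigma_m$ that sometimes \emph{rejects} an offered incentive, there is a strategy $\sigma_m'$ that \emph{accepts} that incentive and does at least as well in expected cumulative reward. Fix a time step $s$ at which the agent is offered incentive on arm $k$ (i.e.\ $I_{k,m}(s)=1$) and is not yet banned. Consider the first such $s$ along the sample path at which $\sigma_m$ rejects. Compare two continuations from the information state at time $s$: (i) \textbf{Reject}: the agent plays some arm $j\ne k$ at time $s$ under $\sigma_m$, gets reward $X_{j,m}(s)\in[0,1]$ and no bonus, and is thereafter \emph{banned}, so for all $t>s$ it collects only $X_{\pi_m(t),m}(t)\le 1$ per round with no further incentives; (ii) \textbf{Accept}: the agent plays $k$ at time $s$, gets $X_{k,m}(s)+1$, and thereafter can \emph{mimic} whatever $\sigma_m$ would have done after the rejection (this is feasible because the post-$s$ randomness is independent of the action at $s$, and accepting keeps the agent unbanned, which only \emph{enlarges} the set of available continuations).

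The key quantitative step is then a simple telescoping/coupling bound on the horizon $[s+1,T]$. Under (ii) the agent can replicate the arm choices of (i) step by step; since the reward distributions $X_{\cdot,m}(t)$ are the same in both continuations and the only difference is whether a bonus $I_{k,m}(t)\in\{0,1\}$ is ever added, the accept-continuation's cumulative reward on $[s+1,T]$ stochastically dominates (indeed equals, under the mimicking coupling) that of the reject-continuation; and nothing else is lost. At time $s$ itself, accepting yields $X_{k,m}(s)+1$ while rejecting yields $X_{j,m}(s)$, and since both lie in $[0,1]$ we have $X_{k,m}(s)+1 \ge 1 \ge X_{j,m}(s)$, a gain of at least $1 - (X_{j,m}(s)-X_{k,m}(s)) \ge 0$ in expectation (in fact the gap is at least $\Eb[1 - X_{j,m}(s) + X_{k,m}(s)]>0$ generically). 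Hence $\Eb[R_m(T)\mid \text{accept at }s] \ge \Eb[R_m(T)\mid \text{reject at }s]$, so deviating to accept at the first rejection is weakly (strictly) better. Iterating this over all potential rejection times — or, more cleanly, taking $\sigma_m^*$ to be the strategy that accepts \emph{every} offered incentive and running the exchange argument to show it dominates any $\sigma_m$ — establishes that always following incentives is optimal.

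The technical care needed is mostly in (a) handling \emph{randomized} and history-dependent agent strategies, which I would address by conditioning on the realized history up to $s$ and on all exogenous reward draws, so the comparison reduces to the deterministic per-path inequality above, then taking expectations; and (b) making precise the ``mimicking'' claim, i.e.\ that the accept-continuation's feasible strategy set contains a copy of the reject-continuation's behavior — this is exactly where the ``Ban'' clause does the work, since it guarantees that rejecting \emph{strictly shrinks} future opportunities (bonuses become permanently zero) while accepting leaves them intact, so there is never an instrumental reason to reject. I expect the main obstacle to be purely expository: stating the coupling between the two continuations cleanly enough that the stochastic-dominance step is rigorous for arbitrary (possibly adaptive, possibly non-consistent) agent algorithms, rather than any genuine mathematical difficulty — the one-line reward inequality $X_{k,m}(s)+1\ge X_{j,m}(s)$ together with ``banning only hurts'' is the whole content.
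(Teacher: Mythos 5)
Your high-level idea matches the paper's: an exchange argument showing that accepting dominates rejecting, powered by the two facts that the bonus of $1$ weakly exceeds any in-game reward $X\in[0,1]$ and that the ``Ban'' clause means rejecting can only shrink future opportunities. However, the step you flag as ``purely expository'' --- the mimicking coupling on $[s+1,T]$ --- contains a genuine gap. The accept-continuation cannot replicate the reject-continuation ``step by step'' as an implementable strategy: at time $s$ the rejecting agent plays $j\neq k$ and observes $X_{j,m}(s)$, and her subsequent adaptive decisions depend on that observation, which the accepting agent (who played $k$ instead) never sees. The same problem recurs at every later incentivized round where the two paths diverge. So the claimed pathwise equality of rewards on $[s+1,T]$ is not available; at best you can hope for equality in distribution, and even that requires a careful construction. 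A secondary issue is that a single exchange at the first rejection produces a strategy that may still reject later, so the ``iterate over all rejection times'' step needs its own argument (the limit of the iteration must still be a well-defined, implementable, always-accepting strategy).

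The paper closes exactly this gap with a modified-history device: the always-accepting strategy $\Pi'_m$ runs the original $\Pi_m$ on a history $H'_m$ from which all observations gathered at incentivized-but-would-have-been-rejected rounds are \emph{deleted}. This makes $\Pi'_m$ implementable (it never needs a counterfactual reward), handles all rejections in one shot, and yields a clean distributional identity: the non-incentivized plays of $\Pi'_m$ on $[s,T]\setminus\tau_m^{s,T}$ are distributed as the plays of $\Pi_m$ on the shortened horizon $[s,T-|\tau_m^{s,T}|]$. The accounting is then slightly different from yours: rather than a per-step comparison at time $s$, one bounds $\Pi_m$'s reward on its displaced final $|\tau_m^{s,T}|$ slots by $|\tau_m^{s,T}|$ (since each $X\le 1$) and covers it with the $|\tau_m^{s,T}|$ bonuses that $\Pi'_m$ collects. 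Your one-line inequality $X_{k,m}(s)+1\ge X_{j,m}(s)$ is indeed the engine of the proof, but the distributional time-shift construction is the missing ingredient that makes it rigorous for arbitrary adaptive strategies; also note the lemma only needs weak optimality, so the ``strictly better generically'' claim is unnecessary.
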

\begin{proof}
We fix an arbitrary agent $m$. Her local action $\pi_m(t)$ is made with the history $H_m(t) := \{\pi_m(\tau), X_{\pi_m(\tau),m}(\tau), I_{m}(\tau)|{\tau\leq t-1}\}$ and the current incentives $I_m(t)$, where $I_m(t): = \{I_{k,m}(t)|\forall k \in [K]\}$. Thus, we can write $\pi_m(t) = \Pi_m(H_m(t),I_m(t))$, where $\Pi_m$ is the strategy that maps the history and current incentives to actions.
    
 Key to our proof is to argue that if strategy $\Pi_m$ does not always follow the incentives, then another strategy $\Pi'_m$ which always follows the incentives whenever offered will do better in expectation. Formally,  $\Pi'_m$ is defined as follows based on a modified history $H'_m(t)$ and current incentive $I_m(t)$:
    \begin{itemize}
        \item When there is no incentive, strategy $\Pi'_m$ follows the decisions from $\Pi_m$ using the modified history $H'_m(t)$ and the observations are added to the modified history. Formally, if $\forall k\in[K], I_{k,m}(t)= 0$, then $\pi'_m(t)  = \Pi'_m(H_m(t), I_m(t)) = \Pi_m(H'_m(t), I_m(t))$ and observations $\{\pi'_m(t), X_{\pi'_m(t),m}(t),I_m(t)\}$ are added to $H'_m(t+1)$.  
        \item When there are incentives and $\Pi_m$ follows the incentive, $\Pi_m'$ also takes the incentive and adds observations to the modified history. Formally,  if there exists $k\in [K]$ such that $I_{k,m}(t)=1$ while $\Pi(H'_m(t), I_m(t))=k$, then $\pi'_m(t)  = \Pi'_m(H_m(t), I_m(t))=k$ and observations $\{\pi'_m(t), X_{\pi'_m(t),m}(t),I_m(t)\}$ are added to $H'_m(t+1)$. 
        \item When there are incentives but $\Pi_m$ does not take them, $\Pi_m'$ always takes the incentive, but importantly \emph{does not add observations to the modified history}. Formally, if there exists $k\in [K]$ such that $I_{k,m}(t)=1$ but $\Pi_m(H'_m(t), I_m(t))\neq k$, then $\pi'_m(t)  = \Pi'_m(H_m(t), I_m(t))=k$ and no changes are made to the modified history, i.e., $H'_m(t+1)=H'_m(t)$. 
    \end{itemize}
    
    If strategy $\Pi_m$ does not always take the incentives, there must be a time step $s$, $\exists k\in[K], I_{k,m}(s)=1$ but $\Pi_m(H_m(s))\neq k$.  After time $s$, the agent is banned from taking incentives any more. The expected cumulative reward of $\pi$ can thus be decomposed as
    \begin{equation*}
        \Eb[R^{\Pi_m}_m(T)] = \Eb\left[\sum_{t=1}^{s-1} (X_{\pi_m(t),m}(t)+I_{\pi_m(t),m}(t))+\sum_{t=s}^TX_{\pi_m(t),m}(t)\right].
    \end{equation*}

    With strategy $\Pi'_m$, for time step $t<s$, $H'_m(t), \pi_m'(t)$ are the same as $H_m(t),\pi_m(t)$. Thus, the cumulative reward of the designed $\Pi'_m$ can also be decomposed as
    \begin{align*}
        & \Eb[R^{\Pi_m'}_m(T)] \\
        = &\Eb\left[\sum_{t=1}^{T} (X_{\pi'_m(t),m}(t)+I_{\pi'_m(t),m}(t))\right]\\
        =  &\Eb\left[\sum_{t=1}^{s-1}(X_{\pi'_m(t),m}(t)+I_{\pi'_m(t),m}(t))+\sum_{t=s}^T(X_{\pi'_m(t),m}(t)+I_{\pi'_m(t),m}(t))\right]\\
        =&  \Eb\left[\sum_{t=1}^{s-1}(X_{\pi_m(t),m}(t)+I_{\pi_m(t),m}(t))+\sum_{t=s}^T(X_{\pi'_m(t),m}(t)+I_{\pi'_m(t),m}(t))\right]\\
        \geq &\Eb\left[\sum_{t=1}^{s-1}(X_{\pi_m(t),m}(t)+I_{\pi_m(t),m}(t))+\sum_{t\in[s,T]/\tau_m^{s,T}}X_{\pi'_m(t),m}(t)+|\tau_m^{s,T}|\right]
    \end{align*}
    where $\tau^{s,T}_m$ denotes the set of time slots that principal provides incentives in time interval $[s,T]$, i.e., $\tau_m^{s,T} = \{t\in[s,T]|\exists k\in[K], I_{k,m}(t)=1\}$.
    
    Since the observation from incentives are not counted in $H'_m(t)$, the distribution of $\{H'_m(t)|{t\in [s,T]/\tau_m^{s,T}}\}$ is the same as the distribution of $\{H_m(t)|{t\in [s,T-|\tau_m^{s,T}|]}\}$, which further means the distribution of $\{\pi'_m(t)|{t\in [s,T]/\tau_m^{s,T}}\}$ is the same with $\{\pi_m(t)|{t\in [s,T-|\tau_m^{s,T}|]}\}$. Thus, we can get
    \begin{align*}
         \Eb\left[\sum_{t\in[s,T]/\tau_m^{s,T}}X_{\pi'_m(t),m}(t)\right] =\Eb\left[\sum_{t\in[s,T-|\tau_m^{s,T}|]}X_{\pi_m(t),m}(t)\right].
    \end{align*}
    
    With this result, it holds that
    \begin{align*}
        & \Eb[R^{\Pi_m}_m(T)] \\
        = &\Eb\left[\sum_{t=1}^{s-1} (X_{\pi_m(t),m}(t)+I_{\pi_m(t),m}(t))+\sum_{t=s}^TX_{\pi_m(t),m}(t)\right]\\
        = &\Eb\left[\sum_{t=1}^{s-1} (X_{\pi_m(t),m}(t)+I_{\pi_m(t),m}(t))+\sum_{t=s}^{T-|\tau_m^{s,T}|}X_{\pi_m(t),m}(t)+\sum_{t=T-|\tau_m^{s,T}|+1}^TX_{\pi_m(t),m}(t)\right]\\
        \leq&  \Eb\left[\sum_{t=1}^{s-1}(X_{\pi_m(t),m}(t)+I_{\pi_m(t),m}(t))+\sum_{t\in[s,T]/\tau_m^{s,T}}X_{\pi'_m(t),m}(t)+|\tau_m^{s,T}|\right]\\
        \leq &  \Eb[R^{\Pi_m'}_m(T)].
    \end{align*}

    Thus, strategy $\Pi'_m$ always follows incentives and provides at least the same expected cumulative rewards as strategy $\Pi_m$, which does not always follow incentives. The lemma is thus proved.
\end{proof}
    

\section{Proof of Lemma~\ref{lem:confidence}}
\begin{lemma}[Restatement of Lemma~\ref{lem:confidence}]
    Denote 
    \begin{equation*}
        \Hc:=\left\{\forall t\in\left[\frac{T}{2}+1,T\right], \forall k\in [K], \left|\hat{\mu}_{k}(t-1)-\mu_k\right|\leq CB_k(t-1)\right\}.
    \end{equation*}
    When the horizon $T$ is sufficiently large, it holds that $\Pb(\mathcal{H})\geq 1-\delta$.
\end{lemma}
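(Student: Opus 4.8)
The plan is to show that the "bad" event $\Hc^c$ has probability at most $\delta$ by a union bound over arms $k\in[K]$ and over the confidence-bound radii realized during the incentivizing phase, reducing everything to a concentration inequality for a weighted average of bounded independent random variables with data-dependent sample sizes. First I would observe that the estimator $\hat\mu_k(t-1) = \frac{1}{M}\sum_{m\in[M]}\hat\mu_{k,m}(t-1)$ is an average over agents of empirical means, where agent $m$'s empirical mean is built from $N_{k,m}(t-1)$ i.i.d.\ samples in $[0,1]$ with mean $\mu_{k,m}$; since $\mu_k=\frac1M\sum_m\mu_{k,m}$, the deviation $\hat\mu_k(t-1)-\mu_k$ is a centered average of $M$ independent (across agents) centered empirical means. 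The key point is that the effective ``variance proxy'' of this quantity is $\frac{1}{M^2}\sum_{m}\frac{1}{N_{k,m}(t-1)}$, which is precisely the quantity appearing inside the square root in Eqn.~\eqref{eqn:confidence}. This is where the CMAB-style argument of \citet{combes2015combinatorial} enters: one needs a deviation bound that is uniform over all possible values of the random counts $(N_{k,m})_{m\in[M]}$, not just a fixed one.

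The main technical step is therefore a \emph{peeling} (or ``slicing'') argument over the counts. For each agent $m$, $N_{k,m}(t-1)$ ranges over $\{1,\dots,T\}$; I would either union-bound directly over the at most $T^M$ tuples of counts (too lossy) or, better, use the standard trick of controlling $\sup_{n\ge 1}\big(\hat\mu_{k,m}^{(n)}-\mu_{k,m}\big)$ via a geometric grid on $n$, which is exactly what introduces the $\log\log$ correction term $4M\log\log(KT/\delta)$ in Eqn.~\eqref{eqn:confidence} (one $\log\log$ factor per agent, hence the factor $M$). Concretely, I would invoke a maximal Hoeffding/Azuma-type inequality: with probability at least $1-\delta'$, simultaneously for all $n_1,\dots,n_M\ge 1$,
\begin{equation*}
\Big|\tfrac1M\textstyle\sum_{m}(\hat\mu_{k,m}^{(n_m)}-\mu_{k,m})\Big|\le \tfrac1M\sqrt{\big(\textstyle\sum_m \tfrac{1}{n_m}\big)\big(\log(1/\delta')+4M\log\log(1/\delta')\big)},
\end{equation*}
then set $\delta' = \delta/(KT)$ and union-bound over $k\in[K]$ and over the $T/2$ time steps $t\in[\frac T2+1,T]$ (the extra factor $T$ is absorbed into $\delta'=\delta/(KT)$, and ``$T$ sufficiently large'' is needed so that the $\log\log(KT/\delta)$ term genuinely dominates the lower-order slack from the peeling and so the grid resolution is fine enough). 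Substituting the realized counts $n_m=N_{k,m}(t-1)$ then yields exactly $|\hat\mu_k(t-1)-\mu_k|\le CB_k(t-1)$ on this high-probability event, which is $\Hc$.

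The hard part will be getting the peeling argument to produce precisely the stated constants — in particular justifying that a single $\log\log(KT/\delta)$ per agent (total $4M\log\log(KT/\delta)$) suffices to make the bound uniform over all count-tuples, rather than something like $\log\log$ of the product of counts. The cleanest route is to handle each agent's empirical process separately with its own $\log\log n_m$ slack, bound each $\log\log n_m\le \log\log(KT/\delta)$ crudely (valid since $n_m\le T$), and then combine the $M$ per-agent bounds; because the combination is inside a square root of a sum, the $M$ individual slacks add to $4M\log\log(KT/\delta)$ as written. I would also need to be slightly careful that the counts $N_{k,m}(t-1)$ are themselves random and possibly correlated with the samples, which is exactly why the \emph{uniform-over-$n$} formulation is essential — it makes the bound hold for the (random) realized counts by a deterministic substitution, sidestepping any adaptivity issue. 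A routine check that all invoked inequalities hold for bounded $[0,1]$ rewards and that ``$T$ sufficiently large'' covers the finitely many edge cases (e.g.\ $\log\log$ of small arguments) completes the argument.
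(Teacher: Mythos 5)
Your setup is right — the bad event must be controlled uniformly over the random count tuples, the variance proxy is $\frac{1}{M^2}\sum_m\frac{1}{N_{k,m}}$, and the target inequality you write down (with $\log(1/\delta')+4M\log\log(1/\delta')$ inside the square root) is exactly what is needed. But the concrete route you propose for proving it — handle each agent's empirical process separately with its own deviation bound and then combine the $M$ per-agent bounds under the square root — does not yield the stated form. If agent $m$ individually satisfies $|\hat{\mu}_{k,m}-\mu_{k,m}|\le\sqrt{\theta_m/n_m}$ with $\theta_m=\log(1/\delta'_m)+O(\log\log)$, then the triangle inequality plus Cauchy--Schwarz gives
\begin{equation*}
\Big|\tfrac1M\textstyle\sum_m(\hat{\mu}_{k,m}-\mu_{k,m})\Big|\le \tfrac1M\textstyle\sum_m\sqrt{\theta_m/n_m}\le \tfrac1M\sqrt{M\big(\textstyle\sum_m 1/n_m\big)\max_m\theta_m},
\end{equation*}
so the \emph{leading} term inside the square root becomes $M\log(KT/\delta)$, not $\log(KT/\delta)$. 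It is not only the $\log\log$ slacks that add up across agents; the $\log(1/\delta')$ terms add up too. This is a $\sqrt{M}$ loss in the confidence radius, and it is fatal to the point of the lemma: the $1/M$ scaling of the leading term in Theorems~\ref{thm:expected_incentive} and \ref{thm:ucb_incentive} (the ``exploration is almost free for large $M$'' message) relies precisely on the $\log(KT/\delta)$ term \emph{not} carrying a factor of $M$.

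The paper avoids this by never decomposing per agent at the level of the final deviation. It instead controls the joint self-normalized quadratic form $\sum_m N_{k,m}(t-1)(\mu_{k,m}-\hat{\mu}_{k,m}(t-1))^2\le\theta$ with a single threshold $\theta=\log(\frac{KT}{\delta})+4M\log\log(\frac{KT}{\delta})$: the $M$-dimensional peeling over count cells contributes a prefactor of order $(\theta\log t)^M$ (absorbed by the $4M\log\log$ term), while the exponential rate stays $e^{-\theta}$ rather than $e^{-\theta/M}$ because the per-coordinate Chernoff tails are combined via a multivariate tail lemma (Lemma 8 of Magureanu et al.) rather than a union bound over agents. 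Cauchy--Schwarz then converts this quadratic event into exactly $|\hat{\mu}_k-\mu_k|\le\frac1M\sqrt{(\sum_m 1/N_{k,m})\theta}$, preserving the single $\log(1/\delta)$. Your plan is salvageable if you replace the per-agent combination by a \emph{joint} Hoeffding bound on the weighted sum for each fixed count tuple (whose variance proxy is $\frac{1}{4M^2}\sum_m 1/n_m$, giving tail $e^{-2\theta}$ with no $M$ in the exponent) followed by the union over the $O((\theta\log T)^M)$ grid cells — but you would still need to handle uniformity of the deviation within each grid cell, which is the role played by the Chernoff construction with the cell-dependent $\varepsilon_m$ in the paper's Lemma~\ref{lem:partition}.
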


\begin{proof}


Using the Cauchy-Shwarz inequality, we have
\begin{align*}
    &\sum_{m\in[M]}N_{k,m}(t-1)\left(\mu_{k,m}-\hat{\mu}_{k,m}(t-1)\right)^2\leq\theta\\
    \Rightarrow & \sum_{m\in[M]}\frac{1}{N_{k,m}(t-1)}\sum_{m\in[M]}N_{k,m}(t-1)(\mu_{k,m}-\hat{\mu}_{k,m}(t-1))^2\leq \sum_{m\in[M]}\frac{\theta}{N_{k,m}(t-1)}\\
    \Rightarrow & \left(\sum_{m\in[M]}(\mu_{k,m}-\hat{\mu}_{k,m}(t-1))\right)^2\leq \sum_{m\in[M]}\frac{\theta}{N_{k,m}(t-1)}\\
    \Rightarrow & \frac{1}{M}\left|\sum_{m\in[M]}(\mu_{k,m}-\hat{\mu}_{k,m}(t-1))\right|\leq \frac{1}{M} \sqrt{\sum_{m\in[M]}\frac{\theta}{N_{k,m}(t-1)}}\\
    \Rightarrow & \left|\hat{\mu}_{k}(t-1)-\mu_k\right|\leq \frac{1}{M} \sqrt{\sum_{m\in[M]}\frac{\theta}{N_{k,m}(t-1)}}.
\end{align*}

Now with the critical concentration inequality given by Lemma~\ref{lem:concentration} presented in the following, and $\theta = \log\left(\frac{KT}{\delta}\right)+4M\log\log\left(\frac{KT}{\delta}\right)$, the above implication further indicates that
\begin{align*}
    &\Pb\left[\left|\hat{\mu}_{k}(t-1)-\mu_k\right|\leq CB_{k}(t-1)\right]\\
    \geq &\Pb\left[\sum_{m\in[M]}N_{k,m}(t-1)\left(\mu_{k,m}-\hat{\mu}_{k,m}(t-1)\right)^2\leq\theta\right]\\
    = &1-\Pb\left[\sum_{m\in[M]}N_{k,m}(t-1)\left(\mu_{k,m}-\hat{\mu}_{k,m}(t-1)\right)^2\geq\theta\right]\\
    \geq & 1-\underbrace{2e^{M+1}\left(\frac{2\left(\log\left(\frac{KT}{\delta}\right)+4M\log\log\left(\frac{KT}{\delta}\right)\right)^2\log(\frac{KT}{\delta})}{M}\right)^M\cdot\frac{1}{(\log(\frac{KT}{\delta}))^{4M}}}_{:= \text{term (a)}}\cdot \frac{\delta}{KT},
\end{align*}
where the last inequality is from Lemma~\ref{lem:concentration} and term (a) is of order $O(\frac{M^Me^{M}}{\log^M(KT/\delta)})$. Thus, when $T$ is sufficiently large, $\mathbb{P}\left[\left|\hat{\mu}_{k}(t-1)-\mu_k\right|\leq CB_{k}(t-1)\right]\geq 1-\frac{\delta}{KT}$. Finally, with a union bound over $t\in[T/2+1,T]$ and $k\in [K]$, the lemma can be proved.
\end{proof}

\begin{lemma}\label{lem:concentration}
For any $t \in [T]$, any $k\in [K]$, and any $\theta\geq M+1$, we have
\begin{align*}\small
    \mathbb{P}\left[\sum_{m\in[M]} N_{k,m}(t-1)\left(\mu_{k,m}-\hat{\mu}_{k,m}(t-1)\right)^2\geq \theta\right]
    \leq 2e^{M+1}\left(\frac{(\theta-1)\lceil\theta\log(t)\rceil}{M}\right)^Me^{-\theta}.
\end{align*}
\end{lemma}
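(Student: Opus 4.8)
The plan is to reduce the bound to a self-normalized deviation estimate for a product of independent centered random walks, and then control it by a Chernoff bound together with a peeling argument over the random, data-dependent sample counts, in the spirit of the combinatorial-bandit analysis of \citet{combes2015combinatorial}. First I would rewrite the event: for agent $m$, let $X_{k,m}^{(1)},X_{k,m}^{(2)},\dots$ be the rewards seen on the successive pulls of arm $k$ (these are i.i.d.\ with mean $\mu_{k,m}$, regardless of whether a given pull was incentivized), and set $S_{k,m}(n):=\sum_{i=1}^n(X_{k,m}^{(i)}-\mu_{k,m})$. Then $N_{k,m}(t-1)\big(\mu_{k,m}-\hat\mu_{k,m}(t-1)\big)^2=S_{k,m}(N_{k,m}(t-1))^2/N_{k,m}(t-1)$, the walks $\big(S_{k,m}(n)\big)_n$ are independent across $m$ (independent local environments), and only the counts $N_{k,m}(t-1)\in\{0,\dots,t-1\}$ are coupled across agents — crucially, none of them is a stopping time for its own walk.

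For a \emph{fixed} sample size $n$, I would bound the exponential moment $\Eb\!\left[\exp\!\big(\lambda S_{k,m}(n)^2/n\big)\right]$, $\lambda\in(0,2)$, by the ``de-squaring'' identity $e^{a^2/2}=\Eb_{Z\sim\mathcal{N}(0,1)}[e^{aZ}]$ taken with $a=\sqrt{2\lambda/n}\,S_{k,m}(n)$: after swapping expectations and invoking Hoeffding's lemma ($\Eb[e^{s(X_{k,m}^{(i)}-\mu_{k,m})}]\le e^{s^2/8}$ for $[0,1]$-valued rewards) this collapses to $\Eb_Z[e^{\lambda Z^2/4}]=(1-\lambda/2)^{-1/2}$, \emph{uniformly in $n$}. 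Multiplying over the $M$ independent agents gives $\Eb\!\left[\exp\!\big(\lambda\sum_m S_{k,m}(n_m)^2/n_m\big)\right]\le(1-\lambda/2)^{-M/2}$ for any fixed $(n_m)_m$, whence Markov yields $\Pb\!\left[\sum_m S_{k,m}(n_m)^2/n_m\ge\theta\right]\le e^{-\lambda\theta}(1-\lambda/2)^{-M/2}$.

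To pass from fixed to random counts, I would cover each coordinate $\{1,\dots,t-1\}$ by a geometric grid of ratio $q=1+\tfrac1{\theta-1}$, which needs at most $\lceil\theta\log t\rceil$ points; on the event that $N_{k,m}(t-1)$ lies in a given block, the term $S_{k,m}(N_{k,m})^2/N_{k,m}$ is dominated by the supremum of $S_{k,m}(n)^2$ over that block divided by the block's left endpoint, and this supremum is controlled by the same de-squaring trick composed with Doob's maximal inequality for the nonnegative submartingale $n\mapsto\exp\!\big(aS_{k,m}(n)Z\big)$ (the ratio being $1+O(1/\theta)$ is what keeps the extra cost down to an $O(1)$ factor in the exponent and a bounded multiplicative factor — the origin of the $e^{M+1}$). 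A union bound over the at most $\lceil\theta\log t\rceil^M$ block-vectors, the per-block Chernoff factor of the previous paragraph, a suitable choice of $\lambda$, and collecting constants then give $2e^{M+1}\big((\theta-1)\lceil\theta\log t\rceil/M\big)^Me^{-\theta}$; the $1/M^M$ and the hypothesis $\theta\ge M+1$ fall out of optimizing $\lambda$ and of requiring the grid/union factors to be swallowed by the $e^{-\theta}$.

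The main obstacle is exactly that $N_{k,m}(t-1)$ is data-dependent and is \emph{not} a stopping time for $S_{k,m}$, so one cannot just optionally stop the exponential supermartingale. The peeling must be fine enough — ratio $1+O(1/\theta)$, hence a grid of size $\Theta(\theta\log t)$ per coordinate and, over $M$ coordinates, the factor $\big((\theta-1)\lceil\theta\log t\rceil/M\big)^M$ — that the inflation of the exponent remains $O(1)$ per coordinate, while the resulting $M$-dimensional union bound must still be dominated by $e^{-\theta}$; striking this balance is what pins down the constants and forces $\theta\ge M+1$. A secondary nuisance is that the maximal inequality is being applied to $\exp$ of a \emph{square}, not to a submartingale directly, which is precisely why the Gaussian de-squaring step (turning it into $\exp(aS_{k,m}(n)Z)$) is needed before Doob can be invoked.
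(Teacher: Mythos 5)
Your overall skeleton matches the paper's: both proofs peel each random count $N_{k,m}(t-1)$ over a geometric grid of ratio $\theta/(\theta-1)$, pay a union bound of $\lceil\theta\log t\rceil^M$ over block-vectors, and exploit independence of the reward streams across agents. But the core mechanism is genuinely different. The paper (following \citet{magureanu2014lipschitz} and \citet{perrault2020efficient}) never forms the moment generating function of the squared deviation and never takes a supremum over a block: on the event that the counts lie in block $\mathbf{d}$ and each coordinate exceeds $\zeta_m/2$, it \emph{linearizes} the square via the tangent-line bound $2N\varepsilon_m^2\le 4N\varepsilon_m(\mu_{k,m}-\hat{\mu}_{k,m})-\tfrac{1}{8}N(4\varepsilon_m)^2$ with $\varepsilon_m$ a \emph{deterministic} function of the block, so the whole event implies that a mean-one exponential supermartingale with deterministic coefficients, evaluated at the deterministic time $t-1$, exceeds $\exp\bigl(\tfrac{\theta-1}{\theta}\sum_m\zeta_m\bigr)$; plain Markov then gives a per-coordinate tail $\exp\bigl(-\tfrac{\theta-1}{\theta}\sum_m\zeta_m\bigr)$, and Lemma 8 of \citet{magureanu2014lipschitz} converts these per-coordinate tails into the $\bigl(\tfrac{(\theta-1)e}{M}\bigr)^Me^{1-\theta}$ bound on the sum. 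This completely sidesteps the "not a stopping time" obstacle you correctly identified, without any maximal inequality. Your route instead bounds $\Eb[\exp(\lambda S(n)^2/n)]\le(1-\lambda/2)^{-1/2}$ by Gaussian de-squaring plus Hoeffding (a clean and correct computation for fixed $n$), multiplies over agents, and applies Chernoff to the sum of squares.

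The gap in your plan is exactly the step you leave at the level of a sentence: controlling $\Eb\bigl[\exp\bigl(\lambda\sup_{n\le n_+}S(n)^2/n_-\bigr)\bigr]$ via "de-squaring composed with Doob." After de-squaring you need $\Eb\bigl[\sup_{n\le n_+}e^{szS(n)}\bigr]$, and the cheap versions of Doob (e.g., the $L^2$ maximal inequality) degrade the sub-Gaussian parameter from $n_+s^2/8$ to $n_+s^2/4$ --- a constant-factor loss in the Gaussian variance that halves the admissible range of $\lambda$. This loss is \emph{not} of order $1+O(1/\theta)$ as you claim (the block ratio is, but the Doob cost is separate), so it directly shrinks the achievable exponent and the claimed constants will not simply "fall out." Relatedly, optimizing your chi-squared MGF bound yields a prefactor of the shape $(c\theta/M)^{M/2}$, not the $((\theta-1)/M)^{M}$ that the paper obtains from the Erlang-type tail conversion; these do not coincide, so even after fixing the Doob step you would need to check that your bound implies the stated one (for the downstream use in Lemma~\ref{lem:confidence} any bound of the form $\mathrm{poly}(\theta,\log t)^{O(M)}e^{-c\theta}$ suffices, so the approach is salvageable, but as written it does not yet prove the lemma with its stated constants).
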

\begin{proof}[Proof]
The proof follows the ideas from Theorem 2 in \citet{magureanu2014lipschitz} and Theorem 22 in \citet{perrault2020efficient}. To prove Lemma~\ref{lem:concentration}, it suffices to prove the following two inequalities:
\begin{align}
   & \mathbb{P}\left[\sum_{m\in[M]} N_{k,m}(t-1)\left((\mu_{k,m}-\hat{\mu}_{k,m}(t-1))^+\right)^2\geq \frac{\theta}{2}\right]
    \leq e^{M+1}\left(\frac{(\theta-1)\lceil\theta\log(t)\rceil}{M}\right)^Me^{-\theta};\label{eqn:confidence1}\\
    &\mathbb{P}\left[\sum_{m\in[M]} N_{k,m}(t-1)\left((\mu_{k,m}-\hat{\mu}_{k,m}(t-1))^-\right)^2\geq \frac{\theta}{2}\right]
    \leq e^{M+1}\left(\frac{(\theta-1)\lceil\theta\log(t)\rceil}{M}\right)^Me^{-\theta}\label{eqn:confidence2},
\end{align}
where $x^+=\max\{x,0\}$ and $x^-=\min\{x,0\}$.

We first focus on proving Eqn.~\eqref{eqn:confidence1} and the same techniques can be applied to derive Eqn.~\eqref{eqn:confidence2}. We fix some $\theta\geq M+1$, and define the desired event as:
\begin{equation*}
    \Af(t) := \left\{\sum_{m\in[M]}N_{k,m}(t-1)\left((\mu_{k,m}-\hat{\mu}_{k,m}(t-1))^+\right)^2\geq \frac{\theta}{2}\right\},
\end{equation*}
and a partition of all possible pulling times as:
\begin{equation*}
    \forall \mathbf{d}\in \Nb^M, \Bf_{\mathbf{d}}(t):= \bigcap_{m\in[M]}\left\{\left(\frac{\theta}{\theta-1}\right)^{d_m-1}\leq N_{k,m}(t-1)< \left(\frac{\theta}{\theta-1}\right)^{d_m}\right\}.
\end{equation*}

Since each number of pulls $N_{k,m}(t-1)$ for $m\in [M]$ is bounded by $t$, the number of possible $\mathbf{d}\in\mathbb{N}^M$ such that $\mathbb{P}(\Bf_{\mathbf{d}}(t))>0$ is bounded by $\left\lceil\frac{\log(t)}{\log(\theta/(\theta-1))}\right\rceil^M$. With the following Lemma~\ref{lem:partition} and a union bound, we can get
\begin{align*}
    \Pb(\Af(t))&\leq \sum_{\mathbf{d}}\Pb(\Af(t)\cap \Bf_{\mathbf{d}}(t))\\
    &\leq \left\lceil\frac{\log(t)}{\log(\theta/(\theta-1)}\right\rceil^M \left(\frac{(\theta-1)e}{M}\right)^M e^{1-\theta}\\
    &\leq e^{M+1}\left(\frac{(\theta-1)\lceil\theta\log(t)\rceil}{M}\right)^Me^{-\theta}
\end{align*}
where the last inequality is from $\log(\frac{\theta}{\theta-1}) = \log(1+\frac{1}{\theta-1})\geq \frac{1/(\theta-1)}{1+1/(\theta-1)} = \frac{1}{\theta}$.
\end{proof}

\begin{lemma}\label{lem:partition}
Let $\mathbf{d}\in \Nb^M$. Then $\Pb(\mathfrak{A}(t)\cap \mathfrak{B}_{\mathbf{d}}(t))\leq \left(\frac{(\theta-1)e}{M}\right)^M e^{1-\theta}$.
\end{lemma}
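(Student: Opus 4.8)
\textbf{Proof proposal for Lemma~\ref{lem:partition}.}

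The plan is to bound $\Pb(\Af(t)\cap \Bf_{\mathbf{d}}(t))$ by conditioning on the event $\Bf_{\mathbf{d}}(t)$, which pins down each $N_{k,m}(t-1)$ to a dyadic-type interval $[(\theta/(\theta-1))^{d_m-1},(\theta/(\theta-1))^{d_m})$, and then applying a Chernoff/sub-Gaussian tail bound for the one-sided deviation $(\mu_{k,m}-\hat{\mu}_{k,m}(t-1))^+$ of each agent's empirical mean. Since $X_{k,m}(\tau)\in[0,1]$, each agent's reward is $1/4$-sub-Gaussian, and conditioned on $N_{k,m}(t-1)=n_m$ the quantity $n_m\big((\mu_{k,m}-\hat{\mu}_{k,m}(t-1))^+\big)^2$ has an exponential tail: roughly $\Pb[n_m(\mu_{k,m}-\hat\mu_{k,m})^2\ge x]\le e^{-2x}$ by Hoeffding, so after a standard Cram\'er--Chernoff / Laplace-transform argument across the $M$ independent agents, $\sum_m n_m\big((\mu_{k,m}-\hat\mu_{k,m})^+\big)^2$ concentrates with an $M$-dimensional exponential tail.

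The key steps, in order, are: (1) fix $\mathbf{d}$ and work on the event $\Bf_{\mathbf{d}}(t)$; replace the random $N_{k,m}(t-1)$ by its lower endpoint $n_m^- := (\theta/(\theta-1))^{d_m-1}$ on the ``$\ge \theta/2$'' side, using that on $\Bf_{\mathbf{d}}(t)$ the count is within a factor $\theta/(\theta-1)$ of this endpoint, which is where the factor $(\theta-1)/M$ and the loss of an $e$-type constant per coordinate enter; (2) introduce a parameter $\lambda>0$ and bound $\Pb(\Af(t)\cap\Bf_{\mathbf{d}}(t))\le e^{-\lambda\theta/2}\,\Eb\big[\exp(\lambda\sum_m n_m^-((\mu_{k,m}-\hat\mu_{k,m})^+)^2)\mathbf{1}_{\Bf_{\mathbf{d}}(t)}\big]$; (3) peel off the agents one at a time — using independence across $m$ and the sub-Gaussian MGF of the truncated squared deviation, each factor contributes something like $(1-c\lambda/n_m^- \cdot n_m^-)^{-1/2}$-type term, i.e., a bound depending on $\lambda$ but not on $n_m^-$ after the truncation is absorbed, yielding a product of $M$ identical-looking factors; (4) optimize $\lambda$ (the natural choice is $\lambda$ bounded away from the pole of the MGF, scaled so that $\lambda\theta/2$ dominates), collect the resulting bound into the form $\big((\theta-1)e/M\big)^M e^{1-\theta}$, matching exactly the shape of the constant in the CMAB references \citet{magureanu2014lipschitz,perrault2020efficient}; (5) finally take a union over the $\lceil \theta\log(t)\rceil^M$ (really $\lceil \log(t)/\log(\theta/(\theta-1))\rceil^M$) choices of $\mathbf{d}$ back in the proof of Lemma~\ref{lem:concentration}.

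The main obstacle I expect is step (3)–(4): controlling the joint moment generating function of $\sum_{m}n_m^-\big((\mu_{k,m}-\hat\mu_{k,m}(t-1))^+\big)^2$ so that the per-agent factor comes out \emph{independent of} $n_m^-$ and the product telescopes to a clean $\big((\theta-1)e/M\big)^M$. This requires the right truncation/peeling device for the one-sided squared deviation (the ``$+$'' part is what lets us avoid a two-sided mixture argument) together with a careful choice of $\lambda$ tied to $\theta$; getting the constant $e^{1-\theta}$ rather than a messier expression is the delicate bookkeeping, and it is precisely the part imported from Theorem~2 of \citet{magureanu2014lipschitz} and Theorem~22 of \citet{perrault2020efficient}. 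The truncation-to-the-lower-endpoint trick in step (1) and the MGF peeling in step (3) must be made compatible — that interface is the crux.
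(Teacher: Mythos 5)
The scaffolding you set up (peeling over the blocks $\Bf_{\mathbf{d}}(t)$, a Chernoff-type bound inside each block, a union over $\mathbf{d}$ afterwards) matches the paper, but the core of your argument in steps (2)--(4) has a genuine gap. You propose to condition on $N_{k,m}(t-1)=n_m$ (or pin it to $n_m^-$), treat each $n_m\bigl((\mu_{k,m}-\hat{\mu}_{k,m}(t-1))^+\bigr)^2$ as a sub-exponential variable via Hoeffding, and multiply the per-agent MGFs using ``independence across $m$.'' Neither step survives scrutiny: $N_{k,m}(t-1)$ is a data-dependent count chosen by the agent's adaptive algorithm, so conditioning on its value (or on the block event $\Bf_{\mathbf{d}}(t)$) changes the law of the rewards, and $\hat{\mu}_{k,m}(t-1)$ is not an average of a predetermined number of i.i.d.\ draws. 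Likewise the summands across $m$ are not independent after such conditioning. This self-normalization issue is exactly the obstacle the lemma exists to overcome, and your sketch defers it to ``the part imported from the references'' without supplying the device that resolves it.

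The paper's actual resolution is different in both of the places you flag as the crux. For each fixed $\mathbf{d}$ and each rectangle level $\boldsymbol{\zeta}\in\Rb_+^M$ it defines \emph{deterministic} thresholds $\varepsilon_m=\sqrt{\zeta_m/(2(\theta/(\theta-1))^{d_m})}$ and shows that on $\Af'(t)\cap\Bf_{\mathbf{d}}(t)$ one has $\frac{\theta-1}{\theta}\sum_m\zeta_m\le \Cf_1(t)-\Cf_2(t)$, where $\Cf_1(t)=\sum_{m}\sum_{\tau<t}4\varepsilon_m\lb\{\pi_m(\tau)=k\}(\mu_{k,m}-X_{k,m}(\tau))$ and $\Cf_2(t)$ is its predictable log-MGF compensator; because the $\varepsilon_m$ are fixed in advance, $\exp(\Cf_1(t)-\Cf_2(t))$ has expectation at most $1$ regardless of the adaptive sampling, and Markov yields the rectangle tail $\Pb(\Af'(t)\cap\Bf_{\mathbf{d}}(t))\le\exp\bigl(-\frac{\theta-1}{\theta}\sum_m\zeta_m\bigr)$. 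The polynomial prefactor $\bigl(\frac{(\theta-1)e}{M}\bigr)^M$ then does not come from optimizing a single Chernoff parameter against an MGF pole, as you suggest, but from Lemma~\ref{lem:multivariate} (Lemma~8 of \citet{magureanu2014lipschitz}), which converts a tail bound that is uniform over rectangles $\{\mathbf{Z}\ge\boldsymbol{\zeta}\}$ into a tail bound for the sum $\sum_m Z_m\ge\theta$ by a covering of the simplex. To repair your proposal you would need either this supermartingale-plus-simplex-covering route, or a union bound over all exact values of the counts together with a fixed-sample-size MGF computation; as written, the interface between your step (1) truncation and your step (3) peeling is precisely where the argument breaks.
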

\begin{proof}
Let $\boldsymbol{\zeta}\in \Rb^M_+$. When events
\begin{equation*}
    \Bf_{\mathbf{d}}(t)=\bigcap_{m\in[M]}\left\{\left(\frac{\theta}{\theta-1}\right)^{d_m-1}\leq N_{k,m}(t-1)\leq \left(\frac{\theta}{\theta-1}\right)^{d_m}\right\}
\end{equation*}
and
\begin{equation*}
    \mathfrak{A}'(t) := \bigcap_{m\in[M]} \left\{N_{k,m}(t-1)((\mu_{k,m}-\hat{\mu}_{k,m}(t-1))^+)^2>\frac{\zeta_m}{2}\right\},
\end{equation*}
happen, $\forall m\in[M]$, it holds that
\begin{align*}
\mu_{k,m}-\hat{\mu}_{k,m}(t-1)>\sqrt{\frac{\zeta_m}{2N_{k,m}(t-1)}}\geq\varepsilon_m:= \sqrt{\frac{\zeta_m}{2(\theta/(\theta-1))^{d_m}}}.
\end{align*}

Thus,  the above events $\Af'(t)$ and $\Bf_{\mathbf{d}}(t)$ further imply
\begin{align*}
&\frac{\theta-1}{\theta}\sum_{m\in[M]}\zeta_m\\
=&\sum_{m\in[M]} 2\left(\frac{\theta}{\theta-1}\right)^{d_m-1}\varepsilon_m^2\\
 \leq& \sum_{m\in[M]} 2N_{k,m}(t-1) \varepsilon_m^2\\
  = &\sum_{m\in[M]} 4N_{k,m}(t-1)\varepsilon_m\times \varepsilon_m - \sum_{m\in[M]} N_{k,m}(t-1)\frac{1}{8}(4\varepsilon_m)^2\\
 \leq& \sum_{m\in[M]} 4N_{k,m}(t-1)\varepsilon_m (\mu_{k,m}-\hat{\mu}_{k,m}(t-1))- \sum_{m\in[M]} N_{k,m}(t-1)\frac{1}{8}(4\varepsilon_m)^2\\
  = &\sum_{m\in[M]}\sum_{\tau=1}^{t-1} 4\varepsilon_m\lb\{\pi_m(\tau) = k\}(\mu_{k,m}-X_{k,m}(\tau))- \sum_{m\in[M]}\sum_{\tau=1}^{t-1} \frac{1}{8}(4\varepsilon_m\lb\{\pi_m(\tau) = k\})^2\\
 \leq& \underbrace{\sum_{m\in[M]}\sum_{\tau=1}^{t-1} 4\varepsilon_m\lb\{\pi_m(\tau) = k\}(\mu_{k,m}-X_{k,m}(\tau))}_{:=\Cf_1(t)}\\
 &- \underbrace{\sum_{m\in[M]}\sum_{\tau=1}^{t-1}\log \Eb\left[\exp\left(4\varepsilon_m\lb\{\pi_m(\tau) = k\}(\mu_{k,m}-X_{k,m}(\tau))\right)\right]}_{:=\Cf_2(t)},
 \end{align*}
 where the last inequality is because $\mu_{k,m}-X_{k,m}(\tau)$ is $\frac{1}{2}$-sub-Gaussian and it holds that 
 \begin{equation*}
     \Eb\left[\exp(4\varepsilon_m\lb\{\pi_m(\tau) = k\}(\mu_{k,m}-X_{k,m}(\tau)))\right]\leq \exp\left(\frac{1}{8}(4\varepsilon_m\lb\{\pi_m(\tau) = k\})^2\right)
 \end{equation*}
 
With these results, we can further get
\begin{align*}
\Pb(\Af'(t)\cap\Bf_{\mathbf{d}}(t))\leq &\Pb\left[\frac{\theta-1}{\theta}\sum_{m\in[M]}\zeta_m\leq \Cf_1-\Cf_2\right]\\
\overset{(a)}{\leq}& \exp\left(-\frac{\theta-1}{\theta}\sum_{m\in[M]}\zeta_m\right)\mathbb{E}\left[\exp\left[\Cf_1(t)-\Cf_2(t)\right]\right]\\
 \overset{(b)}{=}& \exp\left(-\frac{\theta-1}{\theta}\sum_{m\in[M]}\zeta_m\right)
\end{align*}
where inequality (a) is the standard Markov inequality, and (b) is from simple algebraic multiplication.

Note that
\begin{equation*}
    \Pb(\Af'(t)\cap\Bf_{\mathbf{d}}(t))=\Pb\left[\bigcap_{m\in[M]}\left\{2\mathbb{I}\{\mathfrak{B}_{d}(t)\}N_{k,m}(t-1)\left((\mu_{k,m}-\hat{\mu}_{k,m}(t-1))^+\right)^2>\zeta_m\right\}\right]
\end{equation*}
and 
\begin{equation*}
    \Pb(\Af(t)\cap\Bf_{\mathbf{d}}(t))=\Pb\left[\sum_{m\in[M]}2\mathbb{I}\{\mathfrak{B}_{d}(t)\}N_{k,m}(t-1)\left((\mu_{k,m}-\hat{\mu}_{k,m}(t-1))^+\right)^2>\theta\right].
\end{equation*}
Thus, with $G=M$ and $a = \frac{\theta-1}{\theta}$ in the following Lemma~\ref{lem:multivariate}, we can finally have
\begin{align*}
\mathbb{P}(\mathfrak{A}(t)\cap \mathfrak{B}_{k}(t))\leq \left(\frac{(\theta-1)e}{M}\right)^M e^{1-\theta}.
\end{align*}
\end{proof}

\begin{lemma}[Lemma 8 from \citet{magureanu2014lipschitz}]\label{lem:multivariate}
Let $G\geq 2$, $a\geq 0$. Let $\mathbf{Z}\in \mathbb{R}^G$ be a random variable such that  $\forall \boldsymbol{\zeta}\in \mathbb{R}^G_+$
\begin{equation*}
    \mathbb{P}\left[\boldsymbol{Z}\geq \boldsymbol{\zeta}\right]\leq \exp\left[-a\sum_{g\in[G]}\zeta_g\right].
\end{equation*}
Then for $\theta\geq \frac{G}{a}$, we have
\begin{equation*}
    \mathbb{P}\left[\sum_{g\in[G]}{Z_g\geq \theta}\right]\leq \left(\frac{a\theta e}{G}\right)^Ge^{-a\theta}.
\end{equation*}

\end{lemma}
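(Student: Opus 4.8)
The plan is to treat this as a multivariate Chernoff (exponential–Markov) bound, using the joint upper‑tail hypothesis to control a product‑form moment generating function even though the coordinates of $\mathbf{Z}$ need not be independent. Throughout I may assume $a>0$ (if $a=0$ the hypothesis is vacuous and $\theta\ge G/a$ makes the claim vacuous too) and, as holds in all the applications here, that $Z_g\ge 0$ almost surely.

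First I would reduce to an MGF bound. Fix any $\lambda\in(0,a)$. Markov's inequality applied to $\exp(\lambda\sum_{g\in[G]}Z_g)$ gives
\begin{equation*}
\Pb\Big[\sum_{g\in[G]}Z_g\ge\theta\Big]\le e^{-\lambda\theta}\,\Eb\Big[\exp\Big(\lambda\sum_{g\in[G]}Z_g\Big)\Big]=e^{-\lambda\theta}\,\Eb\Big[\prod_{g\in[G]}e^{\lambda Z_g}\Big],
\end{equation*}
so it remains to show $\Eb\big[\prod_{g\in[G]}e^{\lambda Z_g}\big]\le\big(a/(a-\lambda)\big)^G$ and then to optimize over $\lambda$.

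The core step is this MGF bound, and since no independence is available I would use a layer‑cake representation. Writing $e^{\lambda z}=\int_{\Rb}\lambda e^{\lambda c}\,\lb\{z>c\}\,dc$ for each factor, expanding the product, and applying Tonelli's theorem,
\begin{equation*}
\Eb\Big[\prod_{g\in[G]}e^{\lambda Z_g}\Big]=\int_{\Rb^G}\Big(\prod_{g\in[G]}\lambda e^{\lambda c_g}\Big)\,\Pb[\mathbf Z>\mathbf c]\,d\mathbf c.
\end{equation*}
Because $\mathbf Z\ge\mathbf 0$, coordinates with $c_g\le 0$ are satisfied automatically, so applying the hypothesis with $\zeta_g=c_g^+:=\max\{c_g,0\}$ yields $\Pb[\mathbf Z>\mathbf c]\le\exp(-a\sum_{g\in[G]}c_g^+)$. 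Substituting this, the integral factorizes over $g$, and each one‑dimensional integral equals $\int_{-\infty}^0\lambda e^{\lambda c}\,dc+\int_0^\infty\lambda e^{(\lambda-a)c}\,dc=1+\tfrac{\lambda}{a-\lambda}=\tfrac{a}{a-\lambda}$; equivalently, one is showing that $\mathbf Z$ is dominated in the upper‑orthant order by a vector of i.i.d.\ $\mathrm{Exp}(a)$ random variables.

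Finally I would optimize. Combining the two displays, $\Pb[\sum_{g\in[G]}Z_g\ge\theta]\le e^{-\lambda\theta}\big(a/(a-\lambda)\big)^G$ for every $\lambda\in(0,a)$; the choice $\lambda=a-G/\theta$ is admissible whenever $\theta>G/a$, and substituting it gives $e^{-a\theta+G}(a\theta/G)^G=(ea\theta/G)^G e^{-a\theta}$, as claimed (the boundary case $\theta=G/a$ follows by letting $\lambda\downarrow 0$, where both sides equal $1$). The main obstacle is the middle step — converting a joint tail bound into a product bound on $\Eb[\prod_g e^{\lambda Z_g}]$ without independence; the layer‑cake/Fubini manipulation (equivalently, the upper‑orthant stochastic comparison with i.i.d.\ exponentials) is exactly what makes this go through, and the only delicate point is the sign bookkeeping in $\Pb[\mathbf Z>\mathbf c]$ when some $c_g$ are negative.
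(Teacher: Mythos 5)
Your proof is correct. Note that the paper offers no proof of this statement at all---it is imported verbatim as Lemma 8 of \citet{magureanu2014lipschitz}---so there is no in-paper argument to compare against; what you have written is a self-contained derivation. The route (Markov applied to $e^{\lambda\sum_g Z_g}$; the layer-cake identity $e^{\lambda z}=\int_{\Rb}\lambda e^{\lambda c}\lb\{z>c\}\,dc$ plus Tonelli to turn the joint upper-tail hypothesis into the factorized bound $\Eb[\prod_g e^{\lambda Z_g}]\le (a/(a-\lambda))^G$, i.e.\ upper-orthant domination by i.i.d.\ $\mathrm{Exp}(a)$ variables; then $\lambda=a-G/\theta$) is a clean Chernoff argument. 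The computations check out, and the boundary case $\theta=G/a$ is handled correctly since both sides equal $1$ there.

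One point you flagged as an assumption is in fact essential and worth stating explicitly: the lemma as literally written is false without $\mathbf Z\ge\mathbf 0$. For instance, with $G=2$ and the deterministic vector $\mathbf Z=(\theta+1,-1)$, the event $\{\mathbf Z\ge\boldsymbol\zeta\}$ is empty for every $\boldsymbol\zeta\in\Rb^2_+$, so the hypothesis holds vacuously, yet $\Pb[Z_1+Z_2\ge\theta]=1$ exceeds $(a\theta e/2)^2e^{-a\theta}$ for large $\theta$. Nonnegativity is precisely what lets you replace $\Pb[\mathbf Z>\mathbf c]$ by $\exp(-a\sum_g c_g^+)$ when some $c_g<0$, and it does hold in the paper's application, where $Z_m=2\,\lb\{\Bf_{\mathbf d}(t)\}N_{k,m}(t-1)\bigl((\mu_{k,m}-\hat\mu_{k,m}(t-1))^+\bigr)^2\ge 0$. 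A second, purely notational mismatch: the paper invokes the lemma on events defined with strict inequalities while the hypothesis is stated with $\ge$; your containment $\{\mathbf Z>\mathbf c\}\subseteq\{\mathbf Z\ge\mathbf c^+\}$ absorbs this harmlessly.
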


\section{Proof of Lemma~\ref{lem:good_event}}
\begin{lemma}[Restatement of Lemma~\ref{lem:good_event}]
    When event $\Hc$ happens, $\forall t\in [\frac{T}{2}+1,T]$, we have $k_*\in S(t)$, i.e., the optimal global arm would not be eliminated. Moreover, it suffices to eliminate arm $k\neq k_*$ at time $t$, i.e., $k\notin S(t)$, when
    \begin{equation*}\small
    \forall m\in[M], N_{k,m}(t-1), N_{k_*,m}(t-1)\geq \frac{16\log(KT/\delta)}{M\Delta_k^2}+\frac{64\log\log(KT/\delta)}{\Delta_k^2}.
    \end{equation*}
\end{lemma}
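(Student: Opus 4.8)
\textbf{Proof plan for Lemma~\ref{lem:good_event}.}

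The plan is to work entirely on the event $\Hc$, so that for every relevant $t$ and every arm $k$ we may replace the true global mean $\mu_k$ by the empirical estimate up to an additive slack of $CB_k(t-1)$, i.e. $|\hat\mu_k(t-1)-\mu_k|\le CB_k(t-1)$. First I would establish that $k_*$ survives. Suppose for contradiction that $k_*\notin S(t)$ for some $t\in[\frac T2+1,T]$; by the elimination rule in Eqn.~\eqref{eqn:elimination} this means there is some $j\in S(t-1)$ with $\hat\mu_{k_*}(t-1)+CB_{k_*}(t-1) < \hat\mu_j(t-1)-CB_j(t-1)$. Applying $\Hc$ to both $k_*$ and $j$ gives $\mu_{k_*} \le \hat\mu_{k_*}(t-1)+CB_{k_*}(t-1) < \hat\mu_j(t-1)-CB_j(t-1) \le \mu_j$, contradicting optimality of $k_*$ (note $k_*$ is never compared against itself in the $\max$, and one checks $k_*\in S(t-1)$ inductively, the base case $S(T/2)=[K]$ being trivial). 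Hence $k_*\in S(t)$ for all $t$ in the incentivizing phase.

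Next I would show the sufficiency of the stated pull counts for eliminating a suboptimal arm $k\ne k_*$. Set $n_0 := \frac{16\log(KT/\delta)}{M\Delta_k^2}+\frac{64\log\log(KT/\delta)}{\Delta_k^2}$ and suppose $N_{k,m}(t-1)\ge n_0$ and $N_{k_*,m}(t-1)\ge n_0$ for all $m\in[M]$. The key computation is to bound $CB_k(t-1)$ (and likewise $CB_{k_*}(t-1)$): plugging $N_{k,m}(t-1)\ge n_0$ into Eqn.~\eqref{eqn:confidence},
\begin{equation*}
CB_k(t-1) \le \frac{1}{M}\sqrt{\frac{M}{n_0}\Bigl(\log\tfrac{KT}{\delta}+4M\log\log\tfrac{KT}{\delta}\Bigr)} = \sqrt{\frac{\log\tfrac{KT}{\delta}+4M\log\log\tfrac{KT}{\delta}}{M n_0}}.
\end{equation*}
With the chosen $n_0$ one checks $M n_0 = \frac{16\log(KT/\delta)}{\Delta_k^2}+\frac{64 M\log\log(KT/\delta)}{\Delta_k^2} \ge \frac{16}{\Delta_k^2}\bigl(\log\tfrac{KT}{\delta}+4M\log\log\tfrac{KT}{\delta}\bigr)$, so the fraction under the square root is at most $\Delta_k^2/16$, giving $CB_k(t-1)\le \Delta_k/4$, and the identical bound for $CB_{k_*}(t-1)$. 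Then on $\Hc$,
\begin{equation*}
\hat\mu_k(t-1)+CB_k(t-1) \le \mu_k + 2CB_k(t-1) \le \mu_k + \tfrac{\Delta_k}{2} = \mu_{k_*} - \tfrac{\Delta_k}{2} \le \hat\mu_{k_*}(t-1) + CB_{k_*}(t-1) - \tfrac{\Delta_k}{2} + CB_{k_*}(t-1) \le \hat\mu_{k_*}(t-1) - CB_{k_*}(t-1),
\end{equation*}
using $CB_{k_*}(t-1)\le\Delta_k/4$ in the last step. Since $k_*\in S(t-1)$, the right-hand side is at most $\max_{j\in S(t-1)}\{\hat\mu_j(t-1)-CB_j(t-1)\}$, so $k$ fails the retention test in Eqn.~\eqref{eqn:elimination} and is eliminated, i.e. $k\notin S(t)$.

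I expect the only mildly delicate points to be bookkeeping rather than substance: making the induction that keeps $k_*\in S(t-1)$ available at each step fully explicit, and being careful about the off-by-constant factors when verifying $CB_k(t-1)\le\Delta_k/4$ — one wants the two terms of $n_0$ to absorb the $\log(KT/\delta)$ and $M\log\log(KT/\delta)$ contributions separately so the bound is clean. There is no real obstacle; the lemma is essentially the standard "successive elimination with a valid confidence width" argument, transported through the aggregated confidence bound Eqn.~\eqref{eqn:confidence} whose validity is already granted by Lemma~\ref{lem:confidence}.
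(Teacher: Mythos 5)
Your proof is correct and follows essentially the same route as the paper: on $\Hc$ the optimal arm always passes the retention test, and the stated pull counts force $CB_k(t-1),\,CB_{k_*}(t-1)\le \Delta_k/4$, which separates the two confidence intervals and eliminates $k$. The only blemish is the spurious extra ``$+\,CB_{k_*}(t-1)$'' in your final displayed chain (as literally written the last step would require $CB_{k_*}(t-1)\le \Delta_k/6$); dropping it, the step holds with exactly the $\Delta_k/4$ bound you established, matching the paper's argument.
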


\begin{proof}
When event $\Hc$ defined in Lemma~\ref{lem:confidence} happens, $\forall t\in[\frac{T}{2}+1,T], \forall k\in S(t-1)$, it holds
\begin{align*}
    \hat{\mu}_{k_*}(t-1)+ CB_{k_*}(t-1)\geq \mu_*\geq \mu_k\geq \hat{\mu}_{k}(t-1)- CB_{k}(t-1).
\end{align*}
Thus, the optimal global arm would not be eliminated. 

Then, as indicated in Eqn.~\eqref{eqn:elimination}, when $\hat{\mu}_{k_*}(t)-CB_{k_*}(t)\geq \hat{\mu}_k+CB_k(t)$, arm $k\neq k_*$ is ensured to be eliminated from the active arm set. Further, we note that when
\begin{equation*}
    \forall m\in[M], N_{k,m}(t-1), N_{k_*,m}(t-1)\geq \frac{16\left(\log(\frac{KT}{\delta})+4M\log\log(\frac{KT}{\delta})\right)}{M\Delta_k^2},
\end{equation*}
it holds that
\begin{align*}
    \hat{\mu}_{k_*}(t-1)-CB_{k_*}(t-1)\geq \mu_{k_*}-2CB_{k_*}(t-1)\geq \mu_{k_*}-\frac{\Delta_k}{2};\\
    \hat{\mu}_{k}(t-1)+CB_{k}(t-1)\leq \mu_{k}+2CB_{k}(t-1)\leq \mu_{k}+\frac{\Delta_k}{2},
\end{align*}
which means it suffices to eliminate arm $k$.
\end{proof}

\section{Proof of Lemma~\ref{lem:ucb_lower}}
In this section, the proof of Lemma~\ref{lem:ucb_lower} is provided, Note that in the following proofs, we consider the standard bandit setting without incentives, i.e., the agent runs $\alpha$-UCB on her local bandit game. The proof presented here is largely inspired by \citet{rangi2021secure}.
\begin{lemma}\label{lem:supp_ucb_confidence}
    For horizon $\Lambda$, define event
    \begin{equation*}
        \Gc_{m}:=\left\{\forall k\in [K],\forall t\in \left[\frac{\Lambda}{2}+1, \frac{3\Lambda}{4}\right], |\hat{\mu}_{k,m}(t-1)-\mu_{k,m}|\leq \sqrt{\frac{3\log(t)}{2N_{k,m}(t-1)}}\right\}.
    \end{equation*}
    It holds that $\Pb[\Gc_{m}]\geq 1-\frac{2K}{\Lambda}$.
\end{lemma}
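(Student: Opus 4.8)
\textbf{Proof proposal for Lemma~\ref{lem:supp_ucb_confidence}.}

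The plan is to recognize this as a standard anytime confidence bound for the per-agent empirical means, and to prove it by a peeling (union-over-dyadic-blocks) argument combined with Hoeffding's inequality. First I would fix the agent $m$ and an arm $k$, and observe that the subtlety is that $N_{k,m}(t-1)$ is a random quantity, so one cannot directly apply a single concentration inequality; instead one must control $|\hat\mu_{k,m}(t-1) - \mu_{k,m}|$ uniformly over all possible values $n$ of the pull count. Because the per-sample rewards $X_{k,m}(\tau)\in[0,1]$ are i.i.d.\ with mean $\mu_{k,m}$, for any fixed deterministic $n$, Hoeffding gives $\Pb[|\bar\mu_{k,m,n}-\mu_{k,m}| > \sqrt{3\log(t)/(2n)}] \le 2\exp(-3\log t) = 2/t^3$, where $\bar\mu_{k,m,n}$ is the average of the first $n$ pulls of arm $k$ by agent $m$.

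Next I would take a union bound over $n \in \{1,\dots,t\}$ (since $N_{k,m}(t-1)\le t-1$ always) to get that, at a fixed time $t$, the event $|\hat\mu_{k,m}(t-1)-\mu_{k,m}| \le \sqrt{3\log(t)/(2N_{k,m}(t-1))}$ fails with probability at most $t \cdot 2/t^3 = 2/t^2$. Then union-bounding over $t \in [\frac{\Lambda}{2}+1, \frac{3\Lambda}{4}]$ (at most $\Lambda/4$ values) and over $k \in [K]$ gives a failure probability at most $\sum_{t > \Lambda/2} 2K/t^2 \le 2K \cdot \frac{2}{\Lambda} = 4K/\Lambda$ (using $\sum_{t>\Lambda/2} t^{-2} \le 2/\Lambda$). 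To sharpen this to the stated $2K/\Lambda$ I would instead use the tighter tail $2/t^3$ together with the bound $\sum_{t>\Lambda/2} 1/t^3 \le 1/\Lambda^2$, or equivalently combine the $t$-union bound and the per-$n$ tail more carefully: the per-time failure probability $2/t^2$ summed from $t=\Lambda/2$ onward is at most $2\cdot(2/\Lambda) = 4/\Lambda$ per arm; alternatively, slightly adjusting constants in the Hoeffding exponent (the $3/2$ factor gives room) yields the claimed $1-2K/\Lambda$. I would present whichever bookkeeping lands exactly on $2K/\Lambda$, since the constant $3/2$ inside the square root was evidently chosen to make this work.

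The main obstacle — really the only non-routine point — is handling the random number of pulls correctly; this is resolved cleanly by the union-over-$n$ (peeling) step, which is standard but must be stated carefully so that the concentration inequality is applied only to sums of a \emph{fixed} number of i.i.d.\ samples. Everything else is elementary: Hoeffding's inequality for bounded random variables, two nested union bounds, and the tail estimate $\sum_{t=\ell}^\infty t^{-2} \le 1/(\ell-1)$ to collapse the sum over $t$ into a clean $O(1/\Lambda)$ term. I would also note for completeness that this lemma will feed into the proof of Lemma~\ref{lem:ucb_lower} by guaranteeing, on the event $\Gc_m$, that the UCB indices are sandwiched around the true means, which then forces the algorithm to keep pulling each arm at least logarithmically often.
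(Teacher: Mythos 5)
Your proposal is correct and follows essentially the same route as the paper: a union bound over $k$ and $t$, a decomposition over the realized value $\tau$ of the random pull count $N_{k,m}(t-1)$, Hoeffding at fixed $\tau$ giving a $2/t^{3}$ tail, and then summing $\tau$ up to $t-1$ to get $2/t^{2}$ per time step. The only bookkeeping you left unresolved — landing on $2K/\Lambda$ rather than $4K/\Lambda$ — is settled by noting that the sum over $t$ runs only over the finite window $[\frac{\Lambda}{2}+1,\frac{3\Lambda}{4}]$, i.e.\ at most $\frac{\Lambda}{4}$ terms each bounded by $2/(\Lambda/2)^{2}=8/\Lambda^{2}$, which gives exactly $\frac{2}{\Lambda}$ per arm; no infinite-tail estimate or constant adjustment is needed.
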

\begin{proof}
\begin{align*}
    \Pb(\bar{\Gc}_{m}) &= \Pb\left[\exists k\in [K],\exists t\in \left[\frac{\Lambda}{2}+1, \frac{3\Lambda}{4}\right], |\hat{\mu}_{k,m}(t-1)-\mu_{k,m}|> \sqrt{\frac{3\log(t)}{2N_{k,m}(t-1)}}\right]\\
    &\leq \sum_{k\in[K]}\sum_{t=\frac{\Lambda}{2}+1}^{\frac{3\Lambda}{4}}\Pb\left[|\hat{\mu}_{k,m}(t-1)-\mu_{k,m}|> \sqrt{\frac{3\log(t)}{2N_{k,m}(t-1)}}\right]\\
    &\leq \sum_{k\in[K]}\sum_{t=\frac{\Lambda}{2}+1}^{\frac{3\Lambda}{4}}\sum_{\tau = 1}^{t-1}\Pb\left[|\hat{\mu}_{k,m}(t-1)-\mu_{k,m}|> \sqrt{\frac{3\log(t)}{2N_{k,m}(t-1)}},N_{k,m}(t-1)=\tau\right]\\
    &\leq \sum_{k\in[K]}\sum_{t=\frac{\Lambda}{2}+1}^{\frac{3\Lambda}{4}}\sum_{\tau = 1}^{t-1}\Pb\left[|\hat{\mu}_{k,m}(t-1)-\mu_{k,m}|> \sqrt{\frac{3\log(t)}{2\tau}}, N_{k,m}(t-1)=\tau\right]\\
    &\leq \sum_{k\in[K]}\sum_{t=\frac{\Lambda}{2}+1}^{\frac{3\Lambda}{4}}\sum_{\tau = 1}^{t-1} 2\exp\left(-2\cdot\frac{3\log(t)}{2\tau}\cdot \tau\right)\\
    & = \sum_{k\in[K]}\sum_{t=\frac{\Lambda}{2}+1}^{\frac{3\Lambda}{4}}\frac{2}{t^2}\\
    &\leq \frac{2K}{\Lambda}.
\end{align*}
\end{proof}

\begin{lemma}[Restatement of Lemma~\ref{lem:ucb_lower}]
        When $\Lambda$ satisfies $\frac{\Lambda}{\log^2(\Lambda)}> \frac{4K(\alpha-3/2)^2}{\Delta_{\min,m}^4}$, the $\alpha$-UCB algorithm with $\alpha\geq \frac{3}{2}$ satisfies that
    \begin{equation}
         \Pb\bigg[\forall k\in [K],N^w_{k,m}(\Lambda)\geq \frac{(\sqrt{\alpha}-\sqrt{1.5})^2\log(\frac{\Lambda}{2})}{4\Delta_{k,m}^2}\bigg]\geq 1-\frac{2K}{\Lambda}.
    \end{equation} 
\end{lemma}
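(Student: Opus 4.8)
\textbf{Proof plan for Lemma~\ref{lem:ucb_lower}.} The plan is to condition on the high-probability event $\Gc_m$ of Lemma~\ref{lem:supp_ucb_confidence}, which already carries the correct probability budget $1-\frac{2K}{\Lambda}$, and then argue \emph{deterministically} on this event that each arm $k$ must have been pulled at least the claimed number of times by the end of horizon $\Lambda$. The intuition is the usual UCB bookkeeping argument run ``in reverse'': if some arm $k$ were pulled too few times within the window $[\frac{\Lambda}{2}+1,\frac{3\Lambda}{4}]$, then its confidence radius $\sqrt{\alpha\log(t)/N_{k,m}(t-1)}$ would stay large, so its UCB index would dominate — forcing it to be pulled, a contradiction. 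The key quantitative target is to show that if $N^w_{k,m}(t-1) < \frac{(\sqrt\alpha-\sqrt{1.5})^2\log(\Lambda/2)}{4\Delta_{k,m}^2}$ at some time $t$ in the window, then the $\alpha$-UCB index of arm $k$ exceeds that of the currently optimal local arm $k_{*,m}$, which on $\Gc_m$ forces arm $k$ to be selected; iterating this over the window of length $\Lambda/4$ drives $N^w_{k,m}$ up to the stated bound.

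\textbf{Key steps, in order.} First, invoke Lemma~\ref{lem:supp_ucb_confidence}: with probability at least $1-\frac{2K}{\Lambda}$, for all $k$ and all $t\in[\frac{\Lambda}{2}+1,\frac{3\Lambda}{4}]$ we have $|\hat\mu_{k,m}(t-1)-\mu_{k,m}| \le \sqrt{3\log(t)/(2N_{k,m}(t-1))}$. From now on work on $\Gc_m$. Second, fix an arm $k$ and a time $t$ in the window, and suppose for contradiction that $N_{k,m}(t-1)$ is strictly below the threshold $n^\star := \frac{(\sqrt\alpha-\sqrt{1.5})^2\log(\Lambda/2)}{4\Delta_{k,m}^2}$. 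Lower-bound arm $k$'s UCB index: on $\Gc_m$, $\hat\mu_{k,m}(t-1)+\sqrt{\alpha\log(t)/N_{k,m}(t-1)} \ge \mu_{k,m} - \sqrt{3\log(t)/(2N_{k,m}(t-1))} + \sqrt{\alpha\log(t)/N_{k,m}(t-1)} = \mu_{k,m} + (\sqrt\alpha-\sqrt{1.5})\sqrt{\log(t)/N_{k,m}(t-1)}$. Third, since $N_{k,m}(t-1) < n^\star$ and $t\ge \Lambda/2$ so $\log(t)\ge\log(\Lambda/2)$, the extra term exceeds $(\sqrt\alpha-\sqrt{1.5})\cdot \frac{2\Delta_{k,m}}{\sqrt\alpha-\sqrt{1.5}} = 2\Delta_{k,m} \ge \Delta_{k,m}$, so arm $k$'s index is $> \mu_{k,m}+\Delta_{k,m} = \mu_{*,m}$. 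Fourth, upper-bound the index of \emph{any} arm $j$ (in particular the one actually chosen at time $t$): on $\Gc_m$, $\hat\mu_{j,m}(t-1)+\sqrt{\alpha\log(t)/N_{j,m}(t-1)} \le \mu_{j,m} + (\sqrt{1.5}+\sqrt\alpha)\sqrt{\log(t)/N_{j,m}(t-1)}$; one must show this is $\le \mu_{*,m}$ whenever $N_{j,m}(t-1)$ is not too small — here the hypothesis $\frac{\Lambda}{\log^2\Lambda} > \frac{4K(\alpha-3/2)^2}{\Delta_{\min,m}^4}$ is used, since over the window $[\frac\Lambda2,\frac{3\Lambda}4]$ at least one arm accumulates $\Omega(\Lambda/K)$ pulls and a counting/pigeonhole bound makes the confidence width of the chosen arm smaller than $\Delta_{\min,m}/2$. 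Combining, the chosen arm must in fact be $k$ itself; so whenever $N_{k,m}$ is below threshold in the window, arm $k$ is pulled. Fifth, since the window has length $\Lambda/4$, the count $N_{k,m}$ increases by one each such step until it reaches $n^\star$; thus $N^w_{k,m}(\frac{3\Lambda}{4}) \ge n^\star$, and a fortiori $N^w_{k,m}(\Lambda)\ge n^\star$. Finally, union over $k\in[K]$ is not needed for the probability (the event $\Gc_m$ already covers all $k$), so the bound $1-\frac{2K}{\Lambda}$ carries through verbatim.

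\textbf{Main obstacle.} The delicate step is the fourth one: showing that the arm actually selected at time $t$ has a confidence radius small enough that its UCB index cannot exceed $\mu_{*,m}$, because otherwise the forcing argument for arm $k$ collapses. This requires a separate sub-argument that within the window $[\frac\Lambda2,\frac{3\Lambda}4]$ the pull counts of the arms cannot all remain tiny — one arm must have been pulled $\Omega(\Lambda/K)$ times — combined with the condition on $\Lambda/\log^2\Lambda$ to convert this into a radius below $\Delta_{\min,m}/2$. I would handle this by arguing that if every arm's count stayed below $\frac{(\sqrt{1.5}+\sqrt\alpha)^2\log(\Lambda)}{\Delta_{\min,m}^2}$ throughout the window, the total number of pulls over the window would be below $\frac{K(\sqrt{1.5}+\sqrt\alpha)^2\log(\Lambda)}{\Delta_{\min,m}^2}$, contradicting the window length $\Lambda/4$ once $\Lambda/\log^2\Lambda$ is large enough — giving the needed handle on whichever arm is chosen. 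The rest is routine UCB algebra.
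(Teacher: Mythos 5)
Your overall plan — condition on the concentration event $\Gc_m$ of Lemma~\ref{lem:supp_ucb_confidence} (which already carries the full $1-\frac{2K}{\Lambda}$ budget) and then argue deterministically that an under-sampled arm's UCB index stays large — matches the paper's strategy, and your index lower bound for an under-sampled arm $k$ is exactly the paper's computation. However, your step 4, which you correctly flag as the delicate one, has a genuine gap that your proposed patch does not close. You want to show that any arm $j$ with a not-too-small count has index at most $\mu_{*,m}$, so that the under-sampled arm $k$ is \emph{forced} to be pulled. This fails for the locally optimal arm $j=k_{*,m}$: on $\Gc_m$ its UCB index is always at least $\mu_{*,m}$ (its true mean plus a nonnegative bonus minus at most the same-order deviation), no matter how many times it has been pulled, so you can never push its index below $\mu_{*,m}$ and the conclusion ``the chosen arm must be $k$'' does not follow. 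A second, related problem is that several arms can be under-sampled simultaneously; each has index exceeding $\mu_{*,m}$, but only one is pulled per round, so ``whenever $N_{k,m}$ is below threshold, arm $k$ is pulled'' is false even ignoring the optimal arm.

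The paper's proof repairs both issues with two devices you dropped. First, it keeps the full margin from the index lower bound: an under-sampled arm $k$ has index at least $\mu_{*,m}+\Delta_{k,m}$ (you computed the $2\Delta_{k,m}$ slack and then discarded half of it). Second, instead of a per-round forcing argument, it uses a block/pigeonhole decomposition: assuming $N_{k,m}(\Lambda)\leq F_{k,m}(\Lambda)$, it splits $[\frac{\Lambda}{2}+1,\frac{3\Lambda}{4}]$ into $F_{k,m}(\Lambda)$ blocks and finds one block $[t_1,t_3]$ in which arm $k$ is \emph{never} pulled. Within that block it shows every other arm $j$ (including $k_{*,m}$) can be pulled at most $N_{\max}=\frac{4(\sqrt{\alpha}+\sqrt{3/2})^2\log(3\Lambda/4)}{\Delta_{k,m}^2}$ times, because after that many \emph{within-block} pulls arm $j$'s index drops to at most $\mu_{*,m}+\frac{\Delta_{k,m}}{2}$, which is strictly below arm $k$'s index $\mu_{*,m}+\Delta_{k,m}$ — note this comparison works for the optimal arm precisely because of the retained margin. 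Counting then gives $(K-1)N_{\max}\geq t_3-t_1+1=\frac{\Lambda}{4F_{k,m}(\Lambda)}$, which contradicts the hypothesis $\frac{\Lambda}{\log^2(\Lambda)}>\frac{4K(\alpha-3/2)^2}{\Delta_{\min,m}^4}$. Your pigeonhole sketch in the "main obstacle" paragraph gestures at a count-based contradiction but applies it to total counts over the window rather than to within-block increments relative to a block where $k$ is idle, and it still cannot handle the optimal arm without the $\Delta_{k,m}$ versus $\Delta_{k,m}/2$ margin. To complete your proof you would need to restructure it along these lines.
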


\begin{proof}
To ease the exposition, the superscript in $N_{k,m}^w(t)$ is omitted in this proof as $N_{k,m}(t)$, but note that this proof discusses the behavior of $\alpha$-UCB without incentives. For horizon $\Lambda$ satisfying $\frac{\Lambda}{\log^2(\Lambda)}> \frac{4K(\alpha-3/2)^2}{\Delta_{\min,m}^4}$, Lemma~\ref{lem:ucb_lower} indicates that event
\begin{equation*}
    \Ec_m:=\{\forall k\in[K],N_{k,m}(\Lambda)\geq F_{k,m}(\Lambda)\}
\end{equation*}
happens with a probability at least $1-\frac{2K}{\Lambda}$, where $F_{k,m}(\Lambda):=\frac{(\sqrt{\alpha}-\sqrt{1.5})^2\log(\frac{\Lambda}{2})}{4\Delta^2_{k,m}}$. To prove this lemma, it suffices to prove that $\Pb[\bar{\Ec}_m]\leq \frac{2K}{\Lambda}$.

With \begin{equation*}
    \Gc_{m}:=\left\{\forall k\in[K],\forall t\in\left[\frac{\Lambda}{2}+1, \frac{3\Lambda}{4}\right], |\hat{\mu}_{k,m}(t-1)-\mu_{k,m}|\leq \sqrt{\frac{3\log(t)}{2N_{k,m}(t-1)}}\right\}
\end{equation*} from Lemma~\ref{lem:supp_ucb_confidence}, we have that
\begin{equation*}
    \Pb\left[\Gc_{m}\right]\geq 1-\frac{2K}{\Lambda}.
\end{equation*}
Thus, it suffices to prove that with event $\Gc_{m}$ happening, event $\bar{\Ec}_{m}$ does not happen.

We prove it by contradiction. Assume that while event $\Gc_{m}$ happens, the event $\bar{\Ec}_{m}$ also
happens, which means there exists arm $k$ such that $N_{k,m}(\Lambda)\leq F_{k,m}(\Lambda)$. Then, for the interval $[\frac{\Lambda}{2}+1,\frac{3\Lambda}{4}]$, we divide it into $F_{k,m}(\Lambda)$ blocks, and each block has length $\frac{\Lambda}{4F_{k,m}(\Lambda)}$. With the pigeonhole principle, there must exist one block $[t_1,t_3]$, in which arm $k$ is not pulled, i.e., $N_{k,m}(t_3)=N_{k,m}(t_1-1)\leq F_{k,m}(\Lambda)$.

With event $G_{m}$ happening, for arm $k$, it holds that $\forall t\in[t_1,t_3]$, 
\begin{align*}
    &\hat{\mu}_{k,m}(t-1)+\sqrt{\frac{\alpha\log(t)}{N_{k,m}(t-1)}}\\
    =&\hat{\mu}_{k,m}(t-1)+\sqrt{\frac{3\log(t)}{2N_{k,m}(t-1)}}+\left(\sqrt{\alpha}-\sqrt{\frac{3}{2}}\right)\sqrt{\frac{\log(t)}{N_{k,m}(t-1)}}\\
    \geq& \mu_{k,m}+ \left(\sqrt{\alpha}-\sqrt{\frac{3}{2}}\right)\sqrt{\frac{\log(t)}{N_{k,m}(t-1)}}\\
    \geq& \mu_{k,m}+ \left(\sqrt{\alpha}-\sqrt{\frac{3}{2}}\right)\sqrt{\frac{\log(\frac{\Lambda}{2})}{F_{k,m}(\Lambda)}}\\
     = &\mu_{k,m}+2\Delta_{k,m}\\
    \geq&  \mu_{*,m}+\Delta_{k,m}.
\end{align*}

We then make the following claim that 
\begin{equation*}
    \forall j\in [K]/k, N_{j,m}(t_3)-N_{j,m}(t_1-1)\leq  N_{\max} := \frac{4(\sqrt{\alpha}+\sqrt{3/2})^2\log(\frac{3\Lambda}{4})}{\Delta_{k,m}^2}.
\end{equation*}

If this claim does not hold, then there exists arm $i\in[K]/k$ such that
\begin{equation*}
    N_{i,m}(t_3)-N_{i,m}(t_1-1)> N_{\max,m},
\end{equation*}
which further means there exists $t_2\in [t_1,t_3]$ such that
\begin{equation*}
    N_{i,m}(t_2-1)-N_{i,m}(t_1-1) = N_{\max}
\end{equation*}
and arm $i$ is pulled at time $t_2$.
For this arm $i$, at time $t_2$, with event $\Gc_m$, we have
\begin{align*}
    &\hat{\mu}_{i,m}(t_2-1)+\sqrt{\frac{\alpha\log(t_2)}{N_{i,m}(t_2-1)}}\\
    \leq& \mu_{i,m}+\sqrt{\frac{3\log(t_2)}{2N_{i,m}(t_2-1)}}+\sqrt{\frac{\alpha\log(t_2)}{N_{i,m}(t_2-1)}}\\
    \leq& \mu_{*,m}+\left(\sqrt{\alpha}+\sqrt{\frac{3}{2}}\right)\sqrt{\frac{\log(t_2)}{N_{i,m}(t_2-1)}}\\
    \leq& \mu_{*,m}+\left(\sqrt{\alpha}+\sqrt{\frac{3}{2}}\right)\sqrt{\frac{\log(\frac{3\Lambda}{4})}{N_{\max}}}\\
     =&\mu_{*,m}+\frac{\Delta_{k,m}}{2}.
\end{align*}

With the property proved above for arm $k$, i.e.,
\begin{equation*}
    \hat{\mu}_{k,m}(t_2-1)+\sqrt{\frac{\alpha\log(t_2)}{N_{k,m}(t_2-1)}}\geq \mu_{*,m}+\Delta_{k,m},
\end{equation*}
we can observe that arm $i$ cannot be pulled at time $t_2$, which leads to a contradiction and thus proves the claim.

Since arm $k$ is not pulled in $[t_1,t_3]$, other arms must be pulled sufficiently. Using the above claim, it must hold that
\begin{align*}
    &\sum_{j\in[K]/k}N_{j,m}(t_3)-N_{j,m}(t_1-1) = t_3-t_1+1\\
    \Rightarrow& (K-1)N_{\max}\geq t_3-t_1+1 = \frac{\Lambda}{4F_{k,m}(\Lambda)}\\
    \Rightarrow& N_{\max}\geq \frac{\Lambda}{4KF_{k,m}(\Lambda)}\\
    \Rightarrow &\frac{4(\sqrt{\alpha}+\sqrt{3/2})^2\log(\frac{3\Lambda}{4})}{\Delta_{k,m}^2} =  N_{\max}\geq \frac{\Lambda}{4KF_k(\Lambda)} = \frac{\Lambda}{4K}\frac{4\Delta_{k,m}^2}{(\sqrt{\alpha}-\sqrt{3/2})^2\log(\frac{\Lambda}{2})}\\
    \Rightarrow& \frac{\Lambda}{\log^2(\Lambda)}\leq \frac{4K(\alpha-3/2)^2}{\Delta_{k,m}^4}\leq \frac{4K(\alpha-3/2)^2}{\Delta_{\min,m}^4},
\end{align*}
which contradicts with the requirement for $\Lambda$ in Lemma~\ref{lem:ucb_lower}. This concludes the proof.
\end{proof}

\section{Proof of Theorem~\ref{thm:expected_incentive}}
\begin{theorem}[Restatement of Theorem~\ref{thm:expected_incentive}]
It is the best interest for every agent to always accept the incentivized explorations under the ``Take-or-Ban'' protocol. Moreover, if the agents' local strategy is consistent without incentives and the horizon $T$ is sufficiently large, the OTI algorithm satisfies that $\Pb[\hat{k}_*(T) = k_*]\geq 1-\delta$, and the expected cumulative incentives are bounded as
    \begin{equation*}
        \Eb[C(T)] = O\bigg( \sum_{k\in [K]}\sum_{m\in[M]}\bigg[\frac{\log(\frac{KT}{\delta})}{M\Delta_k^2}+\frac{\log\log(\frac{KT}{\delta})}{\Delta_k^2}- \min\bigg\{\frac{T}{2}, \frac{\log(\frac{T}{2})}{\textup{\texttt{KL}}(\mu_{k,m},\mu_{*,m})}\bigg\}\bigg]^+\bigg),
    \end{equation*}
where $x^+ := \max\{x,0\}$.
\end{theorem}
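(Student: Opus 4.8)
The plan is to assemble Theorem~\ref{thm:expected_incentive} from the three proof steps already stated. The incentive-optimality claim is immediate from Lemma~\ref{lem:participate}, so the agents' behavior is pinned down: every agent always follows an incentive, hence a round in which the principal incentivizes contributes exactly one unit of bonus and exactly one ``forced'' pull on the designated arm-agent pair. The correctness claim $\Pb[\hat k_*(T)=k_*]\ge 1-\delta$ follows by working on the event $\Hc$ of Lemma~\ref{lem:confidence} (which has probability $\ge 1-\delta$ for $T$ large): on $\Hc$, Lemma~\ref{lem:good_event} guarantees $k_*$ is never eliminated, and the elimination rule together with the incentivizing rule drives $|S(t)|$ down to one before the horizon ends once $T$ is large enough, so the output is $k_*$. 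The remaining and substantive task is to bound $\Eb[C(T)]$.

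For the cost bound, I would first observe that $C(T)$ equals the number of incentivized rounds in the incentivizing phase, i.e. the number of rounds $t\in[\tfrac T2+1,T]$ with $|S(t)|\ge 2$, since each such round pays exactly one unit. I would then argue, still on $\Hc$, a per-pair ``saturation'' statement: once every agent $m$ has $N_{k,m}(t-1)\ge n_k^\star := \tfrac{16\log(KT/\delta)}{M\Delta_k^2}+\tfrac{64\log\log(KT/\delta)}{\Delta_k^2}$ for a suboptimal arm $k$ (and similarly for $k_*$), arm $k$ leaves $S(t)$ by Lemma~\ref{lem:good_event}, so no further incentives are ever placed on $k$. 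Hence the total number of incentives placed on pair $(k,m)$ is at most $\big(n_k^\star - N_{k,m}(\tfrac T2)\big)^+$ — the required pull count minus the ``free pulls'' already accumulated during the observing phase. Because the observing phase runs with zero incentives, $N_{k,m}(\tfrac T2)$ is exactly the no-incentive pull count $N^w_{k,m}(\tfrac T2)$, and the asymptotic lower bound \eqref{eqn:asy_lower} for consistent strategies gives (for $T$ large) $\Eb[N^w_{k,m}(\tfrac T2)] \gtrsim \min\{\tfrac T2, \tfrac{\log(T/2)}{\texttt{KL}(\mu_{k,m},\mu_{*,m})}\}$ (the $\min$ and the $\tfrac T2$ cap handle the trivial case where the asymptotic bound would exceed the phase length, e.g. $\mu_{k,m}=\mu_{*,m}$). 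Summing $\big(n_k^\star-\Eb[N^w_{k,m}(\tfrac T2)]\big)^+$ over all $k\in[K]$, $m\in[M]$, and using that the arm $k_*$ contributions are of the same form (with $\Delta_{k_*}=\Delta_{\min}$ absorbed into the sum over $k$), yields exactly the claimed expression, since a $\min$ inside a $(\cdot)^+$ against a fixed target can be split as $[\,n_k^\star - \min\{\cdot\}\,]^+$.

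There is one technical subtlety in moving from ``pulls needed'' to ``incentives paid'': during the incentivizing phase the agents may \emph{also} pull arm $k$ on their own (free pulls continue), so the number of incentives on $(k,m)$ is at most the shortfall $n_k^\star - N_{k,m}(\tfrac T2)$, never more — incentivized and spontaneous pulls both count toward $N_{k,m}$, and the principal's selection rule $\bar m(t)=\argmin_m N_{\bar k(t),m}$ ensures she never over-incentivizes a pair that is already saturated. I would spell this out as a short monotonicity/accounting argument: fix $(k,m)$; every incentive on it strictly increments $N_{k,m}$, and incentives stop for arm $k$ altogether once the saturation threshold is met for all agents, so the count of incentives on $(k,m)$ over the whole run is bounded by $(n_k^\star - N_{k,m}(\tfrac T2))^+$ deterministically on $\Hc$. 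Taking expectations (and bounding the contribution off $\Hc$, which has probability $\le\delta$, by the trivial $C(T)\le \tfrac T2$, absorbed into the $O(\cdot)$ for $T$ large, or more carefully handled so it does not dominate) finishes the bound.

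The main obstacle I anticipate is the last-named accounting step done rigorously: ensuring the ``free pulls'' during the observing phase are genuinely subtracted (not double counted against pulls that might instead occur during the incentivizing phase), and handling the case-split where $\min\{\tfrac T2, \log(T/2)/\texttt{KL}\}$ already exceeds $n_k^\star$ so that the $(\cdot)^+$ correctly zeroes out that pair. A secondary nuisance is that \eqref{eqn:asy_lower} is only an asymptotic $\liminf$ statement, so translating it into a usable lower bound on $\Eb[N^w_{k,m}(\tfrac T2)]$ requires ``$T$ sufficiently large'' and a $(1-o(1))$ factor that gets swallowed by the $O(\cdot)$; I would state this explicitly rather than gloss over it.
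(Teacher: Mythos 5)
Your overall route is the same as the paper's: Lemma~\ref{lem:participate} for the incentive claim, the event $\Hc$ of Lemma~\ref{lem:confidence} plus Lemma~\ref{lem:good_event} for correctness, and a per-pair shortfall bound $C_{k,m}(T)\le [n_k^\star - N^w_{k,m}(\frac{T}{2})]^+$ combined with the asymptotic lower bound \eqref{eqn:asy_lower} for the cost. The one step where your accounting, as written, does not go through is the sentence ``once every agent has $N_{k,m}\ge n_k^\star$ \dots\ arm $k$ leaves $S(t)$ by Lemma~\ref{lem:good_event}, so no further incentives are ever placed on $k$.'' Lemma~\ref{lem:good_event} requires the threshold to hold for \emph{both} $N_{k,m}$ and $N_{k_*,m}$, so saturating arm $k$ alone does not eliminate it; a priori the principal could keep paying for pulls of an already-saturated arm $k$ while waiting for $k_*$ to catch up, and then your per-pair bound for $(k,m)$ would be violated. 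The paper closes this via a contradiction that uses the \emph{arm}-selection rule $\bar k(t)=\argmax_{k\in S(t)}CB_k(t-1)$: if all agents are saturated on the incentivized arm $k'$, then $CB_{k'}(t'-1)\le\Delta_{k'}/4$, and since $k'$ maximizes $CB$ over the active set, every active arm --- in particular $k_*$ --- also has $CB\le\Delta_{k'}/4$, which already forces $\hat\mu_{k_*}-CB_{k_*}\ge \hat\mu_{k'}+CB_{k'}$ and hence elimination of $k'$, a contradiction. You invoke only the agent-selection rule $\bar m(t)=\argmin_m N_{\bar k(t),m}(t-1)$, which handles the within-arm part but not this cross-arm dependency; you need to add the argmax-$CB$ step to make the ``Hence'' valid.

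A smaller, shared point: passing from $\Eb\big[(n_k^\star-N^w_{k,m}(\frac{T}{2}))^+\big]$ to $\big(n_k^\star-\Eb[N^w_{k,m}(\frac{T}{2})]\big)^+$ goes the wrong way under Jensen (the map $x\mapsto x^+$ is convex), so this move yields a lower rather than an upper bound on the expectation; the paper makes the identical move, so you are consistent with its proof, but a fully rigorous version would require a high-probability lower bound on $N^w_{k,m}(\frac{T}{2})$ (as is done for UCB in Lemma~\ref{lem:ucb_lower}) rather than the in-expectation statement \eqref{eqn:asy_lower}. Everything else --- the $\min\{\frac{T}{2},\cdot\}$ cap, the treatment of $k_*$, the observation that spontaneous pulls during the incentivizing phase only help, and the ``$T$ sufficiently large'' caveats --- matches the paper's argument.
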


\begin{proof}
First, with Lemma~\ref{lem:participate}, always following the incentives provides higher expected cumulative rewards than not always following. Thus, it is the best interest for every agent to always accept the incentivized explorations under the ``Take-or-Ban'' protocol.

As shown in Lemma~\ref{lem:confidence}, event $\Hc$ happens with probability at least $1-\delta$. When event $\Hc$ happens, the optimal global arm would not be eliminated from the active arm set, which means the best arm identification succeeds as long as all other sub-optimal arms are eliminated. Thus,  it suffices to analyze how many incentives are needed to eliminate all other sub-optimal arms.

Conditioned on event $\Hc$, we make the following claim regarding the cumulative incentives:
\begin{equation*}
    \forall k\in[K], \forall m\in [M], C_{k,m}(T)\leq Z_{k,m}(T):= \left[\frac{16\left(\log(\frac{KT}{\delta})+4M\log\log(\frac{KT}{\delta})\right)}{M\Delta_k^2} - N^w_{k,m}(\frac{T}{2})\right]^+,
\end{equation*}
where $C_{k,m}(T) := \sum_{t=1}^TI_{k,m}(t)$ denotes the cumulative incentives on agent $m$'s arm $k$.

To prove this claim, we first assume that there exists an arm-agent pair, namely, $(k',m')$, such that $C_{k',m'}(T)> Z_{k',m'}(T)$. We assume $k'$ is not the optimal arm $k_*$ here, but the same analysis applies for $k_*$ with minor changes.
Thus, there must exist $t'\in [\frac{T}{2}+1, T]$ such that  $C_{k',m'}(t'-1)=Z_{k',m'}(T)$ while $I_{k',m'}(t')=1$. Equivalently, we have
\begin{equation*}
    N_{k',m'}(t'-1) \geq \frac{16\left(\log(\frac{KT}{\delta})+4M\log\log(\frac{KT}{\delta})\right)}{M\Delta_{k'}^2},
\end{equation*}
and agent $m'$ is incentivized to explore arm $k'$ at time $t'$, i.e., $\bar{k}(t') = k'$ and $\bar{m}(t')=m'$.

However, since $\bar{m}(t') = \argmin_{m\in[M]}N_{\bar{k}(t'),m}(t'-1)$, it holds that
\begin{equation*}
    \forall m\in [M], N_{k',m}(t'-1)\geq \frac{16\left(\log(\frac{KT}{\delta})+4M\log\log(\frac{KT}{\delta})\right)}{M\Delta_{k'}^2},
\end{equation*}
which means $CB_{k'}(t'-1)\leq \frac{\Delta_{k'}}{4}$. Since $\bar{k}(t') = \argmin_{k\in S(t)}CB_k(t'-1)$, it must have that
\begin{equation*}
    \forall k\in S(t-1), CB_{k}(t'-1)\leq \frac{\Delta_{k'}}{4}.
\end{equation*}

Thus, it raises a contradiction because
\begin{align*}
    \hat{\mu}_{k_*}(t'-1)-CB_{k_*}(t'-1)\geq \mu_{k_*}-2CB_{k_*}(t-1)\geq \mu_{k_*}-\frac{\Delta_{k'}}{2};\\
    \hat{\mu}_{k'}(t'-1)+CB_{k}(t'-1)\leq \mu_{k'}+2CB_{k'}(t'-1)\leq \mu_{k'}+\frac{\Delta_{k'}}{2},
\end{align*}
which means that arm $k'$ should have been eliminated and thus cannot be incentivized.

With the above claim proved, the expected cumulative incentives can be bounded as
\begin{align}
    \Eb[C(T)] &= \Eb\left[\sum_{k\in [K]}\sum_{m\in[M]} C_{k,m}(T)\right]\notag\\
    &\leq \Eb\left[\sum_{k\in [K]}\sum_{m\in[M]}\left[ \frac{16\left(\log(\frac{KT}{\delta})+4M\log\log(\frac{KT}{\delta})\right)}{M\Delta_k^2} - N^w_{k,m}(\frac{T}{2})\right]^+\right]\notag\\
    & = \sum_{k\in [K]}\sum_{m\in[M]}\left[ \frac{16\log(\frac{KT}{\delta})}{M\Delta_k^2}+\frac{64\log\log(\frac{KT}{\delta})}{\Delta_k^2} - \Eb\left[N^w_{k,m}(\frac{T}{2})\right]\right]^+.\label{eqn:incentive_upper_claim}
\end{align}

With Eqn.~\eqref{eqn:asy_lower} from \citet{Lai:1985}, if the agents' local strategies are consistent, with horizon $\Gamma$, $\forall k\neq k_{*,m}$, it holds that
\begin{equation*}
    \liminf_{\Gamma\to\infty} \frac{\Eb[N^w_{k,m}(\Gamma)]}{\log(\Gamma)}\geq \frac{1}{\texttt{KL}(\mu_{k,m},\mu_{*,m})},
\end{equation*}
which is also stated in Eqn.~\eqref{eqn:asy_lower}. Thus, there exists $\Gamma_0$ such that $\forall \Gamma>\Gamma_0$, it holds that $\Eb[N^w_{k,m}(\Gamma)]\geq \frac{\log(\Gamma)}{\texttt{KL}(\mu_{k,m},\mu_{*,m})}$. For $k_{*,m}$, since the local strategies are consistent, $\forall \psi>0$, $\Eb[N^w_{k_{*,m},m}(\Gamma)] \geq \Gamma- o(\Gamma^{\psi})$. Thus, it holds that $\forall k\in [K]$, $\forall \psi>0$,
\begin{equation}\label{eqn:expected_freepulls}
    \Eb\left[N^w_{k,m}(\frac{T}{2})\right] = \Omega\left(\min\left\{\frac{T}{2} , \frac{\log(\frac{T}{2})}{\texttt{KL}(\mu_{k,m},\mu_{*,m}) }\right\}\right),
\end{equation}
where the minimal takes care of $\texttt{KL}(\mu_{k,m},\mu_{*,m})=0$ for arm $k_*$. By plugging Eqn.~\eqref{eqn:expected_freepulls} into Eqn.~\eqref{eqn:incentive_upper_claim}, Theorem~\ref{thm:expected_incentive} is proved.
\end{proof}



\section{Proof of Theorem~\ref{thm:ucb_incentive}}\label{supp:proof_end}
\begin{theorem}[Restatement of Theorem~\ref{thm:ucb_incentive}]
While the agents run $\alpha$-UCB algorithms with $\alpha\geq \frac{3}{2}$ and the horizon $T$ is sufficiently large, the OTI algorithm satisfies that $\Pb[\hat{k}_*(T) = k_*]\geq 1-\delta$.
Moreover, it holds that
\begin{equation*}
    \Pb\bigg[C(T) = O\bigg( \sum_{k\in [K]}\sum_{m\in[M]}\bigg[\frac{\log(\frac{KT}{\delta})}{M\Delta_k^2}+\frac{\log\log(\frac{KT}{\delta})}{\Delta_k^2}- \frac{\alpha\log(\frac{T}{2})}{\Delta_{k,m}^2}\bigg]^+\bigg)\bigg]\geq 1-\frac{4MK}{T}.
\end{equation*}
\end{theorem}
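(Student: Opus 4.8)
The plan is to follow the proof of Theorem~\ref{thm:expected_incentive} almost verbatim, replacing only the step that lower-bounds the amount of ``free'' (incentive-free) exploration. Three pieces are independent of the agents' learning rule and carry over unchanged. First, by Lemma~\ref{lem:participate} every rational agent follows incentives, so OTI's selection logic is in force. Second, by Lemma~\ref{lem:confidence} the event $\Hc$ holds with probability at least $1-\delta$ for $T$ large, and on $\Hc$ Lemma~\ref{lem:good_event} shows $k_*$ is never eliminated while any suboptimal $k$ is eliminated once every agent has at least $\tfrac{16\log(KT/\delta)}{M\Delta_k^2}+\tfrac{64\log\log(KT/\delta)}{\Delta_k^2}$ pulls on $k$ and on $k_*$; choosing $T$ large enough that the length-$\tfrac T2$ incentivizing phase can provide these pulls then gives $\Pb[\hat k_*(T)=k_*]\ge 1-\delta$. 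Third, on $\Hc$ the deterministic bound $C_{k,m}(T)\le Z_{k,m}(T):=\big[\tfrac{16(\log(KT/\delta)+4M\log\log(KT/\delta))}{M\Delta_k^2}-N^w_{k,m}(\tfrac T2)\big]^+$ established in the proof of Theorem~\ref{thm:expected_incentive} applies, since that argument used only $\Hc$, the $\bar m(t)=\argmin_m N_{\bar k(t),m}$ and $\bar k(t)=\argmax_{k\in S}CB_k$ rules, and the elimination rule~\eqref{eqn:elimination}.

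What is new is the lower bound on the free pulls $N^w_{k,m}(\tfrac T2)$. Here I would invoke Lemma~\ref{lem:ucb_lower} with $\Lambda=\tfrac T2$: for $T$ large its hypothesis $\tfrac{T/2}{\log^2(T/2)}>\tfrac{4K(\alpha-3/2)^2}{\Delta_{\min,m}^4}$ holds for every $m$, so for each fixed $m$, with probability at least $1-\tfrac{4K}{T}$ one has $N^w_{k,m}(\tfrac T2)\ge \tfrac{(\sqrt\alpha-\sqrt{1.5})^2\log(T/4)}{4\Delta_{k,m}^2}$ simultaneously over all $k$. A union bound over the $M$ agents produces an event $\mathcal{E}$ of probability at least $1-\tfrac{4MK}{T}$ on which this holds for all $(k,m)$. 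On $\Hc\cap\mathcal{E}$, summing $C(T)=\sum_{k,m}C_{k,m}(T)\le\sum_{k,m}Z_{k,m}(T)$ and absorbing the absolute constants, the $\alpha$-dependent factor $(\sqrt\alpha-\sqrt{1.5})^2/4$, and the gap between $\log(T/4)$ and $\log(T/2)$ into the $O(\cdot)$ yields the stated bound.

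I expect the main obstacle to be the bookkeeping of which high-probability event each conclusion rests on, rather than any hard inequality. In particular, the claim $C_{k,m}(T)\le Z_{k,m}(T)$ really does need $\Hc$ (through $k_*\in S(t)$), so a literal union bound gives the cost bound on $\Hc\cap\mathcal{E}$ with probability at least $1-\delta-\tfrac{4MK}{T}$; the $1-\tfrac{4MK}{T}$ in the statement should be read as the extra failure probability incurred beyond the identification failure already accounted for. A more routine subtlety is that $[\,\cdot\,]^+$ does not commute with constant rescaling of the subtracted term, so the match between $\tfrac{\alpha\log(T/2)}{\Delta_{k,m}^2}$ and what Lemma~\ref{lem:ucb_lower} actually delivers is to be understood in the ``$O(\cdot)$ up to constants inside the argument'' sense; and one must check that a single threshold on $T$ simultaneously validates Lemma~\ref{lem:confidence}, the incentivizing-phase length, and Lemma~\ref{lem:ucb_lower}'s hypothesis, which is fine since all three are polylog-in-$T$ conditions.
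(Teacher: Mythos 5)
Your proposal is correct and follows essentially the same route as the paper, whose entire proof of this theorem is to plug Lemma~\ref{lem:ucb_lower} (with $\Lambda=\tfrac{T}{2}$, giving the per-agent failure probability $\tfrac{4K}{T}$ and hence $\tfrac{4MK}{T}$ after a union bound over agents) into the proof of Theorem~\ref{thm:expected_incentive} in place of Eqn.~\eqref{eqn:asy_lower}. The two caveats you flag --- that the cost bound really lives on $\Hc\cap\mathcal{E}$ so the stated probability omits the $\delta$ already charged to identification, and that the $\tfrac{\alpha\log(T/2)}{\Delta_{k,m}^2}$ term only matches Lemma~\ref{lem:ucb_lower}'s $\tfrac{(\sqrt{\alpha}-\sqrt{1.5})^2\log(T/4)}{4\Delta_{k,m}^2}$ up to constants inside the $[\cdot]^+$ --- are accurate observations about looseness the paper itself glosses over.
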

\begin{proof}
Theorem~\ref{thm:ucb_incentive} can be proved by plugging the UCB lower bound in Lemma~\ref{lem:ucb_lower} (instead of Eqn.~\eqref{eqn:asy_lower}) into the above proof of Theorem~\ref{thm:expected_incentive}.
\end{proof}

\section{Experimental Details}\label{supp:exp}
The codes and instructions for the experiments \shir{are publicly available at \url{https://github.com/ShenGroup/Observe_then_Incentivize}}. The experiments are light in computation, and were all performed on a mainstream PC. A few details for the experimental setups are provided in this section. First, if there are more than one arm remaining active at horizon $T$, OTI should output the one with the largest sample mean. This approach takes care of the scenarios with an extremely small (or even zero) global sub-optimality gap. Second, we find that while the $O(\log\log(\frac{1}{\delta}))$ term in the confidence bound in Eqn.~\eqref{eqn:confidence} is required theoretically, it is not very helpful in practice and sometimes even degrades the overall performance. Thus, in the simulation of OTI, the confidence bound is specified as $CB_k(t-1) = \frac{1}{M}\sqrt{(\sum\nolimits_{m\in[M]}\frac{1}{N_{k,m}(t-1)})\log(KT/\delta)}$. It is also interesting for future works to see whether the confidence bound in Eqn.~\eqref{eqn:confidence} can be tightened so that the $O(\log\log(\frac{1}{\delta}))$ term can be removed theoretically. 

\begin{wrapfigure}{R}{0.4\textwidth}	
	\centering
	\includegraphics[width=0.4\textwidth]{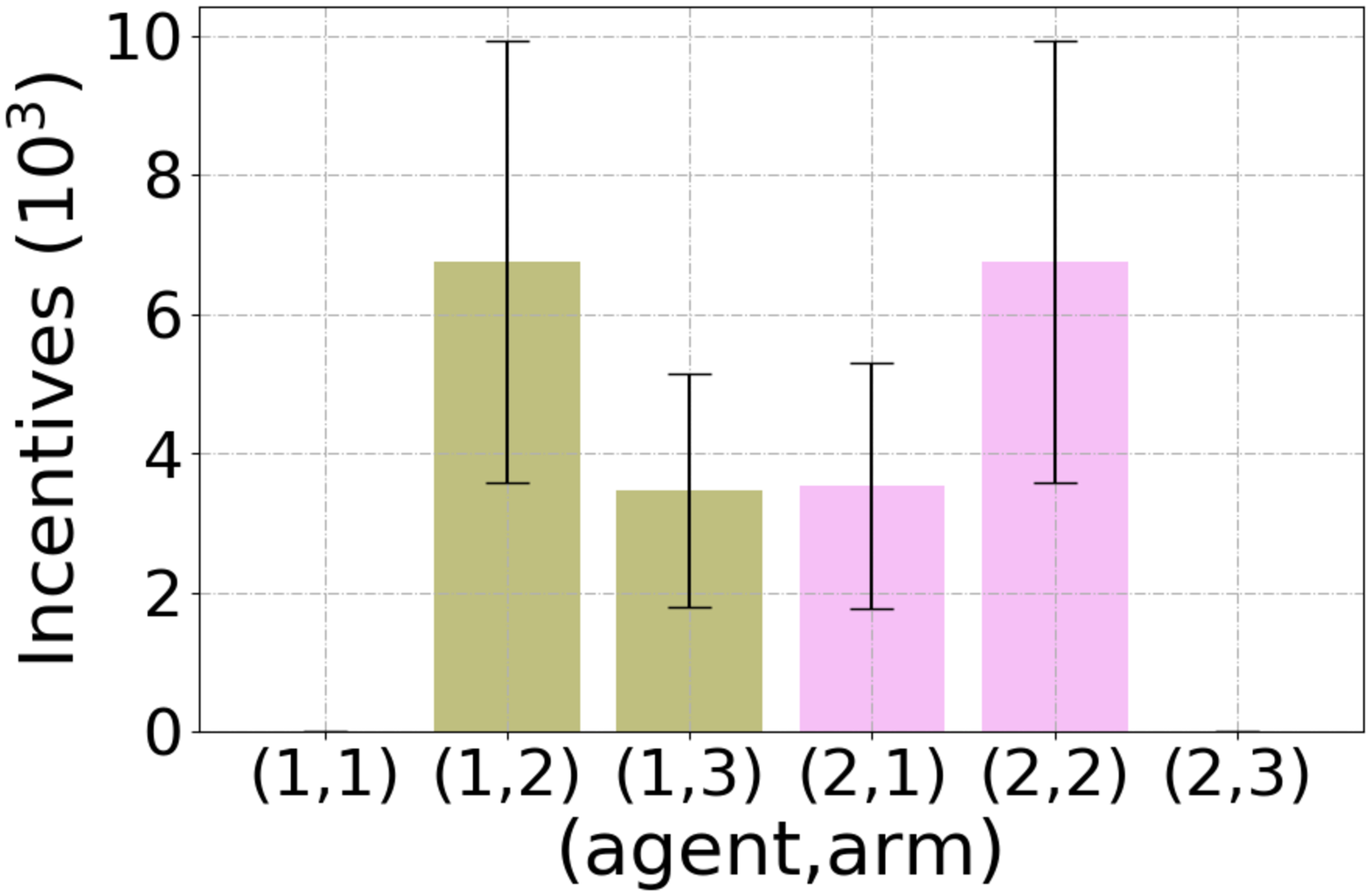}
	\caption{OTI with stochastic agent behaviors.}
	\label{fig:incentive_distribution_random}
\end{wrapfigure}

For all the simulations in Section~\ref{sec:exp}, the rewards are set to follow Bernoulli distributions. Futhermore, in the experiments of Fig.~\ref{fig:incentive_M}, the local game instances are chosen with the following schemes to have meaningful comparisons with different number of involving agents. First, a mean vector $\nu$ with $30$ arms is specified to be linearly distributed in $[0.4, 0.545]$, i.e., with gaps $0.005$. Then, for each arm $k\in[K]$, the mean $\mu_{k,m}$ of each player $m\in [M]$ is set as a sample from a truncated Gaussian distribution between $0$ and $1$ with mean $\nu_k$ and variance $0.01$. After this random sampling process, the local games are chosen. Then, if the corresponding global game has a sub-optimal gap $\Delta_{\min}\in [4.5, 5.5]\times 10^{-3}$, this instance is adopted; otherwise, a new instance is generated. This approach avoids the scenarios that with the number of involving agents increasing, the global game becomes more and more uniform, which makes comparisons with different $M$ unfair. 

\shir{Finally, additional experiments are performed to show that when dealing with relatively simple agents, OTI can be implemented with less restrictive incentive-provision protocols. In other words, the ``Take-or-Ban'' protocol is for theoretical rigor but may not be necessary in applications. Specifically, the agents are set to take the incentives with probability $0.8$ and refuse with probability $0.2$. Also, the principal never bans the agent regardless of their behaviors. Using the same game instance as in Fig.~\ref{fig:no_incentive}, we note that with such stochastic agent behaviors, OTI can still always identify the global optimal arm correctly. The spent incentives are shown in Fig.~\ref{fig:incentive_distribution_random}, which even slightly improve the performance in Fig.~\ref{fig:incentive_distribution}. This result also illustrates the robustness of OTI.}

\end{document}